\RequirePackage{xcolor}
\documentclass[journal,twoside,web]{IEEEtran}

\usepackage{adjustbox}
\usepackage{cite}
\usepackage{amsmath,amssymb,amsfonts}
\usepackage{graphicx}
\usepackage{algorithm}
\usepackage[noend]{algpseudocode}
\usepackage{hyperref}
\hypersetup{colorlinks = true, linkcolor = red, citecolor = blue, urlcolor = blue}
\usepackage{textcomp}
\usepackage{booktabs}
\usepackage{tabularx}
\usepackage{colortbl}
\usepackage{multirow}
\newcolumntype{Y}{>{\centering\arraybackslash}X}

\usepackage{amsthm}
\usepackage[all]{hypcap}                     %
\usepackage[capitalise,noabbrev]{cleveref}   %
\usepackage{autonum}                         %
\crefformat{equation}{(#2#1#3)}              %

\usepackage{yhmath}       %
\usepackage{comment}      %
\usepackage{mathtools}    %
\usepackage{mleftright}   %
\usepackage{stmaryrd}     %
\usepackage{subcaption}
\usepackage{bbm}
\usepackage{balance}
\usepackage{pgfplots}   %
\usepackage{tikz}
\usetikzlibrary{automata,calc}

\pgfplotsset{compat=1.18}   %

\usepackage{subcaption}
\usepackage{enumitem}
\setlist[enumerate]{leftmargin=*}
\setlist[itemize]{leftmargin=*}

\allowdisplaybreaks   %

\newtheorem{theorem}{Theorem}

\newtheorem{lemma}[theorem]{Lemma}
\newtheorem{proposition}[theorem]{Proposition}

\AddToHook{env/corollary/begin}{\crefalias{theorem}{corollary}}
\AddToHook{env/definition/begin}{\crefalias{theorem}{definition}}
\AddToHook{env/lemma/begin}{\crefalias{theorem}{lemma}}
\AddToHook{env/proposition/begin}{\crefalias{theorem}{proposition}}

\newcommand{\equad}{\mathrel{\phantom{=}}}

\newcommand{\abs}[1]{\left| #1 \right|}

\newcommand{\bkt}[1]{\left[ #1 \right]}
\newcommand{\bktz}[1]{\left\llbracket #1 \right\rrbracket}
\newcommand{\brc}[1]{\left\{ #1 \right\}}
\newcommand{\ceil}[1]{\left\lceil #1 \right\rceil}

\newcommand{\norm}[1]{\left\lVert #1 \right\rVert}
\newcommand{\prn}[1]{\left( #1 \right)}

\newcommand{\mbrc}[1]{\mleft\{ #1 \mright\}}
\newcommand{\mprn}[1]{\mleft( #1 \mright)}

\newcommand{\ibktz}[1]{\llbracket #1 \rrbracket}
\newcommand{\ibrc}[1]{\{ #1 \}}
\newcommand{\inorm}[1]{\lVert #1 \rVert}
\newcommand{\iprn}[1]{( #1 )}
\newcommand{\bigprn}[1]{\bigl( #1 \bigr)}

\newcommand{\biggprn}[1]{\biggl( #1 \biggr)}

\newcommand{\Iv}[1]{\mathbbm{1} \mbrc{#1}}
\newcommand{\iIv}[1]{\mathbbm{1} \ibrc{#1}}

\newcommand{\stackclap}[2]{\stackrel{\mathclap{#1}}{#2}}   %

\renewcommand{\cdots}{\mkern-3mu\mathinner{\cdotp\mkern-3mu\cdotp\mkern-3mu\cdotp}\mkern-3mu}
\renewcommand{\dots}{\mathinner{\ldotp\mkern-3mu\ldotp\mkern-3mu\ldotp}}

\newcommand{\fsmall}[2]{f({#1}; {#2})} %
\newcommand{\p}{p} %
\renewcommand{\d}{d} %
\newcommand{\nonnegativeintegers}{\mathbb{Z}_{\geq 0}}
\newcommand{\positiveintegers}{\mathbb{Z}_{> 0}}
\renewcommand{\vec}[1]{{\boldsymbol{\mathbf{{#1}}}}} %
\newcommand{\define}{\triangleq}

\newcommand{\ep}{\varepsilon}
\newcommand{\grad}{\vec{\nabla}}
\newcommand{\oneton}[1]{\bkt{#1}} %
\newcommand{\uglyfrac}[2]{{{#1}/{#2}}} %
\newcommand{\uglyfracprn}[2]{{{#1}/{(#2)}}} %
\newcommand{\holderorder}{\ell} %
\newcommand{\holderconst}{{L_h}}
\newcommand{\looseholderconst}{L_h} %
\newcommand{\powerset}[1]{{2^{#1}}}
\newcommand{\deriv}[2]{{#1^{\prn{#2}}}} %
\newcommand{\lipschitzconst}{{L_l}}
\newcommand{\tsp}{^{\mkern-1.5mu\mathsf{T}}}
\newcommand{\patches}{\mathcal{P}}

\begin{document}

\bstctlcite{IEEEexample:BSTcontrol} 

\title{On the Oracle Complexity of Interpolation-Based Gradient Descent}
\author{Dongmin~Lee, William~Lu, and Anuran~Makur
    \thanks{The author ordering is alphabetical. This work was supported in part by the National Science Foundation (NSF) CAREER Award under Grant CCF-2337808.}
    \thanks{Dongmin Lee is with the Department of Computer Science, Purdue University, West Lafayette, IN 47907, USA (e-mail: lee4818@purdue.edu).}
    \thanks{William Lu is with the Department of Computer Science, Purdue University, West Lafayette, IN 47907, USA (e-mail: lu909@purdue.edu).}
    \thanks{Anuran Makur is with the Department of Computer Science and the Elmore Family School of Electrical and Computer Engineering, Purdue University, West Lafayette, IN 47907, USA (e-mail: amakur@purdue.edu).}
}

\maketitle

\begin{abstract}   %
Recent work on first-order optimizers for empirical risk minimization (ERM) has suggested that smoothness of ERM loss functions in the training data, rather than in the optimization parameters, can be leveraged to improve the oracle complexity of gradient descent (GD) methods. In this paper, we propose an inexact gradient method, \emph{piecewise polynomial interpolation-based gradient descent} (PPI-GD), which approximates the full gradient in each iteration by querying the first-order oracle at equidistant points in the data domain to construct polynomial interpolants of the resulting gradient samples over appropriately sized patches of the data domain. We analyze the oracle complexity of PPI-GD for strongly convex and non-convex loss functions when the data space dimension is bounded by a polylogarithmic function of the number of training samples, and find it to outperform several GD variants in key regimes when the loss function is sufficiently smooth. Furthermore, our analysis extends several techniques from the error analysis of bicubic spline interpolants to the setting of $d$-variate tensor product polynomial interpolants which may be of independent interest in interpolation analysis.
\end{abstract}

\begin{IEEEkeywords}
    First-order optimization, inexact gradient descent, oracle complexity, polynomial interpolation.
\end{IEEEkeywords}

\section{Introduction}

First-order optimization algorithms such as gradient descent (GD) and its variants are commonly used within the paradigm of empirical risk minimization (ERM) to optimize loss functions over a finite training dataset. Theoretical work on the performance of such algorithms has centered around the notion of \emph{oracle complexity} introduced by \cite{NemirovskijYudin1983}, defined as the number of gradient queries necessary to guarantee convergence (e.g., to the optimal objective value) up to some absolute error $\ep$. Classical results in this area include upper bounds for GD under assumptions like convexity and strong convexity \cite{Nesterov2004}, upper bounds for stochastic gradient descent (SGD) with constant and decreasing step sizes \cite{Schmidt2014,LacosteJulienSchmidt2012}, and information-theoretic lower bounds \cite{Agarwaletal2009}, among others.

Building upon work on inexact gradient descent methods (e.g., \cite{dAspremont2008,FriedlanderSchmidt2012,DevolderGlineurNesterov2013,DevolderGlineurNesterov2014}), a recent line of research initiated by \cite{JadbabaieMakur2024} aims to improve upon the oracle complexity of classical gradient-based optimizers by exploiting the smoothness of the ERM loss function with respect to the data, a characteristic of many machine learning objectives which was hitherto neglected in previous literature on the subject. Specifically, the authors consider the ERM formulation (cf. \cite{Vapnik1991})
$
    F(\boldsymbol{\theta}) %
    = \frac{1}{n} \sum_{i=1}^n f(\mathbf{x}_i ; \boldsymbol{\theta})
$, %
where $n$ is the number of training samples and the loss function $f$ belongs to an $(\holderorder, \holderconst)$-H\"{o}lder class with respect to the training instance $\mathbf{x}_i$. The authors propose a method dubbed \emph{local polynomial interpolation-based gradient descent} (LPI-GD), which evaluates $\boldsymbol{\nabla}_{\boldsymbol{\theta}} f$ at a grid of virtual data points and performs local polynomial regression to estimate the full training gradient $\boldsymbol{\nabla}_{\boldsymbol{\theta}} F$ on each iteration without incurring $n$ oracle calls. For strongly convex problems with $d = O(\log \log (n))$ data dimensions and $p$ parameter dimensions, the authors establish the oracle complexity of LPI-GD as $\tilde{O}((p / \ep)^{d / (2 \ell)})$, beating the $O(n \log(p / \ep))$ complexity of GD and the $O(p / \ep)$ complexity of SGD when $\ell$ is sufficiently large, $p = O(\mathsf{poly}(n))$, and $\ep = \Theta(\mathsf{poly}(1/n))$. 
However, the assumption that $d = O(\log \log (n))$ limits the theoretical relevance of LPI-GD to low-dimensional datasets, and it is conjectured that this dependence can be improved to $d$ scaling polylogarithmically in $n$.

In this paper, motivated by this conjecture and intrinsic theoretical interest in the application of numerical analysis techniques within the oracle complexity domain, we introduce the \emph{piecewise polynomial interpolation-based gradient descent} (PPI-GD) method. PPI-GD achieves the aforementioned $\tilde{O}((p / \ep)^{d / (2 \ell)})$ oracle complexity for strongly convex ERM problems in the substantially broader regime of data dimensionality $d = O(\log^{0.49}(n))$, hence resolving the conjectured scaling regime of $d$. In addition, we also analyze the oracle complexity of PPI-GD for non-convex loss functions and show that it outperforms GD and SGD in a similar regime. We establish the oracle complexity of our method by generalizing proof techniques initially considered in the context of bicubic splines to the setting of multivariate polynomial interpolants. Next, we enumerate the principal contributions of our work and summarize the relevant literature on oracle complexity and first-order optimizers.

\subsection{Main Contributions}

Our work makes the following contributions:

\begin{enumerate}
    \item We present a new first-order optimization algorithm, PPI-GD (\cref{alg:ppigd}), to solve the ERM optimization problem when the loss function satisfies standard Lipschitz continuity and H\"{o}lder smoothness assumptions in the parameters and data instance, respectively. Unlike LPI-GD, which uses local polynomial regression across the entire data space to compute approximate gradients (which can be difficult to analyze), PPI-GD divides the data space into small chunks and performs polynomial interpolation on each chunk, which is conceptually simpler and more efficient. Although theoretical analysis is the focus of our work, we also discuss potential optimizations and other practical considerations in implementing our algorithm.
    \item We derive the oracle complexity of PPI-GD in the strongly convex setting (\cref{theorem:oracle-complexity}). We remark that PPI-GD improves upon the LPI-GD algorithm proposed in \cite{JadbabaieMakur2024} by achieving an equivalent oracle complexity under weaker assumptions on the order of $d$, as summarized in \cref{tab:oracle_complexity_comparison}. Moreover, in the important regime where $p = O(\mathsf{poly}(n))$ and $\ep = \Theta(\mathsf{poly}(1/n))$, PPI-GD outperforms GD, SGD, and their variants in oracle complexity when the loss function is sufficiently smooth with respect to the data instance (\cref{proposition:p-over-ep-regime}).
    \item Analogously to the above, we establish the oracle complexity of PPI-GD in the non-convex setting (\cref{theorem:oracle-complexity-nonconvex}) and show asymptotic dominance over GD and SGD in the regime of $p = O(\mathsf{poly}(n))$ and $\epsilon = \Theta(\mathsf{poly}(1/n))$ (\cref{proposition:p-over-epsq-regime})\textemdash also see \cref{tab:oracle_complexity_comparison}.
    \item In service of deriving the oracle complexity results above, we make several key observations regarding multivariate polynomial interpolation which have not been covered in the existing literature on numerical analysis. In particular, we establish an error bound for tensor product polynomial interpolants (\cref{theorem:multivariate-interpolation-bound}) by extending the notion of \emph{blended interpolants} from classical analyses of bicubic splines to the general case of arbitrary degree and dimensionality. Our main technical contribution to this end is the recursive decomposition of total interpolation error into a sum of axis-specific iterated error terms in \cref{proposition:multivariate-interpolation-terms}, derived using strong induction on the recursion tree (see \cref{figure:recursion-tree} for an illustrative small case). To leverage this decomposition, we also derive a bound on each iterated error term in \cref{proposition:multivariate-term-bounds}, and show that interchanging the order of partial differentiation and interpolation is immaterial in most cases in \cref{lemma:interchanging-derivative-and-interpolation}.\footnote{We remark that \cite[p. 217--218]{DeBoor1962} has stated in passing that such tensor product generalizations are morally possible in the context of more restricted smoothness classes, but has not analyzed such generalizations in detail.}
\end{enumerate}

\begin{table*}[t]
    \renewcommand{\arraystretch}{1.25}
    \centering
    \caption{Comparison of the oracle complexity of some first-order optimization algorithms, where the loss function satisfies the assumptions outlined in \cref{subsection:assumptions} and $\gamma \in (0, 1/2)$ is an arbitrary constant.}
    \begin{tabular}{llll}
    \toprule
    Algorithm & Required order of $\d$ & Oracle complexity for strongly convex loss & Oracle complexity for non-convex loss \\
    \midrule
    GD & - & $O (n \log (\frac{p}{\ep}))$ \cite{Nesterov2004} & $O (n\frac{p}{\ep^2})$ \cite{Vavasis1993} \\
    SGD & - & $O(\frac{p}{\ep})$ \cite{BottouCurtis2018} & $O((\frac{p}{\ep^2})^2)$ \cite{GhadimiLan2013} \\
    LPI-GD & $O(\log \log(n))$ & $O \bigprn{\exp \iprn{2 \sqrt{\log(n)}} \bigprn{\frac{p}{\ep}}^{d/(2\ell)} \log \bigprn{\frac{p}{\ep}}}$ \cite{JadbabaieMakur2024} & - \\
    \rowcolor{gray!20} PPI-GD & $O(\log^\gamma(n))$ & $O \bigprn{\exp \bigprn{\log^{2 \gamma}(n)} \bigprn{\frac{p}{\ep}}^{d/(2\ell)} \log \bigprn{\frac{p}{\ep}}}$ (\Cref{theorem:oracle-complexity}) & $O \bigl(\exp \bigl(\log^{2 \gamma}(n)\bigr) \bigl(\frac{p}{\ep^2}\bigr)^{1+d/(2\ell)}\bigr)$ (\Cref{theorem:oracle-complexity-nonconvex}) \\
    \bottomrule
    \end{tabular}
    \label{tab:oracle_complexity_comparison}
\end{table*}

\subsection{Related Literature}

There is a wealth of existing literature on first-order optimization, and we provide a survey of the salient developments therein. The notion of oracle complexity introduced in \cite{NemirovskijYudin1983}, defined as the number of gradient calls necessary for a given algorithm to achieve convergence within some absolute error $\varepsilon$, is used as a measure of algorithmic performance agnostic to the objective function under optimization. We refer readers to \cite{Nesterov2004} for further exposition on oracle complexity, and we remark that the convergence rates of GD and SGD for smooth functions in the convex, strongly convex, and non-convex settings are well-known results \cite{Nesterov2004,Vavasis1993,BottouCurtis2018,GhadimiLan2013}. Earlier work implicitly analyzed SGD in the context of stochastic approximation algorithms \cite{RobbinsMonro1951}, and more recent work \cite{Nemirovskietal2009} has analyzed SGD in canonical and robust settings \cite{BottouCurtis2018}. Since then, numerous variations of GD and SGD have been proposed to bolster their performance in a number of scenarios. For example, mini-batch SGD (MBGD) \cite{Dekeletal2012,BottouCurtis2018} improves wall-clock time in parallel computation settings and convergence rate in sparse regression problems \cite{KhiriratFeyzmahdavian2017}, momentum (heavy-ball method) \cite{Polyak1964} and Nesterov acceleration \cite{Nesterov1983} improve convergence rate in, e.g., strongly convex settings, and specialized versions of GD provide performance gains when optimizing low-rank functions \cite{CossonJadbabaie2023a,CossonJadbabaie2023b,JadbabaieMakur2023}. SGD has been augmented by several variance reduction methods originating from Monte Carlo sampling theory \cite[Chapter 9]{KroeseTaimreBotev2011}, most notably stochastic variance reduced gradient (SVRG) \cite{JohnsonZhang2013}, stochastic average gradient (SAG) \cite{LeRouxSchmidtBach2012,SchmidtLeRouxBach2017}, and stochastic dual coordinate ascent (SDCA) \cite{ShalevShwartzZhang2013}. Adaptive learning rates for GD have been proposed, including adaptive gradient (AdaGrad) \cite{DuchiHazanSinger2011} and its generalization to non-convex objectives \cite{ChakrabartiChopra2021}, root mean squared propagation (RMSprop) \cite{TielemanHinton2012}, and adaptive moments (Adam) \cite{KingmaBa2015}. We refer readers to \cite{Bubeck2015,BottouCurtis2018} for a broader overview of the aforementioned methods, and to \cite{Bishop2006,HastieTibshiraniFriedman2009} for an overview of empirical risk minimization in machine learning. Lastly, classical results on oracle complexity also include the lower bounds in \cite{Nesterov2004,Agarwaletal2009,Fangetal2018,Carmonetal2019,ArjevaniCarmon2023}; we focus exclusively on upper bounds in this work.

Of particular relevance to our present work is the literature on \emph{inexact gradient descent} methods, which includes well-known results in the convex \cite{dAspremont2008,DevolderGlineurNesterov2014} and strongly convex \cite{FriedlanderSchmidt2012,DevolderGlineurNesterov2013} settings. More recently, \cite{RamaswamyBhatnagar2017} analyzes the convergence behavior of GD methods with non-vanishing gradient error, \cite{NecoaraNedelcu2013} utilizes inexact GD methods to study the convergence rate of a dual decomposition problem, and \cite{VlaskiSayed2021} investigates how bias in the gradient oracle assists first-order optimizers in escaping saddle points on non-convex objectives. Inexact GD methods have been generalized to incorporate Nesterov acceleration \cite{QuLi2017,QuLi2019,TrimbachNguyen2022}, 
and rates of convergence have been studied for such extensions. Similarly in spirit to our work, \cite{BhavsarPrashanth2022} introduces \emph{biased gradient oracles} to model the optimization of functions estimated with a batch size parameter, and quantifies the convergence rate of a randomized stochastic gradient algorithm in this regime.

Due to the utility of gradient information in inference tasks such as variable selection, a prior line of work on \emph{gradient estimation} aims to simultaneously learn both the model and the gradient of the loss function, using local polynomial fitting \cite{FanGijbels1996}, local polynomial regression \cite{DeBrabanter2013}, and kernel methods \cite{DelecroixRosa1996}, among others. More recent work has utilized gradient estimation based on methods such as nearest neighbors to design zeroth-order (gradient-free) optimizers \cite{WangDu2018,AussetClemencon2021}. Lastly, first-order methods which use gradient estimation include the recently proposed LPI-GD \cite{JadbabaieMakur2024}, as well as techniques which leverage smoothness of loss functions for federated optimization \cite{JadbabaieMakurShah2023}. We remark that work in this vein often assumes that $d$ grows moderately with respect to $n$ (e.g., $d = O(\log n)$), due to the \emph{curse of dimensionality} as described in \cite[Section 7.1]{FanGijbels1996}. This assumption holds for practical applications in domains such as healthcare \cite{ChoiXiao2018} and control systems \cite{NechybaXu1994}, or problems where dimensionality reduction methods such as principal component analysis \cite{Hotelling1933}, Laplacian eigenmaps \cite{BelkinNiyogi2001}, modal decompositions \cite{Huangetal2024,Makur2019}, 
or isometric embedding theorems such as the \emph{Johnson-Lindenstrauss lemma} \cite{DasguptaGupta2003} may be applied before performing ERM.

\subsection{Outline}

Finally, we delineate the structure of our paper. In \cref{section:formal-model-and-setup}, we introduce applicable notation, formally define the ERM optimization problem under consideration, and specify the assumptions we impose in our analysis of this problem. In \cref{subsection:algorithm}, we formally present the PPI-GD algorithm for solving the ERM optimization problem. Auxiliary results concerning polynomial interpolation error are presented in \cref{subsection:interpolation-error-bounds} in preparation for the ensuing discussion on PPI-GD's oracle complexity in \cref{subsection:oracle-complexity}. We defer formal proofs of the polynomial interpolation error bounds and oracle complexity results to \cref{section:proofs-of-interpolation-error-bounds,section:proofs-of-oracle-complexity}, respectively. Lastly, we provide some numerical simulations of PPI-GD in \cref{section:experiments} and suggest directions for future inquiry in \cref{section:conclusion}.

\section{Formal Model and Setup} \label{section:formal-model-and-setup}

\subsection{Notation}

Let $\positiveintegers$ denote the natural numbers starting from $1$ and let $\nonnegativeintegers = \positiveintegers \cup \ibrc{0}$. Let $\ibktz{a, b} = [a, b] \cap \mathbb{Z}$ and $[a] = \ibktz{1, a}$ denote integer intervals. Lowercase and uppercase bold letters denote vectors and matrices, respectively. Uppercase calligraphic letters denote sets unless otherwise stated.   %
Uppercase non-italic letters denote operators. Given a set $\mathcal{S}$, let $\powerset{\mathcal{S}}$ denote its power set. Given a collection of sets $\mathcal{S}_i$ indexed by $i \in [d]$, let $\bigtimes_{i \in [d]} \mathcal{S}_i = \mathcal{S}_1 \times \cdots \times \mathcal{S}_d$ denote their Cartesian product. Let $\mathcal{C}^p(\mathcal{X})$ denote the class of all functions $f: \mathcal{X} \rightarrow \mathbb{R}$ which are $p$ times continuously differentiable (i.e., all $p$th order partial derivatives exist and are continuous on $\mathcal{X}$). In the context of Landau notation, we use $\tilde{O}(\cdot)$ to hide subpolynomial factors, i.e., factors which are asymptotically dominated by \emph{every} function $n^\beta$ for $\beta > 0$. We use $\Iv{\cdot}$ to denote the Iverson bracket, which equals 1 if the input proposition is true and 0 otherwise. Let $\inorm{\cdot}_p$ denote the $\mathcal{L}^p$-norm on vectors or functions for $p \in [1, \infty]$.

The standard multi-index notation is used for conciseness in the multivariate setting. Namely, given a $d$-tuple multi-index $\vec{s} = (s_1, \dots, s_d) \in \nonnegativeintegers^d$, we let $|\vec{s}| = s_1 + \dots + s_d$, $\vec{x}^\vec{s} = x_1^{s_1} \cdots x_d^{s_d}$ for $\vec{x} \in \mathbb{R}^d$, and $\vec{s}! = s_1! \cdots s_d!$. In addition, given a function $f: \mathbb{R}^d \to \mathbb{R}$, we let $\deriv{f}{\vec{s}} = (\partial^{\abs{\vec{s}}} f) / (\partial x_1^{s_1} \cdots \partial x_d^{s_d})$.

Given a domain $\mathcal{X} \subset \mathbb{R}$, a function $h: \mathcal{X} \rightarrow \mathbb{R}$, and $n + 1$ points $\brc{z_i}_{i=1}^{n+1} \subset \mathcal{X}$ in the domain, let $h[z_1, \dots, z_{n+1}]$ denote the \emph{$n$th divided difference} of $h$ at $\brc{z_i}_{i=1}^{n+1}$, given by the recurrence \cite[p. 308]{AscherGreif2011}
\begin{equation}
    \forall i \!\leq\! j, h[z_i, \dots, z_j] \!=\!\! \begin{cases}
        \!h(z_i), & \!\!\!\!\!\text{if $i = j$}, \\
        \!\frac{h[z_{i+1}, \dots, z_j] - h[z_i, \dots, z_{j-1}]}{z_j - z_i}, & \!\!\!\!\!\text{otherwise}.
    \end{cases} \label{eq:divided-difference}
\end{equation}

\subsection{Problem Statement}

Let $\p \in \positiveintegers$ be the parameter space dimension and $\d \in \positiveintegers$ be the data space dimension. Let $\ibrc{\mathbf{x}^{(i)}: i \in [n]} \subset \mathcal{X}$ be a training set of $n$ data samples, where $\mathcal{X}$ is a bounded data space which we take to be $\mathcal{X} = [0, 1]^d$ for simplicity. Note that we let each data point be a single vector. When data points are given as a pair of vectors $(\vec{a}^{(i)},\vec{b}^{(i)})\in\mathbb{R}^{p_a}\times\mathbb{R}^{p_b}$ (e.g., in supervised learning, where $\vec{b}^{(i)}$ would be the label), we take $\vec{x}$ to be the entire pair, i.e., $\vec{x}^{(i)}=(\vec{a}^{(i)},\vec{b}^{(i)})\in\mathbb{R}^{p_a+p_b}$. Let $f: \mathcal{X} \times \mathbb{R}^p \rightarrow \mathbb{R}$ be a loss function. The loss for a data sample $\vec{x}^{(i)}$ and a parameter vector $\vec{\theta} \in \mathbb{R}^p$ is $f(\vec{x}^{(i)}; \vec{\theta})$, and the ERM objective function $F: \mathbb{R}^\p \to \mathbb{R}$ is the empirical average of the loss over the training set:
\begin{equation}
    F(\vec{\theta}) \define \frac{1}{n} \sum_{i=1}^n f \mprn{\vec{x}^{(i)}; \vec{\theta}} \, .
\end{equation}
We assume that the infimum of the objective function, $F^* \define \inf_{\vec{\theta} \in \mathbb{R}^\p} F(\vec{\theta})
$, exists and is finite. In the strongly convex setting, we consider an algorithm $\mathrm{A}$ which finds an $\ep$-approximation of the optimum $F^*$, i.e., $\mathrm{A}$ returns a parameter vector $\vec{\theta}^* \in \mathbb{R}^p$ which satisfies
\begin{equation}
    F(\vec{\theta}^*) - F^* \leq \ep \, , \label{eq:ep-approximate}
\end{equation}
for a given $\ep > 0$.
In the non-convex setting where finding the global minimum can be intractable, we instead require the algorithm $\mathrm{A}$ to use the notion of an $\ep$-stationary point, i.e., it returns a parameter vector $\vec{\theta}^* \in \mathbb{R}^p$ satisfying
\begin{equation}
    \norm{\grad F(\vec{\theta}^*)}_2 \leq \ep\, . \label{eq:ep-stationary}
\end{equation}

To this end, the algorithm is provided with a first-order oracle $\mathcal{O}: \mathcal{X} \times \mathbb{R}^p \to \mathbb{R}$ \cite{NemirovskijYudin1983}, which returns the gradient of $f$ at a given point $\vec{x} \in \mathcal{X}$ in the data space and a given parameter vector $\vec{\theta} \in \mathbb{R}^p$: $\mathcal{O}(\vec{x}, \vec{\theta}) = \vec{\nabla}_{\vec{\theta}} f \mprn{\vec{x}; \vec{\theta}}$.
We remark that computing the full gradient of $F$ requires $n$ oracle calls.

The \emph{first-order oracle complexity} $\Gamma(\mathrm{A})$ of an algorithm $\mathrm{A}$ is the minimum number of oracle calls necessary for $\mathrm{A}$ to find an $\ep$-approximate solution as defined in \cref{eq:ep-approximate} (or an $\ep$-stationary solution as in \cref{eq:ep-stationary}). This notion of algorithmic complexity, first suggested by \cite{NemirovskijYudin1983}, provides an accurate representation of the algorithm's performance when gradient computation is the bottleneck, as is often the case in practice.

\subsection{Assumptions} \label{subsection:assumptions}

Throughout this work, we make the following assumptions on the loss function which are standard in the optimization literature and establish our theoretical oracle complexity results under these assumptions.
\begin{enumerate}
    \item \emph{(Smoothness in parameters)} There exists $\lipschitzconst > 0$ such that for all $\vec{x} \in \mathcal{X}$, the gradient $\vec{\nabla}_{\vec{\theta}} f(\vec{x}; \vec{\theta})$ is $\lipschitzconst$-Lipschitz continuous with respect to $\vec{\theta}$:
    \begin{equation}
    \displayindent0pt
        \forall \vec{\theta}_1, \vec{\theta}_2 \in \mathbb{R}^p,  \norm{\vec{\nabla}_{\vec{\theta}} f(\vec{x}; \vec{\theta}_1)\!-\!\vec{\nabla}_{\vec{\theta}} f(\vec{x}; \vec{\theta}_2)}_2 \!\leq\! \lipschitzconst \norm{\vec{\theta}_1\!-\!\vec{\theta}_2}_2 .
    \end{equation}
    \item \emph{(H{\"o}lder smoothness in data)} There exist $\holderorder \in \positiveintegers$ and $\holderconst \geq 0$ such that for all $\vec{\theta} \in \mathbb{R}^p$ and $i \in [p]$, $g_i(\vec{x}) = \frac{\partial f}{\partial \theta_i}(\vec{x}; \vec{\theta})$ is $\ell$ times differentiable and belongs in the $\mathcal{H}(\holderorder, \holderconst)$-H{\"o}lder class:
    \begin{align}
        &\forall \mathbf{s} \in \nonnegativeintegers^d \text{ such that } \abs{\mathbf{s}} = \holderorder - 1, \enspace \forall \mathbf{x}_1, \mathbf{x}_2 \in \mathcal{X}, \\
        &\qquad |\deriv{g_i}{\vec{s}}(\mathbf{x}_1) - \deriv{g_i}{\vec{s}}(\mathbf{x}_2)| \leq \holderconst \norm{\mathbf{x}_1 - \mathbf{x}_2}_1 \, . \label{eq:holder-class}
    \end{align}
\end{enumerate}
When we analyze PPI-GD in the strongly convex setting, we additionally make the following assumption:
\begin{enumerate}
    \item[3)] \emph{(Strong convexity in parameters)} There exists $\mu > 0$ such that for all $\vec{x} \in \mathcal{X}$, $f(\vec{x}; \vec{\theta})$ is $\mu$-strongly convex with respect to $\vec{\theta}$:
    \begin{multline}
        \forall \vec{\theta}_1, \vec{\theta}_2 \in \mathbb{R}^p, \quad f(\vec{x}; \vec{\theta}_1) \geq f(\vec{x}; \vec{\theta}_2) \\
         + \vec{\nabla}_{\vec{\theta}} f(\vec{x}; \vec{\theta}_2)\tsp \prn{\vec{\theta}_1 - \vec{\theta}_2} + \frac{\mu}{2} \norm{\vec{\theta}_1 - \vec{\theta}_2}_2^2 \, .
    \end{multline}
\end{enumerate}

Assumptions 1 and 3 are widely adopted in the analysis of gradient descent methods \cite{Nesterov2004,BottouCurtis2018,JohnsonZhang2013,CossonJadbabaie2023b} (and impossibility results such as \cite{KornowskiShamir2021} suggest that such assumptions are required in order to obtain any theoretical guarantee), and the H\"{o}lder class used in Assumption 2 is standard in the literature to describe a function's higher-order smoothness \cite{Tsybakov2009,SadhanalaWang2017,ChenShah2018,JadbabaieMakurShah2023,JadbabaieMakur2024}. Note that we assume integer $\ell$ for simplicity. Intuitively, H{\"o}lder classes roughly capture how closely a function matches its Taylor polynomials of a certain degree. Our definition of the multivariate H{\"o}lder class is a commonly adopted generalization of the usual univariate definition in \cite{Tsybakov2009} (see \cite{Xu2018,SadhanalaWang2017,AgarwalShah2021}).   %
Note that $\mathcal{H}(\holderorder, \holderconst)$-H{\"o}lder smoothness implies
\begin{equation}
    \forall i \in [p], \enspace \forall \mathbf{s} \in \nonnegativeintegers^d \text{ with } \abs{\mathbf{s}} = \holderorder, \enspace \sup_{\vec{x} \in \mathcal{X}} |\deriv{g_i}{\vec{s}}(\vec{x})| \leq \holderconst \, . \label{eq:holder-to-sobolev}
\end{equation}

\textbf{Example Problems that Satisfy Assumptions:}
Finally, we present some examples of problems known to satisfy the assumptions delineated above, including the implicit restriction on the data dimension $d$ (as mentioned in column 2 of \cref{tab:oracle_complexity_comparison}).
\begin{enumerate}[leftmargin=*]
    \item $\mathcal{L}^2$-regularized linear regression (also known as ridge regression) with regularization parameter $\lambda$: Assumptions 1--3 with $\lipschitzconst = 2d + \lambda - 2$, $\holderorder \geq 2$, $\holderconst = 4 \cdot \Iv{\holderorder = 2}$, and $\mu = \lambda$; see \cite[Appendix B.1]{JadbabaieMakur2024} for the $\ell = 2$ case.
    \item Principal component regression \cite{MardiaKentTaylor2024}: Assumptions 1--2 with $L_l = 2d - 2$, $\holderorder=2$, $\holderconst=4$, and small $d$ assumption.
    \item $\mathcal{L}^2$-regularized logistic regression: Assumptions 1--3 with $\lipschitzconst = (d - 1)/2 + \lambda$, $\holderorder \geq 2$, $\holderconst = 3 (\holderorder!)$, and $\mu = \lambda$; see \cite[Appendix B.2]{JadbabaieMakur2024} for the $\ell = 2$ case.\footnote{The factorial scaling of $L_h$ is a consequence of standard growth rates on \emph{Eulerian numbers}, which appear in the analytical expressions for higher-order partial derivatives for logistic regression.}
    \item Any infinitely differentiable loss function (e.g., a feedforward neural network with smooth activation functions) with bounded parameters and data satisfies Assumptions 1--2 for any given $\ell$.
    \item Any problem that is amenable to dimensionality reduction (e.g., problems that exhibit low-rankness in data): Dimensionality reduction techniques, such as principal component analysis, can be used to satisfy the small $d$ assumption. In particular, low-rankness has been empirically observed in many neural network problems \cite{Gur-AriRoberts2018,SagunEvci2018,Papyan2018,SinghBachmannHofmann2021,LeJegelka2022}, and we note that established links between low-rankness and H{\"o}lder smoothness exist in the literature, e.g., \cite{Chatterjee2015,Xu2018,JadbabaieMakurShah2023}.
\end{enumerate}

\section{Main Results and Discussion}

\subsection{Algorithm} \label{subsection:algorithm}

The standard method to find $F^*$ is through gradient descent (GD), which repeats the following iteration until convergence starting from the initial iterate $\vec{\theta}^{(0)}$:
$\vec{\theta}^{(i)} = \vec{\theta}^{(i-1)} - \alpha_t \nabla_\theta F(\vec{\theta}^{(i-1)})$.
The key idea of PPI-GD is to approximate each $\nabla_\vec{\theta} \fsmall{\vec{x}^{(i)}}{\vec{\theta}^{(i-1)}}$ by interpolating over grid points across the data space. If the number of grid points is less than the number of data points $n$, each iteration of PPI-GD will use fewer oracle calls compared to GD.

\makeatletter   %
\newcommand{\algmargin}{\the\ALG@thistlm}
\makeatother
\newlength{\whilewidth}
\settowidth{\whilewidth}{\algorithmicwhile\ }
\algdef{SE}[parWHILE]{parWhile}{EndparWhile}[1]
  {\parbox[t]{\dimexpr\linewidth-\algmargin}{%
     \hangindent\whilewidth\strut\algorithmicwhile\ #1\ \algorithmicdo\strut}}{\algorithmicend\ \algorithmicwhile}%
\algnewcommand{\parState}[1]{\State%
  \parbox[t]{\dimexpr\linewidth-\algmargin}{\strut #1\strut}}

\begin{algorithm}[t]
    \caption{PPI-GD}
    \label{alg:ppigd}
    \begin{algorithmic}[1]
        \Require Training data $\mathcal{D} = \ibrc{\vec{x}^{(i)}: i \in [n]} \subset \mathcal{X} = [0, 1]^d$
        \Require Oracle access to $\grad_\vec{\theta} f(\vec{x}; \vec{\theta})$ at any given $\vec{x} \in \mathcal{X}$ and $\vec{\theta} \in \mathbb{R}^p$
        \Ensure $\vec{\theta}^*$ satisfying $|F(\vec{\theta}^*) - F^*| \leq \ep$
        \State Initial parameters $\vec{\theta}^{(0)} \gets $ some vector $\in \mathbb{R}^\p$;
        \State Set number of iterations $T \in \positiveintegers$ according to \cref{eq:t-value};
        \State Choose the smallest $m \in \positiveintegers$ that satisfies \cref{eq:m-constraint};
        \State Construct uniform grid $\mathcal{G}_{m}^d$ according to \cref{eq:grid-def};
        \State Divide $\mathcal{X}$ into patches $\patches$ as in \cref{eq:patch-def};
        \parState{Precompute coefficients of $\psi_{\vec{s}, \vec{y}}$ for all $\vec{s} \in [\ceil{m/\ell}]^d$ and $\vec{y} \in \vec{\mathcal{G}}(P_\vec{s})$ according to \cref{eq:psi-definition};}
        \For{$t \in [T]$}
            \parState{Make $m^d$ oracle calls to get $\ibrc{\grad_\vec{\theta} f(\vec{u}; \vec{\theta}^{(t-1)}): \vec{u} \in \mathcal{G}_m^d}$;}
            \ForAll{$\vec{x}^{(i)} \in \mathcal{D}$}
                \State Find the patch $P_\vec{s}$ where $\vec{x}^{(i)}$ belongs;
                \parState{Compute the estimate $\widehat{\grad_\vec{\theta}} f (\vec{x}^{(i)}; \vec{\theta}^{(t-1)}) =  (\rho_{\vec{s},1} (\vec{x}^{(i)}),\dots,\rho_{\vec{s},p} (\vec{x}^{(i)}))$ according to \cref{eq:rho-construction} using the queried $\ibrc{\grad_\vec{\theta} f (\vec{u}; \vec{\theta}^{(t-1)}): \vec{u} \in \mathcal{G}_m^d}$ and precomputed $\psi_{\vec{s}, \vec{y}}$;}
            \EndFor
            \State $\vec{\theta}^{(t)} \gets \vec{\theta}^{(t-1)} - \frac{1}{L_l n} \sum_{i=1}^n \widehat{\grad_\vec{\theta}} f(\vec{x}^{(i)}; \vec{\theta}^{(t-1)})$;
        \EndFor
        \State \Return $\vec{\theta}^{(T)}$;
    \end{algorithmic}
\end{algorithm}

First, let the uniform grid $\mathcal{G}_m^d \subset [0, 1]^d$ with $m$ points along each axis be defined as follows:
\begin{equation}
    \mathcal{G}_m^d \define \brc{\frac{1}{m} \vec{u}: \vec{u} \in \bktz{0, m - 1}^d} \, , \label{eq:grid-def}
\end{equation}
with $\mathcal{G}_m = \mathcal{G}_m^1$. At each iteration, $m^d$ oracle calls are made to query the gradient of $f$ at each grid point in $\mathcal{G}_m^d$.

PPI-GD divides the data space $[0, 1]^d$ into hypercube patches $\patches$ which cover the space completely but may overlap. Each patch will contain the same number of grid points $(\ell+1)^d$, but one grid point might belong to more than one patch. The number of grid points in each patch is chosen so that the polynomial interpolant of order $\ell$ is well defined.

Let $\mathcal{I}$ be a set of intervals as follows:
\begin{align}
    \mathcal{I} &\define \brc{\frac{\ell}{m} [u-1, u]: u \in \oneton{\ceil{\frac{m}{\ell}}-1}} \cup \brc{\bkt{1-\frac{\ell}{m}, 1 }} \\
    &= \brc{I_1, \dots, I_{\ceil{\uglyfrac{m}{\ell}}}}
\end{align}
where each $I_i$ are the intervals in $\mathcal{I}$ indexed in order of increasing infimum. Then, the set of patches $\patches$ can be defined as the Cartesian power
\begin{equation}
    \patches \define \mathcal{I}^d = \biggl\{P_\vec{s} = \bigtimes_{i \in \oneton{d}} I_{s_i}: \vec{s} \in \bkt{\ceil{\frac{m}{\ell}}}^d\biggr\} \, . \label{eq:patch-def}
\end{equation}
Let $\mathcal{G}(I_i) = \mathcal{G}_m \cap I_i$ be the grid points belonging in each interval $I_i$ and $\vec{\mathcal{G}}(P_\vec{s}) = \mathcal{G}_m^d \cap P_\vec{s}$ the grid points in each patch $P_\vec{s}$. Note that $\abs{\mathcal{G}(I_i)} = \ell + 1$ for all $I_i \in \mathcal{I}$ and $\abs{\vec{\mathcal{G}}(P_\vec{s})} = (\ell+1)^d$ for all $P_\vec{s} \in \patches$. In addition,
\begin{equation}
    \sup_{x \in I_i} \Biggl(\prod_{y \in \mathcal{G}(I_i)} (x-y)\Biggr) \leq \prod_{j=1}^{\ell} \frac{j}{m} = \frac{\ell!}{m^\ell} \, .
\end{equation}

For each patch $P_\vec{s}$ and $i \in [p]$, there exists a unique polynomial $\rho_{\vec{s}, i}: \mathbb{R}^d \to \mathbb{R}$ of degree $\ell$ along each dimension that interpolates $\frac{\partial f}{\partial \theta_i}(\cdot; \vec{\theta})$ at each grid point in $P_\vec{s}$:

\begin{equation}
    \exists \rho_{\vec{s},i} (\vec{x}) = \!\! \sum_{\vec{t} \in \bktz{0, \ell}^d}\!\! a_\vec{t} \vec{x}^\vec{t}, \enspace \forall \vec{x} \in \vec{\mathcal{G}}(P_\vec{s}), \enspace \rho_{\vec{s},i} (\vec{x}) = \frac{\partial f}{\partial \theta_i}(\vec{x}; \vec{\theta}) \, .
\end{equation}
This interpolant $\rho_{\vec{s},i}$ can be explicitly constructed in terms of the tensor product Lagrange basis:
\begin{equation}
    \rho_{\vec{s},i} (\vec{x}) = \sum_{\vec{y} \in \vec{\mathcal{G}}(P_\vec{s})} \frac{\partial f}{\partial \theta_i}(\vec{y}; \vec{\theta}) \, \psi_{\vec{s}, \vec{y}}(\vec{x}) \, , \label{eq:rho-construction}
\end{equation}
where
\begin{equation}
   \psi_{\vec{s}, \vec{y}}(\vec{x}) \define \prod_{j=1}^d \Biggl(\prod_{z \in \mathcal{G}(I_{s_j}) \setminus \brc{y_j}} \frac{x_j - z}{y_j - z}\Biggr) \, . \label{eq:psi-definition}
\end{equation}
The construction is particularly simple because the multivariate Lagrange basis $\psi_{\vec{s},\vec{y}}$ satisfies (cf. \cite[Section 10.3]{AscherGreif2011})
\begin{equation}
    \forall \vec{x}, \vec{y} \in \vec{\mathcal{G}}(P_\vec{s}), \enspace \psi_{\vec{s}, \vec{y}}(\vec{x}) = \Iv{\vec{x} = \vec{y}} \, ,
\end{equation}
and so the coefficient of $\psi_{\mathbf{s}, \mathbf{y}}$ in the linear combination in \cref{eq:rho-construction} is merely the value the interpolant should fit at the grid point $\mathbf{y}$. Hence, the construction in \cref{eq:rho-construction} produces a valid interpolant, and a constraint counting argument can prove its uniqueness. We remark that tensor product models have been used in the statistical literature to extend univariate methods (e.g., kernel methods) to higher dimensions \cite[Sections 6.5 and 8.6]{Wasserman2006}. Note that each $\psi_{\vec{s}, \vec{y}}(\vec{x})$ does not depend on $f$ or $\vec{\theta}$, so they can be precomputed once $m$, $d$, and $\ell$ are known.

Thus, the interpolant can be evaluated by computing a linear combination of $\ibrc{\grad_\vec{\theta} f(\vec{u}; \vec{\theta}): \vec{u} \in \mathcal{G}_m^d}$ with precomputed weights. The computed values will approximate the true gradient at each data point $\vec{x}$, and their mean will approximate $\vec{\nabla}_\vec{\theta} F$. At the end of each iteration, this approximation can be used instead of the true gradient to perform the gradient descent update, where $\alpha$ is the (fixed) step size $1/L_l$:
\begin{equation}
    \vec{\theta}^{(t)} = \vec{\theta}^{(t-1)} - \frac{\alpha}{n} \sum_{j=1}^n \bkt{\rho_{\vec{s}, i} \mprn{\vec{x}^{(j)}}: i \in [p]}\tsp \, .
\end{equation}
This process is summarized in \cref{alg:ppigd}. If one or more dimensions are discrete (as is the case for classification tasks), interpolating along such dimensions may be inappropriate. Instead, we generate grid points corresponding to each possible value of the discrete variable(s). Note that this does not apply to our theoretical analysis, which assumes all dimensions to be continuous.

We remark that due to the \emph{curse of dimensionality}, a na{\"i}ve implementation of PPI-GD requires $O(p m^d)$ space to store the gradients of $f$ at the grid points, which becomes intractable as $d$ increases.
One approach to alleviate this is to preprocess the data with \emph{dimensionality reduction} \cite{MaatenPostma2009} methods such as principal component analysis, Laplacian eigenmaps, or modal decompositions before running PPI-GD. The Johnson-Lindenstrauss lemma \cite{DasguptaGupta2003} states that an embedding with $d=O(\log(n))$ must exist, which brings the data dimension much closer to our regime of interest.

In addition, real-life data might be clustered, causing the data to be concentrated in a small subset of the patches. In this case, a simple optimization would be to only evaluate gradients at grid points belonging to dense patches (i.e., patches that contain more data points than grid points), and directly evaluate the gradients of data points in sparser patches. Thus, even when $m^d\gg n$, only at most $n$ (and often far fewer in practice) oracle calls per iteration are necessary.

Another practical concern is that the values of $\alpha$ (the learning rate), $m$, and $\ell$ are defined in \cref{alg:ppigd} in terms of the smoothness of the loss function, which might be unknown. Although the values of $\alpha$, $m$, and $\ell$ in \cref{alg:ppigd} were carefully chosen to facilitate theoretical analysis, the chosen values should not be misconstrued to be optimal.
In practice, \emph{hyperparameter optimization} methods \cite{FeurerHutter2019} such as grid search can be used to empirically find suitable values. For a concrete example, refer to our experiments in \cref{section:experiments}, where we use grid search to find good values of $\ell$ and $m$ for each problem.

Notwithstanding the remarks above, the primary objective of our work is theoretical, and we focus on establishing the oracle complexity of PPI-GD in the following sections. To this end, we first derive error bounds on the aforementioned polynomial interpolation process in \cref{subsection:interpolation-error-bounds}, because our subsequent oracle complexity analysis depends on the accuracy of the interpolated gradients. After establishing the interpolation error bounds, we discuss the oracle complexity of PPI-GD in \cref{subsection:oracle-complexity}.

\subsection{Interpolation Error Bounds} \label{subsection:interpolation-error-bounds}

To begin our discussion of polynomial interpolation error, we introduce some preliminary notions and conventions used throughout this analysis. Let $\mathcal{G} = \ibrc{u_i}_{i=1}^{N+1} \subset [a, b]$ be a grid of evenly spaced abscissae on $[a, b]$, namely, $a = u_1 < \cdots < u_{N+1} = b$ and $u_{i+1} - u_i = \Delta u = (b - a) / N$ for all $i \in [N]$. (Note that $N$ has no relation to the size of the training set in the context of our interpolation error analysis.) For each $i \in [N + 1]$, let $\phi_i: [a, b] \rightarrow \mathbb{R}$ be the univariate polynomial interpolation basis function (cf. \cite[Corollary 2]{DeBoor1962}) satisfying
\begin{equation}
    \forall i' \in \bkt{N + 1}, \enspace \phi_i(u_{i'}) = \begin{cases}
        1 \, , & \text{if $i = i'$} \, , \\
        0 \, , & \text{otherwise} \, ,
    \end{cases}
\end{equation}
which in the Lagrange basis is the degree-$N$ polynomial
\begin{equation}
    \phi_i(x) = \prod_{i' \neq i} \frac{x - u_{i'}}{u_i - u_{i'}} \, .
\end{equation}
Given a function $g: [a, b]^d \rightarrow \mathbb{R}$, let $\mathrm{P} g: [a, b]^d \rightarrow \mathbb{R}$ be the $d$-variate \emph{total interpolant} of $g$ at the lattice of abscissae $\mathcal{G}^d$,
\begin{equation}
    \mathrm{P} g(x_1, \dots, x_d) \!=\!\! \sum_{i_1=1}^{N+1} \cdots \sum_{i_d=1}^{N+1} g(u_{i_1}, \dots, u_{i_d}) \prod_{s=1}^d \phi_{i_s}(x_s) \, , \label{eq:total-interpolant}
\end{equation}
where we elide the dependence on $N$ and $\mathcal{G}$ in the notation $\mathrm{P}$ for simplicity. For any $j \in [d]$, let $\mathrm{P}_j g: [a, b]^d \rightarrow \mathbb{R}$ be the \emph{univariate partial interpolant} of $g$ with respect to $x_j$,
\begin{equation}
    \mathrm{P}_j g(x_1, \dots, x_d) \!=\!\! \sum_{i=1}^{N+1}\! g(x_1, \dots, x_{j-1}, u_i, x_{j+1}, \dots, x_d) \, \phi_{i}(x_j) \, .
\end{equation}
This definition naturally generalizes to \emph{multivariate partial interpolants} by taking multiple subscripts and computing linear combinations over the tensor product of basis interpolants in the subscripted dimensions:\footnote{The $r$-times tensor product of the vector space of degree-$N$ univariate polynomials is a \emph{strict subset} of the vector space of degree-$rN$ $r$-variate polynomials.}
\begin{align}
    &\mathrm{P}_{j_1, \dots, j_r} g(x_1, \dots, x_d) \\
    &\quad = \sum_{i_1 = 1}^{N+1} \cdots \sum_{i_r = 1}^{N+1} g(x_1, \dots, u_{i_s}, \dots, x_d) \prod_{s=1}^r \phi_{i_s}(x_{j_s}) \, . \label{eq:multivariate-partial-interpolant}
\end{align}
When all $d$ dimensions are included in the subscript, we recover the total interpolant $\mathrm{P}_{1, \dots, d} g = \mathrm{P} g$. We mention two remarks regarding the definitions above. Firstly, forming a partial interpolant $\mathrm{P}_{j_1, \dots, j_r} g$ according to \cref{eq:multivariate-partial-interpolant} and evaluating its value at some point $(v_1, \dots, v_d) \in [a, b]^d$ is equivalent to first performing partial application on $g$ to obtain $g(x_{j_1}, \dots, x_{j_r}) = g(v_1, \dots, x_{j_s}, \dots, v_d)$, forming the total interpolant $\mathrm{P} g$ according to \cref{eq:total-interpolant}, and then evaluating the total interpolant at $(v_{j_1}, \dots, v_{j_r})$. Hence, interpolation error bounds for univariate functions in the literature apply \emph{ipso facto} to univariate partial interpolants of multivariate functions. Secondly, evaluating a multivariate interpolant of the form \cref{eq:total-interpolant} at some point $(v_1, \dots, v_d) \in [a, b]^d$ may equivalently be done by successively forming and evaluating univariate partial interpolants (cf. \cite[Section 3.6.2]{PressTeukolsky2007}):
\begin{align}
    &\equad \mathrm{P} g(v_1, \dots, v_d) \\
    &\stackclap{(a)}{=} \sum_{i_1=1}^{N+1} \cdots \sum_{i_d=1}^{N+1} g(u_{i_1}, \dots, u_{i_d}) \prod_{s=1}^d \phi_{i_s}\!(v_s) \\
    &\stackclap{(b)}{=} \sum_{i_d=1}^{N+1}\! \Biggl( \cdots \sum_{i_2=1}^{N+1}\!\Biggl( \sum_{i_1=1}^{N+1}  g(u_{i_1}, \dots, u_{i_d}) \, \phi_{i_1}\!(v_1)\!\! \Biggr)  \phi_{i_2}\!(v_2) \cdots \!\Biggr)  \phi_{i_d}\!(v_d) \\
    &\stackclap{(\dagger_1)}{=} \sum_{i_d=1}^{N+1}\! \Biggl(\cdots \sum_{i_2=1}^{N+1} \mathrm{P}_1 g(v_1, u_{i_2}, \dots, u_{i_d}) \, \phi_{i_2}\!(v_2) \cdots\!\Biggr) \phi_{i_d}\!(v_d) \\
    &\stackclap{(\dagger_2)}{=} \sum_{i_d=1}^{N+1}\! \Biggl( \cdots \mathrm{P}_2 \mathrm{P}_1 g(v_1, v_2, u_{i_3}, \dots, u_{i_d}) \cdots \!\Biggr) \phi_{i_d}\!(v_d) \\
    &=\cdots\stackclap{(\dagger_d)}{=} \mathrm{P}_d \cdots \mathrm{P}_1 g(v_1, \dots, v_d) \, , \label{eq:total-interpolant-characterization}
\end{align}
where (a) holds by \cref{eq:total-interpolant} and (b) holds by reversing the order of the summations and applying the distributive property. Algorithmically, we initialize a $d$-dimensional array of values of $g$ evaluated at the abscissae $\mathcal{G}^d$. In step ($\dagger_1$), we use this array to construct a univariate interpolant along the $x_1$ axis for each $(u_{i_2}, \dots, u_{i_d}) \in \mathcal{G}^{d-1}$, and evaluate all the interpolants at $v_1$ to obtain a $(d - 1)$-dimensional array. In step ($\dagger_2$), we use this new array to construct a univariate interpolant along the $x_2$ axis for each $(u_{i_3}, \dots, u_{i_d}) \in \mathcal{G}^{d-2}$, and evaluate all the interpolants at $v_2$ to obtain a $(d - 2)$-dimensional array. Repeating this process until step ($\dagger_d$) yields the desired result.

For any sequence $j_1, \dots, j_r \in [d]$ of dimensions, let $\mathrm{E}_{j_1, \dots, j_r} g: [a, b]^d \rightarrow \mathbb{R}$ be the \emph{iterated interpolation error} given by
\begin{equation}
    \mathrm{E}_{j_1, \dots, j_r} g \!= \!\begin{cases}
        \mathrm{P}_{j_1} g - g \, , & \!\!\text{if $r = 1$}  , \\
        \mathrm{P}_{j_r} \mathrm{E}_{j_1, \dots, j_{r-1}} g - \mathrm{E}_{j_1, \dots, j_{r-1}} g \, , &\!\! \text{otherwise}  .
    \end{cases} \label{eq:iterated-interpolation-error}
\end{equation}
Note that multiple subscripts of $\mathrm{E}$ in \cref{eq:iterated-interpolation-error} specify \emph{successive} iterations of \emph{uni}variate partial interpolation, each on the $\mathrm{E}$ term from the preceding iteration, while multiple subscripts of $\mathrm{P}$ in \cref{eq:multivariate-partial-interpolant} specify \emph{one} iteration of \emph{multi}variate partial interpolation on $g$.\footnote{Equivalently, multiple subscripts of $\mathrm{P}$ specify successive iterations of univariate partial interpolation, each on the $\mathrm{P}$ term from the preceding iteration, as per the discussion in \cref{eq:total-interpolant-characterization}.}

With these preliminaries established, we present the main result of this subsection, an upper bound on the uniform norm interpolation error for $d$-variate functions.

\begin{theorem}[Polynomial interpolation error bound] \label{theorem:multivariate-interpolation-bound}
    Let $g: [a, b]^d \rightarrow \mathbb{R}$ be a function in the H{\"o}lder class $\mathcal{H}(\holderorder, \holderconst)$. Assume that $\holderorder \geq d$ and $\abs{\mathcal{G}} = N + 1 \geq \holderorder$. Then, the error in the total interpolant of $g$ satisfies
    \begin{equation}
        \norm{\mathrm{P} g - g}_\infty \leq \holderconst \prn{\Delta u}^\holderorder \prn{\frac{2^{N}}{N} + 1}^d \, .
    \end{equation}
\end{theorem}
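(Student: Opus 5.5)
We split the total error into iterated one-dimensional errors, following the blended-interpolant approach for bicubic splines. First, we establish a decomposition of $\mathrm{P} g - g$ into a signed sum of iterated interpolation errors along distinct axes, of the form
\begin{equation}
    \mathrm{P} g - g = \sum_{\emptyset \neq \mathcal{S} \subseteq [d]} \mathrm{E}_{\mathcal{S}} g \, ,
\end{equation}
where $\mathrm{E}_{\mathcal{S}}$ is the iterated error over the axes in $\mathcal{S}$ taken in increasing order. To obtain it we use the identity $\mathrm{P}_j = \mathrm{I} + \mathrm{E}_j$ at the operator level, where $\mathrm{E}_j h = \mathrm{P}_j h - h$ and $\mathrm{I}$ is the identity. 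By \cref{eq:total-interpolant-characterization} we have $\mathrm{P} = \mathrm{P}_d \cdots \mathrm{P}_1 = (\mathrm{I} + \mathrm{E}_d) \cdots (\mathrm{I} + \mathrm{E}_1)$. Expanding the product gives a sum over subsets, and by \cref{eq:iterated-interpolation-error} each product $\mathrm{E}_{j_r} \cdots \mathrm{E}_{j_1}$ is exactly $\mathrm{E}_{j_1,\dots,j_r}$. Subtracting the identity term yields the decomposition, with $2^d - 1$ terms. This is the inductive recursion-tree argument, carried out by strong induction on $d$.

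Second, we bound each term $\mathrm{E}_{j_1,\dots,j_r} g$ with $r = |\mathcal{S}|$. For a single axis we use the univariate remainder formula. Since $N+1 \ge \ell$ nodes are available, we apply it with the divided-difference/Peano form of order $\ell$ rather than $N+1$, since $g$ is only $\ell$ times differentiable. This gives $|\mathrm{E}_j h| \le \|\phi\text{-Lebesgue-type factor}\|\,(\Delta u)^{\ell}\,\|\partial_j^{\ell} h\|_\infty$ up to constants. The Lebesgue constant of equispaced interpolation is at most $2^N/N$; interpolating a degree-$(\ell-1)$ Taylor polynomial exactly and bounding the remainder by $(1 + \Lambda_N)$ times the best local approximation error gives the per-axis factor $(2^N/N + 1)$.

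For the iterated term, we need the derivative budget to be shared across axes. We apply the univariate bound in the last axis $j_r$, and then commute $\partial_{j_r}^{\,k}$ past $\mathrm{E}_{j_1,\dots,j_{r-1}}$; this is the purpose of \cref{lemma:interchanging-derivative-and-interpolation}. Since $\ell \ge d \ge r$, we can allocate orders $k_1 + \cdots + k_r = \ell$ with each $k_s \ge 1$. Each axis then contributes $(\Delta u)^{k_s}$ and a factor at most $(2^N/N + 1)$, producing
\begin{equation}
    \|\mathrm{E}_{\mathcal{S}} g\|_\infty \le L_h (\Delta u)^{\ell} \bigl(\tfrac{2^N}{N}+1\bigr)^{|\mathcal{S}|}\cdot c_{\mathcal{S}} \, ,
\end{equation}
using \cref{eq:holder-to-sobolev} for the mixed derivative of total order $\ell$. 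The allocation must keep $c_{\mathcal{S}}$ small: the factorial/node-product factors (each $\le k!/N^k$ after scaling) must absorb the combinatorics.

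Finally, we sum over subsets: $\sum_{\mathcal{S}} \bigl(\tfrac{2^N}{N}+1\bigr)^{|\mathcal{S}|} c_{\mathcal{S}}$. If the $c_{\mathcal{S}}$ can be made at most $\bigl(\tfrac{2^N}{N}+1\bigr)^{d-|\mathcal{S}|}$-compatible, the total is dominated by $\bigl(\tfrac{2^N}{N}+1\bigr)^d$. More precisely, the aim is a bound like $\sum_r \binom{d}{r} \Lambda^r \le (1+\Lambda)^d$ with $\Lambda = 2^N/N$, which is why the claimed form is $(2^N/N+1)^d$. So the per-term bound should really be targeted as $L_h(\Delta u)^\ell \Lambda^{r}$, with the "$+1$" arising from the binomial sum.

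The main obstacle is the second step. It requires controlling iterated errors using only total order $\ell$ derivatives, which means justifying the commutation of derivatives with partial interpolation in the other axes and ensuring each axis gets at least one derivative order without losing extra constants. We would first try the split $k_s = 1$ for $s < r$ and $k_r = \ell - r + 1$, checking that the per-axis constants combine to at most $\Lambda^r$.
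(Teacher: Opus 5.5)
Your first step is correct and cleaner than the paper's. Writing $\mathrm{P}_j=\mathrm{I}+\mathrm{E}_j$ and expanding $\mathrm{P}=(\mathrm{I}+\mathrm{E}_d)\cdots(\mathrm{I}+\mathrm{E}_1)$ gives the exact identity $\mathrm{P}g-g=\sum_{\mathcal{S}\neq\emptyset}\mathrm{E}_{\mathcal{S}}g$. The paper obtains only the resulting inequality, through a telescoping induction over the blended interpolants $\mathrm{P}_{t,\dots,d}g$. Your use of the commuting lemma also matches the paper, as does the derivative split: one derivative on each of $r-1$ axes and $\ell-r+1$ on the remaining one.

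The gap is the univariate estimate on which all the constants rest. The Lebesgue-constant route you sketch does not provide it:
\begin{itemize}
\item The inequality $|h(x)-\mathrm{P}h(x)|\le(1+\Lambda_N)\max_{[a,b]}|h-p|$ needs $p$ to approximate $h$ on all of $[a,b]$, an interval of length $N\Delta u$. This is because $\mathrm{P}h(x)$ depends on $h$ at every node, so no ``local'' approximation error can enter.
\item With a Taylor polynomial of degree $k-1$ you therefore get $(N\Delta u/2)^k\|h^{(k)}\|_\infty/k!$, not $(\Delta u)^k\|h^{(k)}\|_\infty$.
\item Consequently an axis receiving one derivative costs a factor $(1+\Lambda_N)N/2$. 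An axis receiving $k$ derivatives costs $(1+\Lambda_N)(N/2)^k/k!$, which for $k=N$ grows like $(e/2)^N/\sqrt{N}$.
\end{itemize}
So the per-axis factor $2^N/N+1$ does not follow, and $\Lambda_N\le 2^N/N$ is itself only asserted. Moreover, even granting a factor $1+\Lambda$ per axis with $\Lambda=2^N/N$, the subset sum is $\sum_{r=1}^{d}\binom{d}{r}(1+\Lambda)^r=(2+\Lambda)^d-1$, which exceeds the target. As you noticed, each term must be at most $\Lambda^{|\mathcal{S}|}$ with no additive $1$, but nothing in your plan produces this.

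The missing ingredient is the paper's univariate bound. For every $1\le q\le N+1$,
\[
\|\mathrm{P}h-h\|_\infty\le\alpha_q(\Delta u)^q\|h^{(q)}\|_\infty,\qquad \alpha_q=2^{N+1-q}/N,
\]
a constant with no additive $1$ that halves with each additional derivative. It comes from the Newton remainder $h(x)-\mathrm{P}h(x)=h[u_1,\dots,u_{N+1},x]\prod_i(x-u_i)$ rather than from Lebesgue constants. One lowers the order of the divided difference from $N+1$ to $q$ through the recurrence, losing a factor $2$ per order relative to the point spread, and finishes with the mean value theorem for divided differences. With this bound, your split gives
\[
\alpha_1^{r-1}\alpha_{\ell-r+1}=\frac{2^{N-\ell+r}}{N}\Bigl(\frac{2^N}{N}\Bigr)^{r-1}\le\Bigl(\frac{2^N}{N}\Bigr)^{r}
\]
because $r\le d\le\ell$, and the binomial theorem yields $(2^N/N+1)^d-1$.

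When you carry out the order reduction, be careful: each recurrence step divides by the spread of the current sub-collection of points, which is at most, not at least, $b-a$. A safe way is to expand along nested runs of consecutive nodes around $x$, adding one node at a time. On such runs, divided differences are scaled forward differences, with $|h[u_i,\dots,u_{i+k}]|\le 2^{k-q}(\Delta u)^{q-k}\|h^{(q)}\|_\infty/k!$. This keeps the constant below $\alpha_q$.
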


\cref{theorem:multivariate-interpolation-bound} is proved in \cref{section:proofs-of-interpolation-error-bounds}. We split our argument into three steps, delineated by the following propositions which are also proved in \cref{section:proofs-of-interpolation-error-bounds}. Firstly, we decompose the total interpolation error into a sum of iterated interpolation error terms, with one iterated term for each non-empty subset of the $d$ variables.

\begin{proposition} \label{proposition:multivariate-interpolation-terms}
    For any continuous function $g: [a, b]^d \rightarrow \mathbb{R}$, the error in the total interpolant of $g$ satisfies
    \begin{equation}
        \norm{\mathrm{P} g - g}_\infty \leq \sum_{\substack{\mathcal{K} \in \powerset{[d]}: \\ \mathcal{K} \neq \emptyset}} \norm{\mathrm{E}_\mathcal{K} g}_\infty \, .
    \end{equation}
\end{proposition}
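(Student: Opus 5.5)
The plan is to prove the operator identity
\begin{equation}
    \mathrm{P} g - g = \sum_{\substack{\mathcal{K} \in \powerset{[d]}: \\ \mathcal{K} \neq \emptyset}} \mathrm{E}_{\mathcal{K}} g \, ,
\end{equation}
where for $\mathcal{K} = \ibrc{k_1 < \cdots < k_r}$ we write $\mathrm{E}_\mathcal{K} = \mathrm{E}_{k_1, \dots, k_r}$. The claimed bound then follows from the triangle inequality for $\inorm{\cdot}_\infty$. Continuity of $g$ guarantees that every partial interpolant and iterated error is a well-defined continuous function on $[a,b]^d$, so all sup norms are finite.

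The first step is to express iterated errors through the identity operator $\mathrm{I}$ and the univariate partial interpolants. By \cref{eq:iterated-interpolation-error}, $\mathrm{E}_{j_1, \dots, j_r} = (\mathrm{P}_{j_r} - \mathrm{I}) \cdots (\mathrm{P}_{j_1} - \mathrm{I})$ as a composition of linear operators; this follows by induction on $r$, using only linearity of each $\mathrm{P}_j$. I will also note that the $\mathrm{P}_j$ commute pairwise. Indeed, by \cref{eq:multivariate-partial-interpolant} both $\mathrm{P}_j \mathrm{P}_k$ and $\mathrm{P}_k \mathrm{P}_j$ equal the bivariate partial interpolant $\mathrm{P}_{j,k}$, since the double sum can be reordered as in \cref{eq:total-interpolant-characterization}. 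Consequently $\mathrm{E}_\mathcal{K}$ is independent of the ordering of $\mathcal{K}$.

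The second step is the expansion. Write $\mathrm{P} = \mathrm{P}_d \cdots \mathrm{P}_1$ using \cref{eq:total-interpolant-characterization}, and set $\mathrm{P}_j = \mathrm{I} + (\mathrm{P}_j - \mathrm{I})$. Expanding the product
\begin{equation}
    \prod_{j=d}^{1} \bigl(\mathrm{I} + (\mathrm{P}_j - \mathrm{I})\bigr)
\end{equation}
gives one term for each subset $\mathcal{K} \subseteq [d]$. That term is the ordered composition of the factors $(\mathrm{P}_j - \mathrm{I})$ for $j \in \mathcal{K}$, in decreasing order of $j$ from left to right, which is exactly $\mathrm{E}_\mathcal{K}$. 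The empty subset contributes $\mathrm{I}$, and subtracting $g$ yields the identity. Alternatively, in the spirit of the paper's remark, I could prove this by induction on $d$ via $\mathrm{P}_{1,\dots,d} g - g = \mathrm{P}_d(\mathrm{Q} g) - \mathrm{Q} g + (\mathrm{P}_d g - g) + (\mathrm{Q} g - g)$ with $\mathrm{Q} = \mathrm{P}_{1,\dots,d-1}$. Applying the induction hypothesis to $\mathrm{Q} g - g$ and using linearity of $\mathrm{P}_d - \mathrm{I}$ splits the first difference into $(\mathrm{P}_d - \mathrm{I})\mathrm{E}_\mathcal{K}$ terms plus $\mathrm{P}_d g - g$, recovering all subsets that contain $d$.

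The main obstacle is bookkeeping rather than analysis. One must confirm that each iterated error in the expansion appears with the index order matching the definition, which the ordered expansion handles without appealing to commutativity. One must also check that the univariate interpolants, applied to functions that are only continuous, still compose linearly; this is immediate from their finite-sum definitions.
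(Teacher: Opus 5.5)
Your proof is correct, and it takes a cleaner route than the paper's.

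\textbf{The paper's route.} It never states an identity. It inducts downward on $t$ and telescopes $\mathrm{P}_{t,\dots,d} g - g$ into the blended differences $\mathrm{P}_{r+1,\dots,d}\mathrm{E}_r g$ for $r \in \ibktz{t,d}$. It takes sup norms right away and splits each term as $\inorm{\mathrm{E}_r g}_\infty + \inorm{\mathrm{P}_{r+1,\dots,d}\mathrm{E}_r g - \mathrm{E}_r g}_\infty$. It then applies the inductive hypothesis to the continuous function $\mathrm{E}_r g$ in place of $g$, and regroups the subsets of $\ibktz{t,d}$ by their smallest element; this is the recursion tree in the figure.

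\textbf{Your route.} You note that $\mathrm{P} = \mathrm{P}_d \cdots \mathrm{P}_1$, and that directly from the recursive definition $\mathrm{E}_{k_1,\dots,k_r} = (\mathrm{P}_{k_r} - \mathrm{I}) \cdots (\mathrm{P}_{k_1} - \mathrm{I})$. Expanding $\prod_{j=d}^{1} (\mathrm{I} + (\mathrm{P}_j - \mathrm{I}))$ by linearity then gives the exact identity $\mathrm{P} g - g = \sum_{\mathcal{K} \neq \emptyset} \mathrm{E}_\mathcal{K} g$. The increasing index order matches the paper's convention (compare its step where $\mathrm{E}_\mathcal{K} \mathrm{E}_r g$ becomes $\mathrm{E}_{\ibrc{r} \cup \mathcal{K}} g$), and a single triangle inequality finishes.

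\textbf{Comparison.} If you read the paper's steps as equalities before taking norms, they produce exactly your identity. So the two arguments lose the same amount and give the same bound. What yours buys is that the $2^d - 1$ terms are visibly just the expansion of a product of $d$ binomials, there is no induction bookkeeping, and continuity is needed only to make the norms finite. The paper's telescoped form keeps the blended interpolants in view, mirroring the Carlson--Hall bicubic argument it generalizes, but it gains nothing extra here.

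\textbf{One small slip.} This is in your alternative inductive route only: the displayed identity counts $\mathrm{P}_d g - g$ twice. The correct split is $\mathrm{P}_{1,\dots,d} g - g = (\mathrm{P}_d - \mathrm{I}) \mathrm{Q} g + (\mathrm{Q} g - g)$. After that, $(\mathrm{P}_d - \mathrm{I}) \mathrm{Q} g = (\mathrm{P}_d g - g) + \sum_{\mathcal{K}} (\mathrm{P}_d - \mathrm{I}) \mathrm{E}_\mathcal{K} g$, exactly as your verbal description says. Your main argument is unaffected.
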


To establish the intuition behind \cref{proposition:multivariate-interpolation-terms}, we consider an example where $g$ is a function of three variables $x_1$, $x_2$, and $x_3$. \cref{figure:recursion-tree} illustrates the decomposition in this case, and the steps labeled with letters below correspond to edges in the figure. Using the triangle inequality, we upper-bound $\norm{\mathrm{P} g - g}_\infty = \norm{\mathrm{P}_{1,2,3} g - g}_\infty$ as a sum of the magnitudes of three partial interpolants of error terms, where each partial interpolant is subscripted with a different suffix of the variables $[x_1, x_2, x_3]$:
\begin{align}
    &\equad \norm{\mathrm{P}_{1,2,3} g - g}_\infty \\
    &\leq \norm{\mathrm{P}_{1,2,3} g - \mathrm{P}_{2,3} g}_\infty + \norm{\mathrm{P}_{2,3} g - \mathrm{P}_3 g}_\infty + \norm{\mathrm{P}_3 g - g}_\infty \\
    &\stackclap{(a)}{=} \norm{\mathrm{P}_{2,3} \mathrm{E}_1 g}_\infty + \norm{\mathrm{P}_3 \mathrm{E}_2 g}_\infty + \norm{\mathrm{E}_3 g}_\infty \, . \label{eq:proof-sketch-step-a}
\end{align}
The partial interpolants $\mathrm{P}_{2,3} g$ and $\mathrm{P}_3 g$ in the triangle inequality are analogous to \emph{blended interpolants} from the bicubic spline literature \cite{CarlsonHall1973}. Next, we split each term in \cref{eq:proof-sketch-step-a} into the sum of an iterated interpolation error of $g$ and another term to be recursively decomposed:
\begin{align}
    \norm{\mathrm{P}_{2,3} \mathrm{E}_1 g}_\infty &\stackclap{(b)}{\leq} \norm{\mathrm{E}_1 g}_\infty + \norm{\mathrm{P}_{2,3} \mathrm{E}_1 g - \mathrm{E}_1 g}_\infty \, , \label{eq:proof-sketch-step-b} \\
    \norm{\mathrm{P}_3 \mathrm{E}_2 g}_\infty &\stackclap{(c)}{\leq} \norm{\mathrm{E}_2 g}_\infty + \norm{\mathrm{P}_3 \mathrm{E}_2 g - \mathrm{E}_2 g}_\infty \, , \\
    \norm{\mathrm{E}_3 g}_\infty &= \norm{\mathrm{E}_3 g}_\infty \, .
\end{align}
Next, we recursively decompose $\norm{\mathrm{P}_{2,3} \mathrm{E}_1 g - \mathrm{E}_1 g}_\infty$. Similarly to \cref{eq:proof-sketch-step-a}, we obtain
\begin{align}
    \norm{\mathrm{P}_{2,3} \mathrm{E}_1 g - \mathrm{E}_1 g}_\infty &\stackclap{(d)}{\leq} \norm{\mathrm{P}_3 \mathrm{E}_{1,2} g}_\infty + \norm{\mathrm{E}_{1,3} g}_\infty \, ,
\end{align}
and similarly to \cref{eq:proof-sketch-step-b}, we obtain
\begin{align}
    \norm{\mathrm{P}_3 \mathrm{E}_{1,2} g}_\infty &\stackclap{(e)}{\leq} \norm{\mathrm{E}_{1,2} g}_\infty + \norm{\mathrm{P}_3 \mathrm{E}_{1,2} g - \mathrm{E}_{1,2} g}_\infty \, , \\
    \norm{\mathrm{E}_{1,3} g}_\infty &= \norm{\mathrm{E}_{1,3} g}_\infty \, .
\end{align}
Lastly, by definition of $\mathrm{E}$, we have
\begin{align}
    \norm{\mathrm{P}_3 \mathrm{E}_2 g - \mathrm{E}_2 g}_\infty &\stackclap{(f)}{=} \norm{\mathrm{E}_{2,3} g}_\infty \, , \\
    \norm{\mathrm{P}_3 \mathrm{E}_{1,2} g - \mathrm{E}_{1,2} g}_\infty &\stackclap{(g)}{=} \norm{\mathrm{E}_{1,2,3} g}_\infty \, .
\end{align}
In a nutshell, by repeatedly applying the logic of steps \cref{eq:proof-sketch-step-a} and \cref{eq:proof-sketch-step-b}, the recursive decomposition above generates an iterated error term $\inorm{\mathrm{E}_\mathcal{K} g}_\infty$ for every non-empty subset of variables $\mathcal{K}$. Hence, this process may be modeled with a recursion tree, and using strong induction over the tree proves \cref{proposition:multivariate-interpolation-terms}. The power of $d$ scaling within the power set $2^{[d]}$ is expected due to the curse of dimensionality.

\begin{figure}
    \centering
    
    \adjustbox{scale=0.7}{\begin{tikzpicture}[auto]
        \node (n0-0) [state, rectangle, rounded corners, blue] at (0.0, 0.0) {$\norm{\mathrm{P}_{1,2,3} g - g}_\infty$};
        \node (n1-0) [state, rectangle, rounded corners, red] at (-3.5, -1.5) {$\norm{\mathrm{P}_{2,3} \mathrm{E}_1 g}_\infty$};
        \node (n1-1) [state, rectangle, rounded corners, red] at (0.0, -1.5) {$\norm{\mathrm{P}_3 \mathrm{E}_2 g}_\infty$};
        \node (n1-2) [state, rectangle, rounded corners] at (3.0, -1.5) {$\norm{\mathrm{E}_3 g}_\infty$};
        \node (n2-0) [state, rectangle, rounded corners] at (-7.0, -3.0) {$\norm{\mathrm{E}_1 g}_\infty$};
        \node (n2-1) [state, rectangle, rounded corners, blue] at (-3.5, -3.0) {$\norm{\mathrm{P}_{2,3} \mathrm{E}_1 g - \mathrm{E}_1 g}_\infty$};
        \node (n2-2) [state, rectangle, rounded corners] at (0.0, -3.0) {$\norm{\mathrm{E}_2 g}_\infty$};
        \node (n2-3) [state, rectangle, rounded corners, blue] at (3.0, -3.0) {$\norm{\mathrm{P}_3 \mathrm{E}_2 g - \mathrm{E}_2 g}_\infty$};
        \node (n3-0) [state, rectangle, rounded corners, red] at (-5.0, -4.5) {$\norm{\mathrm{P}_3 \mathrm{E}_{1,2} g}_\infty$};
        \node (n3-1) [state, rectangle, rounded corners] at (-2.0, -4.5) {$\norm{\mathrm{E}_{1,3} g}_\infty$};
        \node (n3-2) [state, rectangle, rounded corners] at (3.0, -4.5) {$\norm{\mathrm{E}_{2,3} g}_\infty$};
        \node (n4-0) [state, rectangle, rounded corners] at (-6.5, -6.0) {$\norm{\mathrm{E}_{1,2} g}_\infty$};
        \node (n4-1) [state, rectangle, rounded corners, blue] at (-3.5, -6.0) {$\norm{\mathrm{P}_3 \mathrm{E}_{1,2} g - \mathrm{E}_{1,2} g}_\infty$};
        \node (n5-0) [state, rectangle, rounded corners] at (-3.5, -7.5) {$\norm{\mathrm{E}_{1,2,3} g}_\infty$};

        \path [-stealth] (n0-0) edge [blue] node [scale=0.7, above] {(a)} (n1-0);
        \path [-stealth] (n0-0) edge [blue] node [scale=0.7] {(a)} (n1-1);
        \path [-stealth] (n0-0) edge [blue] node [scale=0.7] {(a)} (n1-2);
        \path [-stealth] (n1-0) edge [red] node [scale=0.7, above] {(b)} (n2-0);
        \path [-stealth] (n1-0) edge [red] node [scale=0.7] {(b)} (n2-1);
        \path [-stealth] (n1-1) edge [red] node [scale=0.7] {(c)} (n2-2);
        \path [-stealth] (n1-1) edge [red] node [scale=0.7] {(c)} (n2-3);
        \path [-stealth] (n2-1) edge [blue] node [scale=0.7, left] {(d)} (n3-0);
        \path [-stealth] (n2-1) edge [blue] node [scale=0.7, right] {(d)} (n3-1);
        \path [-stealth] (n2-3) edge [blue] node [scale=0.7] {(f)} (n3-2);
        \path [-stealth] (n3-0) edge [red] node [scale=0.7, left] {(e)} (n4-0);
        \path [-stealth] (n3-0) edge [red] node [scale=0.7, right] {(e)} (n4-1);
        \path [-stealth] (n4-1) edge [blue] node [scale=0.7] {(g)} (n5-0);

        \def\pad{4pt}

        \draw[rounded corners=8pt, dashed, gray]
            let \p0=(n1-0.north east),
                \p1=(n3-1.north east),
                \p3=(n5-0.south),
                \p4=(n2-0.north west),
                \p5=(n1-0.north west),
            in
                (\x0 + \pad, \y0 + \pad)
                -- (\x1 + \pad, \y1 + \pad)
                -- (\x1 + \pad, \y3 - \pad)
                -- (\x4 - \pad, \y3 - \pad)
                -- (\x4 - \pad, \y4 + \pad)
                -- (\x5 - \pad, \y5 + \pad)
                -- cycle
            node[below, align=center] at (\x4 / 2 + \x1 / 2, \y3 - \pad) {\footnotesize $\norm{\mathrm{E}_\mathcal{K} g}_\infty$ terms with $\mathcal{K} \in 2^{\bktz{1,3}}$ and $1 \in \mathcal{K}$};

        \draw[rounded corners=8pt, dashed, gray]
            let \p0=(n1-1.north east),
                \p1=(n1-2.west),
                \p2=(n2-3.north east),
                \p3=(n3-2.south),
                \p4=(n2-2.south west),
                \p5=(n2-2.north west),
                \p6=(n1-1.south west)
            in
                (\x0 + \pad, \y0 + \pad)
                -- (\x1, \y2 + \pad)
                -- (\x2 + \pad, \y2 + \pad)
                -- (\x2 + \pad, \y3 - \pad)
                -- (\x4 - \pad, \y3 - \pad)
                -- (\x4 - \pad, \y5)
                -- (\x6 - \pad, \y6)
                -- (\x6 - \pad, \y0 + \pad)
                -- cycle
            node[below, align=center] at (\x4 / 2 + \x2 / 2, \y3 - \pad) {\footnotesize $\norm{\mathrm{E}_\mathcal{K} g}_\infty$ terms with $\mathcal{K} \in 2^{\bktz{2,3}}$ and $2 \in \mathcal{K}$};
    \end{tikzpicture}}
    \caption{Recursion tree for the three-variable case of \cref{proposition:multivariate-interpolation-terms}, showing how we decompose total interpolation error $\inorm{\mathrm{P} g - g}_\infty = \inorm{\mathrm{P}_{1,2,3} g - g}_\infty$ into a sum of iterated error terms $\inorm{\mathrm{E}_\mathcal{K} g}_\infty$, with one such term for each non-empty subset of variables. Letters along edges correspond to labeled steps from the proof sketch in \cref{subsection:interpolation-error-bounds}. At blue nodes (e.g., \cref{eq:proof-sketch-step-a}), the decomposition process produces partial interpolants of iterated errors. At red nodes (e.g., \cref{eq:proof-sketch-step-b}), the process produces an iterated error term (black child) and another term to be recursively decomposed (blue child). The partial interpolation operator $\mathrm{P}$ in blue nodes is subscripted with a suffix of the variables $[x_1, x_2, x_3]$, and blue nodes with $\mathrm{P}_{\bktz{t, 3}}$ have $3 - t$ red children and $1$ black child.}
    \label{figure:recursion-tree}
\end{figure}

Secondly, we establish an error bound for univariate polynomial interpolation, based on judicious usage of a Newton basis representation of the interpolant followed by application of the mean value theorem for divided differences:

\begin{proposition} \label{proposition:univariate-interpolation-bound}
    Consider any function $h: [a, b] \rightarrow \mathbb{R}$ with $h \in \mathcal{C}^q([a, b])$. Assume $\abs{\mathcal{G}} = N + 1 \geq q$. Then, the error in the (degree-$N$) univariate interpolant of $h$ satisfies
    \begin{align}
        \norm{\mathrm{P} h - h}_\infty \leq \frac{2^{N+1-q}}{N} \prn{\Delta u}^q \norm{h^{(q)}}_\infty \, .
    \end{align}
\end{proposition}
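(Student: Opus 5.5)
The plan is to use the Newton form of the interpolant together with the mean value theorem for divided differences. Fix $x \in [a,b]$. Since $N + 1 \geq q$, we first interpolate $h$ at $q$ nodes $u_{k+1}, \dots, u_{k+q}$ chosen as a consecutive block near $x$, and then add the remaining $N+1-q$ nodes one at a time in Newton fashion. Because interpolation is unique, the degree-$N$ interpolant $\mathrm{P} h$ equals the Newton interpolant built in this node order. Write $w_j(x) = \prod_{i=1}^{j} (x - z_i)$, where $z_1, z_2, \dots$ is the chosen ordering. The standard error formula then gives
\[
h(x) - \mathrm{P}h(x) = h[z_1, \dots, z_{N+1}, x] \, w_{N+1}(x).
\]
Applying the mean value theorem for divided differences directly here would require $h^{(N+1)}$, but we only control $h^{(q)}$. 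So instead we expand through the intermediate degree-$(q-1)$ interpolant.

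Concretely, write $\mathrm{P}h - h = (\mathrm{P}h - \mathrm{P}^{(q)} h) + (\mathrm{P}^{(q)} h - h)$, where $\mathrm{P}^{(q)}$ interpolates at the first $q$ nodes. The second term equals $h[z_1,\dots,z_q,x]\, w_q(x)$, which is a divided difference of order $q$ and hence equals $h^{(q)}(\xi)/q!$. The first term equals $\sum_{j=q+1}^{N+1} h[z_1,\dots,z_j]\, w_{j-1}(x)$. Each higher divided difference $h[z_1,\dots,z_j]$ with $j > q+1$ is rewritten, via the recurrence \cref{eq:divided-difference}, as differences of order-$q$ divided differences divided by node gaps. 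Each order-$q$ divided difference is bounded by $\norm{h^{(q)}}_\infty / q!$. Each recursion step introduces at most a factor of $2$ (two terms) divided by a gap of at least $\Delta u$ times a multiple of the spacing. Iterating $j-1-q$ times gives
\[
|h[z_1,\dots,z_j]| \leq 2^{j-1-q}\, \norm{h^{(q)}}_\infty \big/ \bigl(q!\, c_j (\Delta u)^{j-1-q}\bigr),
\]
where $c_j$ is a product of the gap multiples.

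It remains to pair these estimates with $|w_{j-1}(x)|$, which is at most $(\Delta u)^{j-1} \cdot (\text{product of integer distances})$ for the ordered nodes. The powers of $\Delta u$ then combine to exactly $(\Delta u)^q$. Summing the resulting series over $j$ from $q$ to $N+1$ gives a geometric-type sum dominated by its last term, of order $2^{N+1-q}$. The leftover factorial ratios should be arranged to produce the prefactor $1/N$.

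The main obstacle is this combinatorial bookkeeping. We need the ratio of the node-distance product in $w_{j-1}$ to the gap product $c_j$ (times $q!$) to be at most roughly $1/N$ uniformly in $x$ and $j$, so that the total is at most $\frac{2^{N+1-q}}{N}(\Delta u)^q \norm{h^{(q)}}_\infty$. To handle it, we would first choose the ordering so that nodes are added alternately on either side of $x$, which keeps $|w_{j-1}(x)|$ small. Then we would verify the key inequality $|w_{j-1}(x)|/\bigl(q!\, c_j (\Delta u)^{j-1-q}\bigr) \leq (\Delta u)^q/N$. If this uniform bound is too tight at small $j$, the fallback is to use the weaker bound there and absorb it into the geometric sum, since $\sum_{j} 2^{j-1-q} \leq 2^{N+1-q}$.
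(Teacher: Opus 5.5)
Your route differs from the paper's. The paper uses the single error formula $h(x)-\mathrm{P}h(x)=h[u_1,\dots,u_{N+1},x]\prod_i(x-u_i)$ and bounds the product by $N!(\Delta u)^{N+1}$. It then bounds the $(N+1)$st divided difference by an induction that divides by $b-a$ at every level of the recurrence. That step, (b) in its inductive argument, is reversed. For $z_1,z_{t+1}\in[a,b]$ one has $|z_{t+1}-z_1|\le b-a$, so the claimed estimate $|h[z_1,\dots,z_{t+1}]|\le (2/(b-a))^{t-q}\|h^{(q)}\|_\infty/q!$ fails for clustered nodes; take $h(x)=\epsilon^q\sin(x/\epsilon)$ with $\epsilon\to 0$. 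The spans of the node sets therefore genuinely have to be tracked, and your idea does this. If you order the nodes by distance to $x$, every $\{z_1,\dots,z_j\}$ is a consecutive block, and so is every sub-block in the recursion, with span $(s-1)\Delta u$. This gives $q!\,c_j=(j-1)!$.

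The gap is that you stop exactly at the estimate that produces the constant, and the version you state does not close. For $x$ in the span of a consecutive block of $j-1\ge 2$ nodes, the standard bound is $|w_{j-1}(x)|\le \frac{(j-2)!}{4}(\Delta u)^{j-1}$. So the $j$th Newton term is at most $\frac{2^{j-1-q}}{4(j-1)}(\Delta u)^q\|h^{(q)}\|_\infty$, and for $q\ge 2$ the remainder $h[z_1,\dots,z_q,x]\,w_q(x)$ is at most $\frac{1}{4q}(\Delta u)^q\|h^{(q)}\|_\infty$. Your uniform per-term inequality with $(\Delta u)^q/N$ is therefore false once $j-1<N/4$. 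Your fallback (a weaker bound at small $j$ plus $\sum_j 2^{j-1-q}\le 2^{N+1-q}$) either leaves no room or yields only about $2^{N+1-q}/(4q)$, which is too weak when $N>4q$.

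What you need is to keep the $1/(j-1)$ weights and use that the geometric weights concentrate at $j\approx N+1$. For example, prove $\frac1q+\sum_{k=0}^{K}\frac{2^k}{q+k}\le \frac{2^{K+3}}{q+K}$ by induction on $K$; the inductive step reduces to $8\le 6(q+K)$. With $K=N-q$ and the factor $\frac14$, this gives exactly $\frac{2^{N+1-q}}{N}(\Delta u)^q\|h^{(q)}\|_\infty$. Three details matter here:
\begin{itemize}
\item The $\frac14$ is essential; with the cruder bound $(j-2)!(\Delta u)^{j-1}$ this bookkeeping overshoots the stated constant.
\item ``Alternating sides'' must be replaced by nearest-first ordering, which still works near $a$ and $b$ and keeps $x$ inside each block for $j-1\ge 2$.
\item The case $q=1$ needs a separate check, since the one-node factors $|x-z_1|\le \Delta u/2$ do not carry the $\frac14$. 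The resulting inequality $1+\frac14\sum_{k=1}^{N-1}\frac{2^k}{k+1}\le \frac{2^N}{N}$ again follows by induction.
\end{itemize}
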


Thirdly, we upper-bound the magnitude of the iterated error terms from \cref{proposition:multivariate-interpolation-terms} by repeatedly applying \cref{proposition:univariate-interpolation-bound} to bound the effect of each iteration:

\begin{proposition} \label{proposition:multivariate-term-bounds}
    Consider any function $g: [a, b]^d \rightarrow \mathbb{R}$ with $g \in \mathcal{C}^\ell([a, b]^d)$. Assume $\ell \geq d$ and $\abs{\mathcal{G}} = N + 1 \geq \ell$. For notational convenience, let $\alpha_q = 2^{N+1-q}/N$ for any $q \in \positiveintegers$. Then, for any distinct $j_1, \dots, j_r \in [d]$,
    \begin{equation}
        \norm{\mathrm{E}_{j_1, \dots, j_r} g}_\infty \!\leq\! \alpha_{\ell-r+1} \, \alpha_1^{r-1}\! \mprn{\Delta u}^\ell\! \norm{\frac{\partial^\ell g}{\partial x_{j_1}^{\ell-r+1} \partial x_{j_2} \cdots\, \partial x_{j_r}}}_\infty \!\!\!\!.
    \end{equation}
\end{proposition}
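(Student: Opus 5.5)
We argue by induction on $r$, applying \cref{proposition:univariate-interpolation-bound} once per iteration of the error operator. The first interpolation step spends $\ell-r+1$ derivatives. Each later step spends one derivative in a new variable.

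For the base case $r=1$, note that $\mathrm{E}_{j_1} g = \mathrm{P}_{j_1} g - g$ is the error of a univariate partial interpolant. By the first remark after \cref{eq:multivariate-partial-interpolant}, for each fixed value of the other coordinates, this is the ordinary univariate interpolation error of the slice $x_{j_1}\mapsto g(\dots,x_{j_1},\dots)$. That slice lies in $\mathcal{C}^\ell$ and $N+1\ge \ell$, so \cref{proposition:univariate-interpolation-bound} with $q=\ell$ gives $\alpha_\ell(\Delta u)^\ell\|\partial^\ell g/\partial x_{j_1}^\ell\|_\infty$. Taking the supremum over the frozen coordinates yields the claim.

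For the inductive step, write $\mathrm{E}_{j_1,\dots,j_r} g = \mathrm{P}_{j_r} h - h$ with $h=\mathrm{E}_{j_1,\dots,j_{r-1}}g$. Apply \cref{proposition:univariate-interpolation-bound} in the variable $x_{j_r}$ with $q=1$ (allowed since $N+1\ge 1$). This gives
\begin{equation}
\|\mathrm{E}_{j_1,\dots,j_r} g\|_\infty \le \alpha_1 \,\Delta u\, \Bigl\|\frac{\partial h}{\partial x_{j_r}}\Bigr\|_\infty .
\end{equation}
The key step is to commute $\partial/\partial x_{j_r}$ with the error operators. Because $j_r\notin\{j_1,\dots,j_{r-1}\}$, each $\mathrm{P}_{j_s}$ with $s<r$ is a finite linear combination $\sum_i g(\dots,u_i,\dots)\phi_i(x_{j_s})$. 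Here the coefficients $\phi_i(x_{j_s})$ do not depend on $x_{j_r}$, and the evaluation only freezes coordinate $j_s$. Differentiation in $x_{j_r}$ therefore passes through, and $\partial_{x_{j_r}}\mathrm{E}_{j_1,\dots,j_{r-1}}g=\mathrm{E}_{j_1,\dots,j_{r-1}}\partial_{x_{j_r}} g$. This is presumably the content of \cref{lemma:interchanging-derivative-and-interpolation}, and distinctness of the indices is exactly what makes it work.

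We then apply the induction hypothesis to $\tilde g=\partial g/\partial x_{j_r}$, but with a reduced smoothness budget $\ell-1$ in place of $\ell$, so the statement must be proved in a slightly more general form. The inductive claim is that for $g\in\mathcal{C}^k$ with $k\ge r$ and $N+1\ge k$,
\begin{equation}
\|\mathrm{E}_{j_1,\dots,j_r}g\|_\infty\le\alpha_{k-r+1}\alpha_1^{r-1}(\Delta u)^k\Bigl\|\frac{\partial^k g}{\partial x_{j_1}^{k-r+1}\partial x_{j_2}\cdots\partial x_{j_r}}\Bigr\|_\infty .
\end{equation}
Applying this to $\tilde g\in\mathcal{C}^{k-1}$ with $r-1$ indices gives the factor $\alpha_{(k-1)-(r-1)+1}\alpha_1^{r-2}(\Delta u)^{k-1}$, together with the derivative $\partial^{k-1}\tilde g/\partial x_{j_1}^{k-r+1}\partial x_{j_2}\cdots\partial x_{j_{r-1}}$. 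Multiplying by $\alpha_1\Delta u$ gives exactly the claimed bound, since mixed partials of a $\mathcal{C}^k$ function commute. The hypotheses hold along the induction: $k-1\ge r-1$, and $N+1\ge k-1$. The original statement is the case $k=\ell$, $r\le d\le \ell$.

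The main obstacle is regularity bookkeeping rather than the estimate itself. First, the derivative-interpolation interchange must be justified, including that $h$ is $\mathcal{C}^1$ in $x_{j_r}$; this follows since $h$ is a finite combination of $g$-slices times polynomials. Second, applying \cref{proposition:univariate-interpolation-bound} to $h$ with $q=1$ requires continuity of $\partial_{x_{j_r}} h$ on the whole cube, which again follows from $g\in\mathcal{C}^k$.
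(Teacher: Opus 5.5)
Your proof is correct and follows the paper's argument: you peel off the outermost error operator with \cref{proposition:univariate-interpolation-bound} at $q=1$, commute the resulting derivative past the remaining distinct-variable interpolations via \cref{lemma:interchanging-derivative-and-interpolation}, and finish with the univariate bound at $q=\ell-r+1$ on the innermost operator. Your induction on $r$ with a general smoothness index $k$ formalizes the paper's ``repeating steps (c) through (f)''; the only difference is that the paper keeps the derivatives outside and commutes them through one $\mathrm{P}$ at a time, whereas you push $\partial/\partial x_{j_r}$ all the way onto $g$ and then invoke the inductive hypothesis.
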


Each application of \cref{proposition:univariate-interpolation-bound} gives rise to one of the $\alpha$ terms above. The proof of \cref{proposition:multivariate-term-bounds} also utilizes the following lemma, which states the equality of interchanging partial differentiation and partial interpolation and is proved in \cref{section:proofs-of-interpolation-error-bounds}.

\begin{lemma} \label{lemma:interchanging-derivative-and-interpolation}
    For any distinct $j_1, \dots, j_r, k \in [d]$ and any function $g: [a, b]^d \rightarrow \mathbb{R}$ differentiable with respect to $x_k$, we have
    \begin{equation}
        \frac{\partial \mathrm{P}_{j_1, \dots, j_r} g}{\partial x_k} = \mathrm{P}_{j_1, \dots, j_r} \mprn{\frac{\partial g}{\partial x_k}} \, .
    \end{equation}
\end{lemma}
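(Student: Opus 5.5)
The plan is to expand the partial interpolant explicitly via \cref{eq:multivariate-partial-interpolant} and differentiate term by term. Fix distinct $j_1, \dots, j_r, k \in [d]$ and a point $(x_1, \dots, x_d) \in [a,b]^d$. By definition,
\begin{equation}
    \mathrm{P}_{j_1, \dots, j_r} g(x_1, \dots, x_d) = \sum_{i_1 = 1}^{N+1} \cdots \sum_{i_r = 1}^{N+1} g\bigl(\vec{z}_{i_1, \dots, i_r}(\vec{x})\bigr) \prod_{s=1}^r \phi_{i_s}(x_{j_s}) \, ,
\end{equation}
where $\vec{z}_{i_1, \dots, i_r}(\vec{x})$ is obtained from $\vec{x}$ by replacing the coordinate $x_{j_s}$ with the abscissa $u_{i_s}$ for each $s \in [r]$ and leaving all other coordinates unchanged. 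The sum is finite, with $(N+1)^r$ terms, so differentiation in $x_k$ passes through it by linearity. No limit interchange is involved.

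Next, I would analyze each summand separately. Because $k \notin \{j_1, \dots, j_r\}$, the weight $\prod_{s=1}^r \phi_{i_s}(x_{j_s})$ does not depend on $x_k$. It therefore acts as a constant under $\partial/\partial x_k$.

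In the factor $g(\vec{z}_{i_1,\dots,i_r}(\vec{x}))$, the $k$th coordinate of $\vec{z}$ is exactly $x_k$, and the other coordinates are fixed (either the frozen $x_i$ or grid values $u_{i_s}$). The map $x_k \mapsto g(\vec{z}_{i_1,\dots,i_r}(\vec{x}))$ is thus a restriction of $g$ to a line parallel to the $x_k$ axis, passing through the point $\vec{z}_{i_1,\dots,i_r}(\vec{x}) \in [a,b]^d$. Since $g$ is differentiable with respect to $x_k$ everywhere on $[a,b]^d$, including at these grid-substituted points, this derivative exists. It equals $\frac{\partial g}{\partial x_k}(\vec{z}_{i_1,\dots,i_r}(\vec{x}))$.

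Summing the summands back, the result
\begin{equation}
    \sum_{i_1, \dots, i_r} \frac{\partial g}{\partial x_k}\bigl(\vec{z}_{i_1,\dots,i_r}(\vec{x})\bigr) \prod_{s=1}^r \phi_{i_s}(x_{j_s})
\end{equation}
is by definition $\mathrm{P}_{j_1,\dots,j_r}\bigl(\frac{\partial g}{\partial x_k}\bigr)(\vec{x})$, which gives the identity.

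The only point that needs care is the distinctness hypothesis $k \notin \{j_1,\dots,j_r\}$. This guarantees both that the basis weights are constant in $x_k$ and that the substituted arguments still vary in the $k$th slot. If $k = j_s$ for some $s$, the $g$-values would be frozen and the $\phi_{i_s}'$ factors would appear instead, so the identity fails. This is presumably why the paper restricts to distinct indices. Apart from this, I expect no real obstacle; the main thing is to write the coordinate-substitution notation cleanly, including at boundary points where one-sided derivatives are used.
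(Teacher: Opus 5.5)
Your proposal is correct and follows the same route as the paper's proof. Expand $\mathrm{P}_{j_1,\dots,j_r} g$ by definition, note that since $k \notin \{j_1,\dots,j_r\}$ the weights $\prod_{s} \phi_{i_s}(x_{j_s})$ are constant in $x_k$ while $x_k$ stays free in each argument of $g$, differentiate the finite sum termwise, and recognize the result as $\mathrm{P}_{j_1,\dots,j_r}(\partial g/\partial x_k)$; your substitution notation $\vec{z}_{i_1,\dots,i_r}(\vec{x})$ simply makes the paper's step (b) more explicit.
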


Finally, combining the intermediate results above and using the H\"{o}lder smoothness of $g$ proves \cref{theorem:multivariate-interpolation-bound}. As previously mentioned, we defer the technical details to \cref{section:proofs-of-interpolation-error-bounds}.

\subsection{Oracle Complexity} \label{subsection:oracle-complexity}

Now that we have analyzed the error bounds on multivariate polynomial interpolation in a general setting, we apply those results to our PPI-GD algorithm. We assume that $\|\vec{\theta}^{(0)}-\vec{\theta}^*\|_\infty$ is bounded, i.e., $\|\vec{\theta}^{(0)}-\vec{\theta}^*\|_2^2 \leq c^2p$ for some constant $c>0$ (cf. \cite[Equation (35)]{JadbabaieMakur2024}). First, we get the following result on the error of our approximate gradient.

\begin{lemma} \label{lemma:gradient-error}
For any constant $\delta > 0$, if the number of grid points $m^d$ is large enough that
\begin{equation}
    m \geq \biggl(\frac{\holderconst}{\delta} \biggl(\frac{2^{\holderorder}}{\holderorder} + 1\biggr)^d\biggr)^\uglyfrac{1}{\holderorder} \label{eq:m-constraint}
\end{equation}
holds, then for all $i \in [p]$ and $\vec{\theta} \in \mathbb{R}^p$,
\begin{equation}
    \sup_{\vec{x} \in \mathcal{X}} \abs{\widehat{\frac{\partial f}{\partial \theta_i}}(\vec{x}; \vec{\theta}) - \frac{\partial f}{\partial \theta_i}(\vec{x}; \vec{\theta})} \leq \delta\, ,
\end{equation}
where the approximate gradient $\bigl( \widehat{\frac{\partial f}{\partial \theta_1}}(\vec{x}; \vec{\theta}), \dots, \widehat{\frac{\partial f}{\partial \theta_p}}(\vec{x}; \vec{\theta}) \bigr) \allowbreak= \widehat{\vec{\nabla}_{\vec{\theta}}} f(\vec{x};\vec{\theta})$ is computed with the interpolating polynomial as defined in \cref{eq:rho-construction}.
\end{lemma}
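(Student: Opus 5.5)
The plan is to reduce the claim to \cref{theorem:multivariate-interpolation-bound}, applied patch by patch. Fix $i \in [p]$ and $\vec{\theta} \in \mathbb{R}^p$, and set $g = g_i = \frac{\partial f}{\partial \theta_i}(\cdot;\vec{\theta})$. For any $\vec{x} \in \mathcal{X}$, \cref{alg:ppigd} picks a patch $P_\vec{s} \ni \vec{x}$ and returns $\rho_{\vec{s},i}(\vec{x})$. It therefore suffices to bound $\sup_{\vec{x} \in P_\vec{s}} \abs{\rho_{\vec{s},i}(\vec{x}) - g(\vec{x})}$ uniformly over $\vec{s} \in [\ceil{m/\ell}]^d$. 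Since the patches cover $\mathcal{X}$, the supremum over $\mathcal{X}$ is bounded by the maximum over patches.

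On a fixed patch $P_\vec{s} = \bigtimes_j I_{s_j}$, each interval $I_{s_j}$ has length $\ell/m$ and contains exactly $\ell+1$ equispaced grid points with spacing $1/m$. The endpoints of $I_{s_j}$ are grid points. This also holds for the final interval $[1-\ell/m, 1]$, provided we read the grid as having a point at $1$, or handle that interval by the shift described below. The tensor-product Lagrange construction \cref{eq:rho-construction}--\cref{eq:psi-definition} is exactly the total interpolant \cref{eq:total-interpolant} on that patch, with $N = \ell$ and $\Delta u = 1/m$. I will write this identification out explicitly: the basis $\psi_{\vec{s},\vec{y}}$ factors as $\prod_j \phi_{i_j}(x_j)$.

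The hypotheses of \cref{theorem:multivariate-interpolation-bound} then need checking. First, $N+1 = \ell+1 \geq \ell$ holds trivially. Second, the restriction of $g$ to the cube lies in $\mathcal{H}(\ell,\holderconst)$ by Assumption 2. Third, the condition $\ell \geq d$ is not guaranteed by the lemma's statement. I expect this to be the main obstacle. My first approach is to argue that this condition is implicit in the regime where the lemma is used. Failing that, I would observe where $\ell \geq d$ enters the theorem: it is needed so that the mixed derivatives in \cref{proposition:multivariate-term-bounds} of order $\ell$, containing at least $r \le d$ distinct variables, make sense. I would then check that the argument goes through with derivatives of total order $\ell$, bounded by \cref{eq:holder-to-sobolev}, whenever $r \le \ell$. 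For subsets with $r > \ell$, I would bound the corresponding terms via first derivatives in each coordinate, using $\alpha_1$ factors, at the cost of a power of $\Delta u$ at least $\ell$, since $\Delta u \le 1$. A further minor point is the last interval $[1-\ell/m,1]$. Its spacing may not align with the grid $\mathcal{G}_m$, which stops at $(m-1)/m$. I would verify that the paper's intended grid has spacing $1/m$ there, or rescale. In either case the spacing is at most $1/m$, which is all the bound needs.

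With the hypotheses in place, the theorem gives
\[
\sup_{\vec{x}\in P_\vec{s}} \abs{\rho_{\vec{s},i}(\vec{x}) - g(\vec{x})} \leq \holderconst\, m^{-\ell} \prn{\frac{2^{\ell}}{\ell}+1}^d .
\]
Rearranging \cref{eq:m-constraint} gives $m^{\ell} \ge \frac{\holderconst}{\delta}\prn{\frac{2^\ell}{\ell}+1}^d$, so the right-hand side is at most $\delta$. The bound is independent of $\vec{s}$, $i$, and $\vec{\theta}$, which finishes the proof.
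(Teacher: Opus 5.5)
Your proposal is correct and is essentially the paper's proof, which invokes \cref{theorem:multivariate-interpolation-bound} on each patch with $N=\ell$ and $\Delta u = 1/m$ and then rearranges \cref{eq:m-constraint}. The two issues you flag are genuine points the paper passes over silently: $\ell\ge d$ is not among the lemma's hypotheses, and $[1-\ell/m,1]$ meets $\mathcal{G}_m$ in only $\ell$ points unless the grid includes $1$. For both, use your first resolution rather than the fallback. That means taking $\ell\ge d$ as implicit, as in every theorem where the lemma is applied, and adding the grid point $1$. Bounding the $|\mathcal{K}|>\ell$ terms through first derivatives in every coordinate would need a mixed derivative of order $|\mathcal{K}|>\ell$, which Assumption 2 does not provide. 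Rescaling the grid to fit would make the spacing $1/(m-1)$, not at most $1/m$.
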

\cref{lemma:gradient-error} is proved in \cref{section:proofs-of-oracle-complexity}. Then, we use the aforementioned result on interpolation error to derive the oracle complexity of our proposed PPI-GD method in the strongly convex setting.

\begin{theorem}[Oracle complexity of PPI-GD in strongly convex setting] \label{theorem:oracle-complexity}
Suppose Assumptions 1 through 3 hold. Given some (small) accuracy $\varepsilon$ satisfying $0<\varepsilon < (L_l-\mu)p/(2L_l \mu)$, let $m$ be the smallest positive integer that satisfies inequality \cref{eq:m-constraint} in \cref{lemma:gradient-error} with $\delta$ given by \cref{eq:delta-value}. Let the step size be $\alpha = 1/\lipschitzconst$. If $\ell \geq d \geq 2$ and $d \leq \log^{-\uglyfrac{1}{2}} (2) \log^\gamma(n)$ for some constant $0 < \gamma < 1/2$,
the first-order oracle complexity of PPI-GD (to obtain an $\varepsilon$-approximate solution) is bounded by
\begin{equation}
    \Gamma(\operatorname{PPI-GD}) \leq C_1 \exp\prn{\log^{2\gamma}(n)} \prn{\frac{p}{\ep}}^{\uglyfrac{d}{2\ell}} \prn{\log\prn{\frac{p}{\ep}} + C_2}
\end{equation}
where the constants 
\begin{equation}
C_1 \triangleq  \frac{(1+\holderconst)(1+c^2L_l \mu)L_l}{\mu(\lipschitzconst-\mu)\log\prn{\frac{L_l}{L_l-\mu}}} , \;
C_2 \triangleq \log\mprn{\!\frac{1+c^2\lipschitzconst \mu}{2\mu}\!} \label{eq:c12def}
\end{equation}
 only depend on $\mu$, $\lipschitzconst$, $\holderconst$, and $c$.
\end{theorem}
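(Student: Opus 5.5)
The plan is to treat PPI-GD as inexact gradient descent with a uniformly bounded gradient error, and then count oracle calls as $T \cdot m^d$.

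\textbf{Step 1: gradient error.} By \cref{lemma:gradient-error}, once $m$ satisfies \cref{eq:m-constraint}, every coordinate of every per-sample estimated gradient is within $\delta$ of the truth. Averaging over the $n$ samples gives
\begin{equation}
\inorm{\widehat{\grad F}(\vec{\theta}) - \grad F(\vec{\theta})}_2 \leq \sqrt{p}\,\delta
\end{equation}
for all $\vec{\theta}$.

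\textbf{Step 2: one-step recursion.} $F$ inherits $L_l$-smoothness and $\mu$-strong convexity from Assumptions 1 and 3. With step size $1/L_l$, the descent lemma and Young's inequality give
\begin{equation}
F(\vec{\theta}^{(t)}) \leq F(\vec{\theta}^{(t-1)}) - \tfrac{1}{2L_l}\inorm{\grad F}_2^2 + \tfrac{1}{2L_l}\inorm{\widehat{\grad F}-\grad F}_2^2 .
\end{equation}
Combining this with the Polyak--{\L}ojasiewicz inequality $\inorm{\grad F}_2^2 \geq 2\mu(F-F^*)$ yields
\begin{equation}
F(\vec{\theta}^{(t)})-F^* \leq \Bigl(1-\tfrac{\mu}{L_l}\Bigr)\bigl(F(\vec{\theta}^{(t-1)})-F^*\bigr) + \tfrac{p\delta^2}{2L_l}.
\end{equation}
Unrolling this recursion gives
\begin{equation}
F(\vec{\theta}^{(T)})-F^* \leq \Bigl(1-\tfrac{\mu}{L_l}\Bigr)^T\bigl(F(\vec{\theta}^{(0)})-F^*\bigr) + \tfrac{p\delta^2}{2\mu}.
\end{equation}
The initial gap is bounded by $\tfrac{L_l}{2}c^2 p$ using $\inorm{\vec{\theta}^{(0)}-\vec{\theta}^*}_2^2\leq c^2p$.

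\textbf{Step 3: parameter choices.} Set $\delta^2 = \mu\ep/p$, which is presumably \cref{eq:delta-value}, so the bias term is $\ep/2$. Choose $T$ (presumably \cref{eq:t-value}) as
\begin{equation}
T = \Bigl\lceil \log\bigl(L_l c^2 p/\ep\bigr)\big/\log\bigl(\tfrac{L_l}{L_l-\mu}\bigr)\Bigr\rceil,
\end{equation}
which makes the first term at most $\ep/2$. The hypothesis $\ep<(L_l-\mu)p/(2L_l\mu)$ is used to keep the logarithm positive and to absorb the ceiling's $+1$ into a multiple of $\log(p/\ep)+C_2$. The exact constants $C_1$ and $C_2$ in \cref{eq:c12def} come from routine bookkeeping of these terms. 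This bookkeeping includes writing $\log(L_lc^2p/\ep)$ as $\log(p/\ep)+\log(\ldots)$ and combining the $1$ and $c^2L_l\mu$ pieces.

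\textbf{Step 4: counting $m^d$ (the main obstacle).} The smallest admissible $m$ satisfies
\begin{equation}
m \leq 1+\bigl(L_h/\delta\bigr)^{1/\ell}\bigl(2^\ell/\ell+1\bigr)^{d/\ell}.
\end{equation}
Hence
\begin{equation}
m^d \leq 2^{d}\max\Bigl\{1,\ (L_h/\delta)^{d/\ell}\Bigr\}\bigl(2^\ell/\ell+1\bigr)^{d^2/\ell}.
\end{equation}
Since $2^\ell/\ell+1\leq 2^\ell$ for $\ell\geq 2$, the last factor is at most $2^{d^2}$. Using $d\leq \log^{-1/2}(2)\log^\gamma(n)$, we get $d^2\log 2\leq \log^{2\gamma}(n)$, so $2^{d^2}\leq \exp(\log^{2\gamma}(n))$. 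The leftover factors are $2^d$, $(L_h)^{d/\ell}\leq 1+L_h$ (using $d\leq \ell$), and $(p/(\mu\ep))^{d/(2\ell)}$. The delicate part is disposing of the stray $2^d$, and the $\mu^{-d/(2\ell)}$, without worsening the exponent. The $2^d$ factor could perhaps be absorbed by the slack in $2^\ell/\ell+1 \leq 2^\ell/\sqrt{2}$ for $\ell\geq 4$, or by a slightly sharper bound on $m$. I would first try splitting $(1+x)^d\leq 2^{d-1}(1+x^d)$ and folding the $2^{d}$ into the $\exp$ factor by tightening the $d^2$ estimate. The $\mu$ factor I would fold into $C_1$ through the $\mu$ in its denominator. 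Multiplying $m^d$ by $T$ then gives the stated bound.
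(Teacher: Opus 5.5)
Your outline follows the paper's proof. You treat PPI-GD as inexact GD with coordinatewise error $\delta$ from \cref{lemma:gradient-error}, obtain a linear rate plus a bias, bound $\Gamma(\operatorname{PPI-GD})\le Tm^d$, and use $(2^\ell/\ell+1)^{1/\ell}\le 2$ and $2^{d^2}\le\exp(\log^{2\gamma}(n))$. The one structural difference is that you derive the inexact-GD guarantee yourself (descent lemma plus PL), while the paper imports it from \cite[Proposition 2]{JadbabaieMakur2024}; your derivation is correct.

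Your guess for \cref{eq:delta-value} is wrong, though: it is not $\delta^2=\mu\ep/p$. Write $\sigma=L_l/\mu$ and $A=F(\vec{\theta}^{(0)})-F^*+p/(2\mu)$. The statement couples $\delta=(1-1/\sigma)^{T/2}$ to $T=\lceil\log(A/\ep)/\log(\sigma/(\sigma-1))\rceil$ from \cref{eq:t-value}. Since $m$ is fixed by this $\delta$, you must use it. Your unrolled recursion handles it immediately, giving $F(\vec{\theta}^{(T)})-F^*\le(1-1/\sigma)^T A\le\ep$. This is also where the hypothesis on $\ep$ actually enters. It is equivalent to $p/(2\mu\ep)>\sigma/(\sigma-1)$, so the ceiling's argument exceeds $1$. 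Hence $T\ge 2$, and $T\le 2(\log(p/\ep)+C_2)/\log(\sigma/(\sigma-1))$ via $A\le(1+c^2L_l\mu)p/(2\mu)$. With your own $T$, the hypothesis does not force $L_lc^2p/\ep>1$ (take $c$ small), and $C_1,C_2$ do not come out as stated.

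The genuine gap is Step 4, which you leave open, and none of your suggested remedies works. Write $m\le 1+y$ with $y=(L_h/\delta)^{1/\ell}(2^\ell/\ell+1)^{d/\ell}$.
\begin{itemize}
\item The slack $2^\ell/\ell+1\le 2^\ell/\sqrt2$ saves only $2^{-d^2/(2\ell)}\ge 2^{-d/2}$, because $d\le\ell$. In fact $(2^\ell/\ell+1)^{1/\ell}\to 2$, so for fixed $d$ and large $\ell$ there is essentially no slack.
\item The split $(1+y)^d\le 2^{d-1}(1+y^d)$ just reproduces the factor $2^{d-1}$.
\item The estimate $2^{d^2}\le\exp(\log^{2\gamma}(n))$ cannot absorb anything, since it is tight when $d$ attains its upper bound.
\end{itemize}
What works is a case split. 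If $y\ge d$, then $(1+y)^d=(1+1/y)^d y^d\le e\,y^d$. If $y<d$, then $(1+y)^d<(1+d)^d\le 2^{d^2}$. Either way $m^d\le e\,2^{d^2}\max\{1,(L_h/\delta)^{d/\ell}\}$, so the stray $2^d$ becomes an absolute constant. Similarly, a $\mu$-dependent factor raised to the power $d/(2\ell)$ should be bounded by $x^{d/(2\ell)}\le\max\{1,x\}$, not folded into $C_1$.

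The paper does not supply this argument either. The last step of its $m^d$ chain goes from $(\cdots+1)^d$ directly to $(1+L_h)^{d/\ell}2^{d^2}\frac{\sigma}{2(L_l-\mu)}(\cdots)^{d/(2\ell)}$. That step absorbs the $+1$ without comment, and it uses $x^{d/(2\ell)}\le x$ for $x=\sigma/(2(L_l-\mu))$, which requires $x\ge 1$. Doing the repaired bookkeeping with the statement's $\delta$ and $T$ yields the theorem with $C_1$ replaced by $e\max\{C_1,\,2(1+L_h)/\log(L_l/(L_l-\mu))\}$. So you have found a real weak spot, but neither your outline nor the paper's chain as written establishes the literal constant $C_1$.
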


\Cref{theorem:oracle-complexity} is proved in \Cref{section:proofs-of-oracle-complexity}. Although this bound is not directly comparable with \cite[Theorem 2]{JadbabaieMakur2024}, which presents oracle complexity bounds for LPI-GD in terms of $d$ instead of $n$, the intermediate bound \cref{eq:intermediate-bound} allows for a direct comparison which reveals an improvement in the scaling in $d$ (polyexponential vs. doubly exponential). We further remark that assumptions such as $\ell\geq d\geq 2$ are not necessarily limitations of the algorithm, but are only required to simplify the bound to a form that intuitively captures its asymptotic qualities. To obtain the final scaling with $p$ in the proof, we assume that the parameters $\vec{\theta}$ belong in a hypercube with constant edge length (that does not depend on $d$). %
This is used to establish a relation between the parameter space dimension $p$ and the $\mathcal{L}^2$-distance between the initial choice of parameters $\vec{\theta}^{(0)}$ and minimizer $\vec{\theta}^*$.

Next, we present the following proposition which illustrates a regime where PPI-GD outperforms GD, SGD, SVRG, and MBGD in terms of oracle complexity for strongly convex loss.
\begin{proposition} \label{proposition:p-over-ep-regime}
    Under the same assumptions as \cref{theorem:oracle-complexity}, if $p / \ep = O(n^\beta)$ for some $\beta > 0$, GD, SGD, SVRG, MBGD, and PPI-GD exhibit the following oracle complexity (expressed in Landau notation):
    
    {\centering
    \vspace{0.5em}
    \begin{tabular}{llll}
    \toprule
    Algorithm & Oracle complexity \\
    \midrule
    \rule{0pt}{11pt}GD & $\tilde{O}(n)$ \\
    SGD & $\tilde{O}(n^\beta)$ \\
    SVRG & $\tilde{O}(n)$ \\
    MBGD & $\tilde{O}(n^\beta)$ \\
    PPI-GD & $\tilde{O}(n^{(d \beta) / (2 \ell)})$\\
    \bottomrule
    \end{tabular}\par
    \vspace{0.5em}
    }
\end{proposition}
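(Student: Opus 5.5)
The plan is to treat each row of the table separately. In each case we take a known oracle complexity bound, in the form $g(n, p/\ep)$, and substitute $p/\ep = O(n^\beta)$. Logarithmic and subpolynomial factors are then absorbed into $\tilde{O}(\cdot)$, which by the paper's convention hides any factor dominated by $n^{\beta'}$ for every $\beta' > 0$.

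\emph{GD, SGD, SVRG and MBGD.} We use the standard strongly convex rates.
\begin{itemize}
    \item GD needs $O(n \log(p/\ep))$ oracle calls \cite{Nesterov2004}. With $p/\ep = O(n^\beta)$ the logarithm is $O(\beta \log n)$, so the bound is $\tilde{O}(n)$.
    \item SGD needs $O(p/\ep)$ calls \cite{BottouCurtis2018}, which is $O(n^\beta) = \tilde{O}(n^\beta)$.
    \item SVRG needs $O((n + \kappa)\log(p/\ep))$ calls \cite{JohnsonZhang2013}, where $\kappa = \lipschitzconst/\mu$ is a constant. This gives $\tilde{O}(n)$.
    \item MBGD with batch size $b$ needs $O(b \cdot (p/\ep)/b)$ gradient calls in the standard analysis \cite{BottouCurtis2018}, i.e.\ $O(p/\ep)$. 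This gives $\tilde{O}(n^\beta)$.
\end{itemize}
For SVRG and MBGD I would quote the specific forms from the cited references, so that the dependence on $p$ enters only through the initial distance bound $\|\vec{\theta}^{(0)} - \vec{\theta}^*\|_2^2 \le c^2 p$. This matches how $p/\ep$ appears in \cref{theorem:oracle-complexity}.

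\emph{PPI-GD.} Here we invoke \cref{theorem:oracle-complexity} directly:
\[
\Gamma \le C_1 \exp\prn{\log^{2\gamma}(n)} \prn{\frac{p}{\ep}}^{d/(2\ell)} \prn{\log\prn{\frac{p}{\ep}} + C_2}.
\]
Substituting $p/\ep = O(n^\beta)$ turns the middle factor into $O(n^{d\beta/(2\ell)})$. The last factor becomes $O(\log n)$, which is subpolynomial.

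The remaining step, and the only non-routine one, is to show that $\exp(\log^{2\gamma}(n))$ is subpolynomial. Since $\gamma < 1/2$ we have $2\gamma < 1$. Hence, for any $\beta' > 0$,
\[
\frac{\log^{2\gamma}(n)}{\beta' \log n} = \frac{\log^{2\gamma - 1}(n)}{\beta'} \to 0,
\]
so $\exp(\log^{2\gamma} n) = o(n^{\beta'})$. This is exactly why the constraint $\gamma < 1/2$ was imposed.

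One subtlety is that $d$ may itself grow with $n$ (up to $\log^\gamma n$). Then the exponent $d\beta/(2\ell)$ is not a constant, but it is stated explicitly in the table, so no absorption is needed. The constants $C_1$ and $C_2$ depend only on $\mu$, $\lipschitzconst$, $\holderconst$ and $c$, so they are $O(1)$. Combining these gives $\tilde{O}(n^{d\beta/(2\ell)})$.
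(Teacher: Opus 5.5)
Your proposal is correct and takes the same route as the paper. Substitute $p/\varepsilon = O(n^\beta)$ into \cref{theorem:oracle-complexity} and into the known rates for GD, SGD, SVRG and MBGD. Then absorb $\log n$ and $\exp(\log^{2\gamma}(n))$ into $\tilde{O}(\cdot)$; the latter is subpolynomial because $2\gamma<1$. You do this a little more carefully than the paper, for example by keeping the $C_2$ term it drops.

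One detail to state explicitly in your remark that $d$ may grow: write $p/\varepsilon \le K n^\beta$. The constant then enters as $K^{d/(2\ell)} \le \max\{1, K^{1/2}\}$, since $\ell \ge d$, so it stays bounded even when $d$ grows with $n$.
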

\Cref{proposition:p-over-ep-regime} is proved in \cref{section:proofs-of-oracle-complexity}. The regime $p / \ep = O(n^\beta)$ studied here encompasses a wide range of common settings, including the case where $p$ and $1/\ep$ are some positive power of $n$. For example, in machine learning contexts where overparameterization ($p>n$) can often be beneficial \cite{ZhangBengio2021}, it is natural to consider a regime of $p=O(n^b)$ (where $b \geq 1$) and $\ep=\Theta(1/\sqrt{n})$, with the error bound matching the scaling of the standard error.

Note that \cref{proposition:p-over-ep-regime} implies the oracle complexity of PPI-GD to be $\tilde{O}(n^{\beta/2})$ since $\ell \geq d$ by assumption. Thus, in this regime, PPI-GD always outperforms the other methods in terms of asymptotic oracle complexity when $\beta < 2$. In addition, increasing $\ell$ can reduce the exponent $d\beta/(2\ell)$ to any arbitrary positive value, which implies that sufficient smoothness in data allows PPI-GD to outperform the other methods for \emph{any} $\beta > 0$.

Although the strongly convex setting is important and well studied in the optimization literature, strong convexity can be a stringent requirement. Thus, we now shift our attention to the non-convex case. Note that the step size must be reduced from $1/\lipschitzconst$ to $1/(4\lipschitzconst)$ in this setting.

\begin{theorem}[Oracle complexity of PPI-GD in non-convex setting] \label{theorem:oracle-complexity-nonconvex}
    Given some (small) $\varepsilon$ satisfying $0<\varepsilon<\sqrt{p}$
    , let $m$ be the smallest positive integer that satisfies inequality \cref{eq:m-constraint} in \cref{lemma:gradient-error} with $\delta = \varepsilon/(2\sqrt{p})$. Let the step size be $\alpha = 1/(4\lipschitzconst)$. If $\ell \geq d \geq 2$ and $d \leq \log^{-\uglyfrac{1}{2}} (2) \log^\gamma(n)$ for some constant $0 < \gamma < 1/2$,
    the first-order oracle complexity of PPI-GD (to reach an $\varepsilon$-stationary point) is bounded by
\begin{equation}
    \Gamma(\operatorname{PPI-GD}) \leq C_3 \exp\prn{\log^{2\gamma}(n)} \prn{\frac{p}{\varepsilon^2}}^{1+d/(2\ell)}
\end{equation}
where the constant
$
C_3 \triangleq 2(1+32 c^2 L_l^2)(1+L_h)
$
only depends on $\lipschitzconst$, $\holderconst$, and $c$.
\end{theorem}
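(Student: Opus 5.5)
The proof proceeds in three steps: an inexact-gradient descent lemma, a count of the iterations needed, and a bound on the per-iteration cost $m^d$.

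\textbf{Step 1: descent lemma with an inexact gradient.} Write $\vec{g}_t = \grad F(\vec{\theta}^{(t-1)})$ and let $\hat{\vec{g}}_t$ be the PPI-GD estimate. By \cref{lemma:gradient-error} with $\delta = \ep/(2\sqrt{p})$, every coordinate of every interpolated per-sample gradient is within $\delta$ of the truth. Averaging over the $n$ samples and taking the $\mathcal{L}^2$-norm gives
\begin{equation}
\|\hat{\vec{g}}_t - \vec{g}_t\|_2 \leq \sqrt{p}\,\delta = \ep/2 .
\end{equation}
Assumption 1 makes $F$ an $\lipschitzconst$-smooth function. Apply the standard descent inequality with step $\alpha = 1/(4\lipschitzconst)$:
\begin{equation}
F(\vec{\theta}^{(t)}) \leq F(\vec{\theta}^{(t-1)}) - \alpha \vec{g}_t\tsp \hat{\vec{g}}_t + \frac{\lipschitzconst\alpha^2}{2}\|\hat{\vec{g}}_t\|_2^2 .
\end{equation}
Expand $\hat{\vec{g}}_t = \vec{g}_t + \vec{e}_t$ and use Young's inequality on the cross term together with $\|\vec{a}+\vec{b}\|^2 \leq 2\|\vec{a}\|^2 + 2\|\vec{b}\|^2$. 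This yields a bound of the form
\begin{equation}
F(\vec{\theta}^{(t)}) \leq F(\vec{\theta}^{(t-1)}) - \frac{c_1}{\lipschitzconst}\|\vec{g}_t\|_2^2 + \frac{c_2}{\lipschitzconst}\|\vec{e}_t\|_2^2
\end{equation}
with absolute constants $c_1, c_2$ (roughly $c_1 = 1/8$, $c_2 = 1/4$ or similar). The extra slack from using $1/(4\lipschitzconst)$ rather than $1/\lipschitzconst$ is what keeps $c_1 > 0$ after absorbing the error terms.

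\textbf{Step 2: number of iterations.} Telescope over $t \in [T]$ and use $F(\vec{\theta}^{(T)}) \geq F^*$:
\begin{equation}
\min_{t} \|\vec{g}_t\|_2^2 \leq \frac{\lipschitzconst\bigl(F(\vec{\theta}^{(0)}) - F^*\bigr)}{c_1 T} + \frac{c_2}{c_1}\cdot\frac{\ep^2}{4} .
\end{equation}
Bound the initial gap using smoothness and the assumption $\|\vec{\theta}^{(0)}-\vec{\theta}^*\|_2^2 \leq c^2 p$, giving $F(\vec{\theta}^{(0)}) - F^* \leq \lipschitzconst c^2 p/2$. If the constants are not tight enough at $\delta = \ep/(2\sqrt{p})$, shrink $\delta$ by a constant factor; this only changes $C_3$. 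Choosing $T$ on the order of $32 c^2 \lipschitzconst^2 p/\ep^2$ makes the right-hand side at most $\ep^2$. The algorithm returns the best iterate, or equivalently we count oracle calls up to the first $\ep$-stationary iterate. This accounts for the factor $(1+32c^2\lipschitzconst^2)\,p/\ep^2$ in $C_3$.

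\textbf{Step 3: per-iteration cost.} Each iteration uses $m^d$ oracle calls, where $m$ is the smallest integer satisfying \cref{eq:m-constraint}, so
\begin{equation}
m^d \leq \prn{\prn{\frac{\holderconst}{\delta}}^{1/\ell}\prn{\frac{2^\ell}{\ell}+1}^{d/\ell} + 1}^d .
\end{equation}
Using $(x+1)^d \leq 2^d \max(x,1)^d$ and $\delta = \ep/(2\sqrt{p})$ gives
\begin{equation}
m^d \leq 2^d\,(1+\holderconst)\prn{\frac{2\sqrt{p}}{\ep}}^{d/\ell}\prn{\frac{2^\ell}{\ell}+1}^{d^2/\ell}.
\end{equation}
Here I use $\ell \geq d$ so that $(\holderconst)^{d/\ell} \leq 1+\holderconst$. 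Next, $(2^\ell/\ell + 1)^{1/\ell} \leq 2$ for $\ell \geq 2$, and $(2\sqrt{p}/\ep)^{d/\ell} \leq 2\,(p/\ep^2)^{d/(2\ell)}$ since $2^{d/\ell} \leq 2$. So all the $d$-dependence collapses to $2^{O(d^2)}$. This is the \textbf{main obstacle}: bookkeeping the constant so that it is exactly $\exp(\log^{2\gamma}(n))$ rather than something larger. The hypothesis $d \leq \log^{-1/2}(2)\log^\gamma(n)$ gives $d^2\log 2 \leq \log^{2\gamma}(n)$, i.e.\ $2^{d^2} \leq \exp(\log^{2\gamma}(n))$. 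I would therefore try to show that the total $d$-dependent factor, $2^d \cdot 2^{d^2}$ times the leftover constant powers of $2$, is at most $2 \cdot 2^{d^2}$. If that does not hold as stated, I would refine the estimate $(2^\ell/\ell+1)^{d^2/\ell}$ using $\ell \geq d \geq 2$. Any residual absolute constant (such as the $2$) goes into $C_3$.

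Multiplying $T$ by $m^d$ then gives $\Gamma \leq C_3 \exp(\log^{2\gamma}(n)) (p/\ep^2)^{1+d/(2\ell)}$. The condition $\ep < \sqrt{p}$ ensures $p/\ep^2 > 1$, so $T \geq 1$ and lower-order constants can be absorbed into the leading power.
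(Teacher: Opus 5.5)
Your Steps 1 and 2 follow the paper's argument: the inexact descent inequality with Young's inequality, telescoping, and $T\ge 64L_l(F(\vec{\theta}^{(0)})-F^*)/\varepsilon^2$. Via $F(\vec{\theta}^{(0)})-F^*\le c^2pL_l/2$ and $p/\varepsilon^2>1$, this $T$ is at most $(1+32c^2L_l^2)\,p/\varepsilon^2$. With $\alpha=1/(4L_l)$ the constants are exactly $c_1=1/16$ and $c_2=3/16$, so the error floor is $3\varepsilon^2/4<\varepsilon^2$. Your fallback of shrinking $\delta$ is therefore unnecessary. It would also not be allowed, since the statement fixes $\delta=\varepsilon/(2\sqrt p)$ and hence $m$.

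The genuine gap is in Step 3. The bound $(x+1)^d\le 2^d\max(x,1)^d$ leaves you with the $d$-dependent factor $2^{d+1}\cdot 2^{d^2}$, while the hypothesis on $d$ controls only $2^{d^2}\le\exp(\log^{2\gamma}(n))$. Because $C_3$ may not depend on $d$, the surplus $2^d$ cannot be absorbed. The inequality you hope to prove, $2^d\cdot 2^{d^2}\le 2\cdot 2^{d^2}$, is false for every $d\ge 2$. Your fallback fails too: $(2^\ell/\ell+1)^{d^2/\ell}\ge 2^{d^2}\ell^{-d^2/\ell}\to 2^{d^2}$ as $\ell\to\infty$ with $d$ fixed, so in general there is no factor $2^{-d}$ to spare there.

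The missing observation is that the rounding term $+1$ in $m\le x+1$ is negligible against the main term, which is already exponentially large in $d$. The paper first bounds $x\le BL_h^{1/\ell}$ with $B=2^{d+1/\ell}(p/\varepsilon^2)^{1/(2\ell)}\ge 2^d$. It then absorbs the $+1$ \emph{inside} the base before raising to the $d$th power, obtaining $m^d\le 2^{d^2+d/\ell}(p/\varepsilon^2)^{d/(2\ell)}(1+L_h)^{d/\ell}$ with no stray $2^d$. After $2^{d/\ell}\le 2$ and $(1+L_h)^{d/\ell}\le 1+L_h$ (both from $\ell\ge d$), this gives the factor $2(1+L_h)$ in $C_3$.

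Written carefully, the absorption is $BL_h^{1/\ell}+1\le(1+2^{-d})\,B(1+L_h)^{1/\ell}$, together with $(1+2^{-d})^d\le e^{d2^{-d}}\le e^{1/2}$. The paper's direct claim $(BL_h^{1/\ell}+1)^d\le B^d(1+L_h)^{d/\ell}$ can fail slightly when $L_h$ is large relative to $B$. Strictly, then, $C_3$ should carry an extra absolute factor such as $e^{1/2}$. That is harmless, whereas your $2^d$ is not.
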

An analog to \cref{proposition:p-over-ep-regime} immediately follows.
\begin{proposition} \label{proposition:p-over-epsq-regime}
    Under the same assumptions as \cref{theorem:oracle-complexity-nonconvex}, if $p / \ep^2 = O(n^\beta)$ for some $\beta > 0$, GD, SGD, and PPI-GD exhibit the following oracle complexity (expressed in Landau notation):
    
    {\centering
    \vspace{0.5em}
    \begin{tabular}{llll}
    \toprule
    Algorithm & Oracle complexity \\
    \midrule
    \rule{0pt}{11pt}GD & $\tilde{O}(n^{1+\beta})$ \\
    SGD & $\tilde{O}(n^{2\beta})$ \\
    PPI-GD & $\tilde{O}(n^{(1+d/(2\ell))\beta})$\\
    \bottomrule
    \end{tabular}\par
    \vspace{0.5em}
    }
\end{proposition}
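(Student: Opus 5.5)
The plan is to take each row of the table separately. In every case I will start from the known non-asymptotic oracle complexity in \cref{tab:oracle_complexity_comparison} or from \cref{theorem:oracle-complexity-nonconvex}, then substitute the hypothesis $p/\ep^2 = O(n^\beta)$. Concretely, I fix a constant $C>0$ with $p/\ep^2 \leq C n^\beta$ for all sufficiently large $n$. The only genuinely new ingredient is checking that the prefactor $\exp(\log^{2\gamma}(n))$ is subpolynomial, so that the $\tilde{O}(\cdot)$ notation absorbs it.

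For GD, the bound $O(n\, p/\ep^2)$ from \cite{Vavasis1993} becomes $O(n \cdot C n^\beta) = O(n^{1+\beta})$, hence $\tilde{O}(n^{1+\beta})$. For SGD, the bound $O((p/\ep^2)^2)$ from \cite{GhadimiLan2013} becomes $O(C^2 n^{2\beta}) = O(n^{2\beta})$. Neither step needs anything beyond substitution.

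For PPI-GD, the assumptions of \cref{theorem:oracle-complexity-nonconvex} are in force, so
\begin{equation}
\Gamma(\operatorname{PPI-GD}) \leq C_3 \exp\prn{\log^{2\gamma}(n)} \prn{C n^\beta}^{1+d/(2\ell)} .
\end{equation}
Since $\ell \geq d$, we have $1 + d/(2\ell) \leq 3/2$, so $C^{1+d/(2\ell)} \leq \max(1, C)^{3/2}$. This is a constant independent of $n$, even if $d$ grows with $n$. Together with $C_3$, which depends only on $\lipschitzconst$, $\holderconst$ and $c$, it goes into the $O(\cdot)$.

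The remaining step, and the only one requiring care, is to show that $\exp(\log^{2\gamma}(n))$ is dominated by $n^{\beta'}$ for every $\beta'>0$. Taking logarithms, it suffices that $\log^{2\gamma}(n) \leq \beta' \log(n)$ for large $n$, i.e.\ $\log^{2\gamma - 1}(n) \leq \beta'$. This holds eventually because $2\gamma - 1 < 0$, which is exactly where the hypothesis $\gamma < 1/2$ is used. The prefactor is therefore subpolynomial in the sense fixed in the notation section, and $\Gamma(\operatorname{PPI-GD}) = \tilde{O}(n^{(1+d/(2\ell))\beta})$, which completes the table.
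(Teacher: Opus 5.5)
Your proposal is correct and follows the paper's proof: substitute $p/\varepsilon^2 = O(n^\beta)$ into the cited GD and SGD bounds and into \cref{theorem:oracle-complexity-nonconvex}, then absorb $\exp(\log^{2\gamma}(n))$ as a subpolynomial factor, which uses $\gamma < 1/2$. You also make explicit two details the paper leaves implicit: that $C^{1+d/(2\ell)}$ stays bounded because $\ell \geq d$, and that $\log^{2\gamma-1}(n) \to 0$.
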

\Cref{theorem:oracle-complexity-nonconvex} and \cref{proposition:p-over-epsq-regime} are proved in \cref{section:proofs-of-oracle-complexity}. All our remarks regarding \cref{proposition:p-over-ep-regime} also apply to \cref{proposition:p-over-epsq-regime}: under our assumptions, PPI-GD beats the other methods when $\beta < 2$, and sufficient smoothness (i.e., a large enough $\ell$) can arbitrarily relax this bound.

We conclude this section with some remarks on the implications of our analysis by reexamining the oracle complexities summarized in \cref{tab:oracle_complexity_comparison}. Looking at each factor in isolation, we see that PPI-GD's oracle complexity $O(\exp(\log^{2\gamma}(n)) (\uglyfrac{p}{\ep})^{\uglyfrac{d}{(2\ell)}} \log(\uglyfrac{p}{\ep}))$ for strongly convex loss has better scaling in $n$ than GD's $O(n \log (p/\ep))$ (neglecting other factors like $p$ and $\ep$), and better scaling in $p/\ep$ than SGD's $O(p/\ep)$. Likewise, in the non-convex setting, PPI-GD's $O (\exp (\log^{2 \gamma}(n)) (\uglyfrac{p}{\ep^2})^{1+d/(2\ell)})$ scales better in $n$ than GD's $O(n(p/\ep^2))$, and scales better in $p/\ep^2$ than SGD's $O((p/\ep^2)^2)$. This observation motivates \cref{proposition:p-over-ep-regime,proposition:p-over-epsq-regime}, which show that PPI-GD outperforms the other methods in the important regimes of $p/\ep = O(n^\beta)$ (for strongly convex loss) or $p/\ep^2=O(n^\beta)$ (for non-convex loss) for any $\beta > 0$. While this holds true for LPI-GD as well, our algorithm does not require as strong a bound on $d$, relaxing the curse of dimensionality effect in LPI-GD significantly.

\section{Proofs of Interpolation Error Bounds} \label{section:proofs-of-interpolation-error-bounds}

In this section, we prove \cref{theorem:multivariate-interpolation-bound}, which upper-bounds the uniform norm total interpolation error for $d$-variate functions. As discussed in \cref{subsection:interpolation-error-bounds}, we begin by proving \cref{proposition:multivariate-interpolation-terms}, which decomposes the total error into a sum of iterated interpolation error terms.

\begin{proof}[Proof of \cref{proposition:multivariate-interpolation-terms}]
    We will show by induction that for all $t \in [d]$,
    \begin{equation}
        \norm{\mathrm{P}_{t, \dots, d} g - g}_\infty \leq \sum_{\substack{\mathcal{K} \in \powerset{\bktz{t, d}}: \\ \mathcal{K} \neq \emptyset}} \norm{\mathrm{E}_\mathcal{K} g}_\infty \, . \label{eq:induction-result}
    \end{equation}

    Base case: $t = d$. By inspection, we have
    \begin{equation}
        \norm{\mathrm{P}_d g - g}_\infty \stackclap{(a)}{=} \norm{\mathrm{E}_d g}_\infty = \sum_{\substack{\mathcal{K} \in \powerset{\brc{d}}: \\ \mathcal{K} \neq \emptyset}} \norm{\mathrm{E}_\mathcal{K} g}_\infty \, ,
    \end{equation}
    where (a) holds by the definition of $\mathrm{E}$.
    
    Inductive step: $t < d$. We use ideas from arguments in \cite[Section 4]{CarlsonHall1973} involving \emph{blended interpolants}. We have
    \begin{align}
        &\equad \norm{\mathrm{P}_{t, \dots, d} g - g}_\infty \\
        &\stackclap{(a)}{=} \max_{(x_1, \dots, x_d) \in [a, b]^d} \abs{\mathrm{P}_{t, \dots, d} g(x_1, \dots, x_d) - g(x_1, \dots, x_d)} \\
        &\stackclap{(b)}{=} \max_{(x_1, \dots, x_d) \in [a, b]^d} \, \Biggl| \sum_{i_t=1}^{N+1} \cdots \sum_{i_d=1}^{N+1} \Biggl( \\&\quad g(x_1, \dots, x_{t-1}, u_{i_t}, \dots, u_{i_d}) \prod_{s=t}^d \phi_{i_s}(x_s) \Biggr)- g(x_1, \dots, x_d) \Biggr| \\
        &\stackclap{(c)}{=} \max_{(x_1, \dots, x_d) \in [a, b]^d} \, \Biggl| \sum_{r=t}^d \, \Biggl( \\
        &\quad \sum_{i_r=1}^{N+1} \cdots \sum_{i_d=1}^{N+1} g(x_1, \dots, x_{r-1}, u_{i_r}, \dots, u_{i_d}) \prod_{s=r}^d \phi_{i_s}(x_s) \\
        &\quad - \sum_{\mathclap{i_{r+1}=1}}^{N+1} \cdots \sum_{i_d=1}^{N+1} g(x_1, \dots, x_r, u_{i_{r+1}}, \dots, u_{i_d}) \prod_{\mathclap{s=r+1}}^d \phi_{i_s}(x_s) \Biggr) \Biggr| \, ,
    \end{align}
    where (a) holds by definition of uniform norm and $\max$ may be used due to the continuity of $g$ and the compactness of $[a, b]^d$, (b) holds by definition of $\mathrm{P}$, we assume in (c) and all subsequent steps that the iterated sum $\sum_{i_{r+1}} \cdots \sum_{i_d}$ reduces to a single copy of the summand when $r = d$, and (c) holds because the sum over $r \in \ibktz{t, d}$ telescopes. Proceeding onwards, we have
    \begin{align}
        &\equad \norm{\mathrm{P}_{t, \dots, d} g - g}_\infty \\
        &\stackclap{(d)}{\leq} \max_{(x_1, \dots, x_d) \in [a, b]^d} \sum_{r=t}^d \, \Biggl| \\
        &\quad \sum_{i_r=1}^{N+1} \cdots \sum_{i_d=1}^{N+1} g(x_1, \dots, x_{r-1}, u_{i_r}, \dots, u_{i_d}) \prod_{s=r}^d \phi_{i_s}(x_s) \\
        &\quad - \sum_{i_{r+1}=1}^{N+1} \cdots \sum_{i_d=1}^{N+1} g(x_1, \dots, x_r, u_{i_{r+1}}, \dots, u_{i_d}) \prod_{\mathclap{s=r+1}}^d \phi_{i_s}(x_s) \Biggr| \\
        &= \max_{(x_1, \dots, x_d) \in [a, b]^d} \sum_{r=t}^d \, \Biggl| \sum_{i_{r+1}=1}^{N+1}\! \cdots \sum_{i_d=1}^{N+1} \\
        &\quad \Biggl( \sum_{i_r=1}^{N+1} g(x_1, \dots, x_{r-1}, u_{i_r}, \dots, u_{i_d}) \, \phi_{i_r}(x_r) \\
        &\quad - g(x_1, \dots, x_r, u_{i_{r+1}}, \dots, u_{i_d}) \Biggr) \prod_{s=r+1}^d \phi_{i_s}(x_s) \Biggr| \\
        &\stackclap{(e)}{=} \!\!\!\!\!\!\!\! \max_{(x_1, \dots, x_d) \in [a, b]^d} \!\sum_{r=t}^d  \Biggl| \sum_{i_{r+1}=1}^{N+1}\! \cdots \sum_{i_d=1}^{N+1} \Bigl( \!\mathrm{P}_r g(x_1, \dots, x_r, u_{i_{r+1}}, \dots, u_{i_d}) \\
        &\quad - g(x_1, \dots, x_r, u_{i_{r+1}}, \dots, u_{i_d}) \Bigr) \prod_{s=r+1}^d \phi_{i_s}(x_s) \Biggr| \\
        &\stackclap{(f)}{=} \max_{(x_1, \dots, x_d) \in [a, b]^d} \sum_{r=t}^d \, \Biggl| \sum_{i_{r+1}=1}^{N+1}\! \cdots \sum_{i_d=1}^{N+1} \\
        &\quad \mathrm{E}_r g(x_1, \dots, x_r, u_{i_{r+1}}, \dots, u_{i_d}) \prod_{s=r+1}^d \phi_{i_s}(x_s) \Biggr| \\
        &\stackclap{(g)}{=} \max_{(x_1, \dots, x_d) \in [a, b]^d} \sum_{r=t}^d \abs{\mathrm{P}_{r+1, \dots, d} \mathrm{E}_r g(x_1, \dots, x_d)} \, ,
    \end{align}
    where (d) holds by the triangle inequality, (e) holds by definition of $\mathrm{P}$, (f) holds by definition of $\mathrm{E}$, we assume in (g) and all subsequent steps that $\mathrm{P}_{r+1, \dots, d}$ is the identity operation when $r = d$, and (g) holds by definition of $\mathrm{P}$. We note that steps (c) through (g) correspond to the decomposition of blue nodes in \cref{figure:recursion-tree} and, for example, \cref{eq:proof-sketch-step-a} in the proof sketch in \cref{subsection:interpolation-error-bounds}. Proceeding onwards, we have
    \begin{align}
        &\equad \norm{\mathrm{P}_{t, \dots, d} g - g}_\infty \\
        &\stackclap{(h)}{\leq} \sum_{r=t}^d \max_{(x_1, \dots, x_d) \in [a, b]^d} \abs{\mathrm{P}_{r+1, \dots, d} \mathrm{E}_r g(x_1, \dots, x_d)} \\
        &\stackclap{(i)}{=} \sum_{r=t}^d \norm{\mathrm{P}_{r+1, \dots, d} \mathrm{E}_r g}_\infty \\
        &\stackclap{(j)}{\leq} \sum_{r=t}^d \norm{\mathrm{E}_r g}_\infty + \sum_{r=t}^{d-1} \norm{\mathrm{P}_{r+1, \dots, d} \mathrm{E}_r g - \mathrm{E}_r g}_\infty \\
        &\stackclap{(k)}{\leq} \sum_{r=t}^d \norm{\mathrm{E}_r g}_\infty + \sum_{r=t}^{d-1} \sum_{\substack{\mathcal{K} \in \powerset{\bktz{r+1, d}}: \\ \mathcal{K} \neq \emptyset}} \norm{\mathrm{E}_\mathcal{K} \mathrm{E}_r g}_\infty \\
        &\stackclap{(l)}{=} \sum_{r=t}^d \norm{\mathrm{E}_r g}_\infty + \sum_{r=t}^{d-1} \sum_{\substack{\mathcal{K} \in \powerset{\bktz{r, d}}: \\ r \in \mathcal{K}, \, \abs{\mathcal{K}} \geq 2}} \norm{\mathrm{E}_\mathcal{K} g}_\infty \\
        &\stackclap{(m)}{=} \sum_{r=t}^d \sum_{\substack{\mathcal{K} \in \powerset{\bktz{r, d}}: \\ r \in \mathcal{K}}} \norm{\mathrm{E}_\mathcal{K} g}_\infty \stackclap{(n)}{=} \sum_{\substack{\mathcal{K} \in \powerset{\bktz{t, d}}: \\ \mathcal{K} \neq \emptyset}} \norm{\mathrm{E}_\mathcal{K} g}_\infty \, ,
    \end{align}
    where (h) holds by independently maximizing each term of the sum $\sum_{r=t}^d$, (i) holds by definition of uniform norm, (j) holds by the triangle inequality, (k) holds by the induction hypothesis, (l) holds by definition of $\mathrm{E}$, (m) combines the singleton and non-singleton sequences with lowest term $r$, and (n) combines the sequences in $\powerset{\bktz{t, d}}$ grouped by their lowest term $r$. We note that step (j) corresponds to the decomposition of red nodes in \cref{figure:recursion-tree} and, for example, \cref{eq:proof-sketch-step-b} in the proof sketch in \cref{subsection:interpolation-error-bounds}. Furthermore, each copy of the inner sum in step (m) is illustrated as a subtree in \cref{figure:recursion-tree}.
    
    Lastly, we obtain \cref{proposition:multivariate-interpolation-terms} as a specific case of the inductive argument above. We have
    \begin{equation}
        \norm{\mathrm{P} g - g}_\infty = \norm{\mathrm{P}_{1, \dots, d} g - g}_\infty \stackclap{(a)}{\leq} \sum_{\substack{\mathcal{K} \in \powerset{[d]}: \\ \mathcal{K} \neq \emptyset}} \norm{\mathrm{E}_\mathcal{K} g}_\infty
    \end{equation}
    as desired, where (a) holds by \cref{eq:induction-result}.
\end{proof}

Next, we prove \cref{proposition:univariate-interpolation-bound}, which bounds univariate interpolation errors.

\begin{proof}[Proof of \cref{proposition:univariate-interpolation-bound}]
    Given any $x \in [a, b]$ distinct from the abscissae $\mathcal{G}$, let $\mathrm{Q}_x h: [a, b] \rightarrow \mathbb{R}$ be the degree-$(N + 1)$ polynomial which interpolates $h$ at the points $\mathcal{G} \cup \brc{x}$. We have
    \begin{align}
        \norm{\mathrm{P} h - h}_\infty \!\!&\stackclap{(a)}{=} \max_{x \in [a, b] - \mathcal{G}} \abs{\mathrm{P} h(x) - h(x)} \\
        &\stackclap{(b)}{=} \max_{x \in [a, b] - \mathcal{G}} \abs{\mathrm{Q}_x h(x) - \mathrm{P} h(x)} \\
        &\stackclap{(c)}{=} \max_{x \in [a, b] - \mathcal{G}} \abs{h[u_1, \dots, u_{N+1}, x] \prod_{i=1}^{N+1} (x - u_i)} \\
        &\stackclap{(d)}{\leq} N! \prn{\Delta u}^{N+1} \!\!\!\!\!\! \max_{x \in [a, b] - \mathcal{G}}\! \abs{h[u_1, \dots, u_{N+1}, x]} , \label{eq:interpolation-error-bound}
    \end{align}
    where it suffices to take the maximum in (a) over $x \notin \mathcal{G}$ because $\mathrm{P} h$ interpolates $h$ at all points in $\mathcal{G}$, (b) holds because $\mathrm{Q}_x h(x) = h(x)$ by definition and $\mathrm{Q}_x h$ is well-defined since $x \notin \mathcal{G}$, (c) holds because the Newton basis representation of $\mathrm{Q}_x h$ is (cf. \cite[p. 308]{AscherGreif2011})
    \begin{equation}
        \mathrm{Q}_x h(\tau) = \mathrm{P} h(\tau) + h[u_1, \dots, u_{N+1}, x] \prod_{i=1}^{N+1} (\tau - u_i) \, ,
    \end{equation}
    and (d) holds by the even spacing of the grid $\mathcal{G}$.
    
    Next, we will show by induction that for any $t \geq q$ and any distinct (not necessarily sorted) points $\brc{z_i}_{i=1}^{t+1} \subset [a, b]$,
    \begin{align}
        \abs{h[z_1, \dots, z_{t+1}]} \leq \prn{\frac{2}{b - a}}^{t-q} \frac{\norm{h^{(q)}}_\infty}{q!} \, . \label{eq:divided-difference-bound}
    \end{align}
    
    Base case: $t = q$. By inspection, we have
    \begin{align}
        \abs{h[z_1, \dots, z_{t+1}]} &= \abs{h[z_1, \dots, z_{q+1}]} \\
        &\stackclap{(a)}{\leq} \frac{\norm{h^{(q)}}_\infty}{q!} = \prn{\frac{2}{b - a}}^{t-q} \frac{\norm{h^{(q)}}_\infty}{q!} \, ,
    \end{align}
    where (a) holds by the mean value theorem for divided differences \cite[p. 312]{AscherGreif2011}.
    
    Inductive step: $t > q$. We have
    \begin{align}
        \abs{h[z_1, \dots, z_{t+1}]} &\stackclap{(a)}{=} \abs{\frac{h[z_2, \dots, z_{t+1}] - h[z_1, \dots, z_t]}{z_{t+1} - z_1}} \\
        &\stackclap{(b)}{\leq} \frac{1}{b - a} \abs{h[z_2, \dots, z_{t+1}] - h[z_1, \dots, z_t]} \\
        &\stackclap{(c)}{\leq} \frac{1}{b - a} \Bigl( \abs{h[z_2, \dots, z_{t+1}]} + \abs{h[z_1, \dots, z_t]} \Bigr) \\
        &\stackclap{(d)}{\leq} \frac{2}{b - a} \prn{\frac{2}{b - a}}^{t-1-q} \frac{\norm{h^{(q)}}_\infty}{q!} \\
        &= \prn{\frac{2}{b - a}}^{t-q} \frac{\norm{h^{(q)}}_\infty}{q!}
    \end{align}
    as desired, where (a) holds by the definition of divided differences \cref{eq:divided-difference}, (b) holds because $z_1$ and $z_{t+1}$ are points in $[a, b]$, (c) holds by the triangle inequality, and (d) holds by the induction hypothesis.
    
    Proceeding from \cref{eq:interpolation-error-bound}, we have
    \begin{align}
        \norm{\mathrm{P} h - h}_\infty &\stackclap{(a)}{\leq} N! \prn{\Delta u}^{N+1} \prn{\frac{2}{b - a}}^{N+1-q} \frac{\norm{h^{(q)}}_\infty}{q!} \\
        &\stackclap{(b)}{=} N! \prn{\Delta u}^{N+1} \prn{\frac{2}{N \Delta u}}^{N+1-q} \frac{\norm{h^{(q)}}_\infty}{q!} \\
        &\leq \frac{2^{N+1-q}}{N} \prn{\Delta u}^q \norm{h^{(q)}}_\infty
    \end{align}
    as desired, where (a) holds by \cref{eq:divided-difference-bound} because $N + 1 \geq q$ and (b) holds by the even spacing of the grid $\mathcal{G}$.
\end{proof}

Next, we prove \cref{proposition:multivariate-term-bounds}, which bounds the uniform norm of an iterated interpolation error in terms of the partial derivatives of the function being interpolated.

\begin{proof}[Proof of \cref{proposition:multivariate-term-bounds}]
    We have
    \begin{align}
        &\equad \norm{\mathrm{E}_{j_1, \dots, j_r} g}_\infty \\
        &\stackclap{(a)}{=} \norm{\mathrm{P}_{j_r} \mathrm{E}_{j_1, \dots, j_{r-1}} g - \mathrm{E}_{j_1, \dots, j_{r-1}} g}_\infty \\
        &\stackclap{(b)}{\leq} \alpha_1 \prn{\Delta u} \norm{\frac{\partial \mathrm{E}_{j_1, \dots, j_{r-1}} g}{\partial x_{j_r}}}_\infty \\
        &\stackclap{(c)}{=} \alpha_1 \prn{\Delta u} \norm{\frac{\partial}{\partial x_{j_r}} \mprn{\mathrm{P}_{j_{r-1}} \mathrm{E}_{j_1, \dots, j_{r-2}} g - \mathrm{E}_{j_1, \dots, j_{r-2}} g}}_\infty \\
        &\stackclap{(d)}{=} \alpha_1 \prn{\Delta u} \norm{\frac{\partial \mathrm{P}_{j_{r-1}} \mathrm{E}_{j_1, \dots, j_{r-2}} g}{\partial x_{j_r}} - \frac{\partial \mathrm{E}_{j_1, \dots, j_{r-2}} g}{\partial x_{j_r}}}_\infty \\
        &\stackclap{(e)}{=} \alpha_1 \prn{\Delta u} \norm{\mathrm{P}_{j_{r-1}} \mprn{\frac{\partial \mathrm{E}_{j_1, \dots, j_{r-2}} g}{\partial x_{j_r}}} - \frac{\partial \mathrm{E}_{j_1, \dots, j_{r-2}} g}{\partial x_{j_r}}}_\infty \\
        &\stackclap{(f)}{\leq} \alpha_1^2 \prn{\Delta u}^2 \norm{\frac{\partial^2 \mathrm{E}_{j_1, \dots, j_{r-2}} g}{\partial x_{j_{r-1}} \partial x_{j_r}}}_\infty \, ,
    \end{align}
    where (a) holds by definition of $\mathrm{E}$, (b) holds by \cref{proposition:univariate-interpolation-bound}, (c) holds by definition of $\mathrm{E}$, (d) holds by linearity of differentiation, (e) holds by \cref{lemma:interchanging-derivative-and-interpolation}, and (f) holds by \cref{proposition:univariate-interpolation-bound}. Repeating steps (c) through (f), we have
    \begin{align}
        &\equad \norm{\mathrm{E}_{j_1, \dots, j_r} g}_\infty \\
        &\leq \alpha_1^{r-1} \prn{\Delta u}^{r-1} \norm{\frac{\partial^{r-1} \mathrm{E}_{j_1} g}{\partial x_{j_2} \cdots \partial x_{j_r}}}_\infty \\
        &\stackclap{(g)}{=} \alpha_1^{r-1} \prn{\Delta u}^{r-1} \norm{\frac{\partial^{r-1}}{\partial x_{j_2} \cdots \partial x_{j_r}} \mprn{\mathrm{P}_{j_1} g - g}}_\infty \\
        &\stackclap{(h)}{=} \alpha_1^{r-1} \prn{\Delta u}^{r-1} \norm{\frac{\partial^{r-1} \mathrm{P}_{j_1} g}{\partial x_{j_2} \cdots \partial x_{j_r}} - \frac{\partial^{r-1} g}{\partial x_{j_2} \cdots \partial x_{j_r}}}_\infty \\
        &\stackclap{(i)}{=} \alpha_1^{r-1} \prn{\Delta u}^{r-1} \norm{\mathrm{P}_{j_1} \mprn{\frac{\partial^{r-1} g}{\partial x_{j_2} \cdots \partial x_{j_r}}} - \frac{\partial^{r-1} g}{\partial x_{j_2} \cdots \partial x_{j_r}}}_\infty \\
        &\stackclap{(j)}{\leq} \alpha_{\ell-r+1} \, \alpha_1^{r-1} \prn{\Delta u}^\ell \norm{\frac{\partial^\ell g}{\partial x_{j_1}^{\ell-r+1} \partial x_{j_2} \cdots \partial x_{j_r}}}_\infty
    \end{align}
    as desired, where (g) holds by definition of $\mathrm{E}$, (h) holds by linearity of differentiation, (i) holds by repeated application of \cref{lemma:interchanging-derivative-and-interpolation}, and (j) holds by \cref{proposition:univariate-interpolation-bound} because $N + 1 \geq \ell \geq \ell - r + 1$.
\end{proof}

Next, we prove \cref{lemma:interchanging-derivative-and-interpolation}, which states that interchanging the order of partial differentiation and partial interpolation with respect to distinct variables gives the same result.

\begin{proof}[Proof of \cref{lemma:interchanging-derivative-and-interpolation}]
    We have
    \begin{align}
        &\equad \frac{\partial \mathrm{P}_{j_1, \dots, j_r} g}{\partial x_k}(x_1, \dots, x_d) \\
        &\stackclap{(a)}{=} \frac{\partial}{\partial x_k} \sum_{i_1=1}^{N+1} \cdots \sum_{i_r=1}^{N+1} g(x_1, \dots, u_{i_s}, \dots, x_d) \prod_{s=1}^r \phi_{i_s}(x_{j_s}) \\
        &\stackclap{(b)}{=} \sum_{i_1=1}^{N+1} \cdots \sum_{i_r=1}^{N+1} \frac{\partial g}{\partial x_k}(x_1, \dots, u_{i_s}, \dots, x_d) \prod_{s=1}^r \phi_{i_s}(x_{j_s}) \\
        &\stackclap{(c)}{=} \mathrm{P}_{j_1, \dots, j_r} \mprn{\frac{\partial g}{\partial x_k}}(x_1, \dots, x_d) \, ,
    \end{align}
    where (a) holds by definition of $\mathrm{P}$, (b) holds because $k$ is distinct from $j_1, \dots, j_r$, and (c) holds by definition of $\mathrm{P}$.
\end{proof}

Finally, we prove \cref{theorem:multivariate-interpolation-bound} using the intermediate results established above.

\begin{proof}[Proof of \cref{theorem:multivariate-interpolation-bound}]
    We have
    \begin{align}
        &\norm{\mathrm{P} g - g}_\infty \stackclap{(a)}{\leq} \sum_{\substack{\mathcal{K} \in \powerset{[d]}: \\ \mathcal{K} \neq \emptyset}} \norm{\mathrm{E}_\mathcal{K} g}_\infty \\
        &\stackclap{(b)}{\leq} \!\!\!\!\sum_{\substack{\mathcal{K} \in \powerset{[d]}: \\ \mathcal{K} \neq \emptyset}}\!\!\!\! \biggl(\!\frac{2^{N-\ell+\abs{\mathcal{K}}}}{N}\!\biggr) \!\biggl(\!\frac{2^{N}}{N}\!\biggr)^{\!\abs{\mathcal{K}} - 1}\!\!\!\!\!\!\!\!\!\!\!\!\! \prn{\Delta u}^\ell\! \norm{\frac{\partial^\ell g}{\partial x_{j_1}^{\ell-\abs{\mathcal{K}}+1} \partial x_{j_2} \cdots \partial x_{j_{\abs{\mathcal{K}}}}}}_\infty \\
        &\stackclap{(c)}{\leq} L_h \prn{\Delta u}^\ell \sum_{\substack{\mathcal{K} \in \powerset{[d]}: \\ \mathcal{K} \neq \emptyset}} \prn{\frac{2^{N-\ell+\abs{\mathcal{K}}}}{N}} \prn{\frac{2^{N}}{N}}^{\abs{\mathcal{K}} - 1} \\
        &\stackclap{(d)}{\leq} L_h \prn{\Delta u}^\ell \sum_{\substack{\mathcal{K} \in \powerset{[d]}: \\ \mathcal{K} \neq \emptyset}} \prn{\frac{2^{N}}{N}}^{\abs{\mathcal{K}}} = L_h \prn{\Delta u}^\ell \sum_{r=1}^d \binom{d}{r} \prn{\frac{2^{N}}{N}}^r \\
        &\stackclap{(e)}{\leq} L_h \prn{\Delta u}^\ell \prn{\frac{2^{N}}{N} + 1}^d
    \end{align}
    as desired, where (a) holds by \cref{proposition:multivariate-interpolation-terms}, (b) holds by \cref{proposition:multivariate-term-bounds} because $\ell \geq d$ and $N + 1 \geq \ell$, (c) holds by \cref{eq:holder-to-sobolev}, (d) holds because $\abs{\mathcal{K}} \leq d \leq \ell$, and (e) holds by the binomial theorem.
\end{proof}

\section{Proofs of Oracle Complexity} \label{section:proofs-of-oracle-complexity}

First, we prove \cref{lemma:gradient-error}.

\begin{proof}[Proof of \cref{lemma:gradient-error}]
    From \cref{theorem:multivariate-interpolation-bound}, we have
    \begin{equation}
        \norm{Pg - g}_\infty \leq \holderconst \frac{1}{m^\holderorder} \biggprn{\frac{2^\ell}{\ell}+1}^d.
    \end{equation}
    Bounding the right hand side by $\delta$ implies that $\norm{Pg - g}_\infty \leq \delta$ if
    $m \geq (\frac{\holderconst}{\delta} (\frac{2^{\holderorder}}{\holderorder} + 1)^d)^\uglyfrac{1}{\holderorder}$.
\end{proof}

Next, we prove \cref{theorem:oracle-complexity}.

\begin{proof}[Proof of \cref{theorem:oracle-complexity}] We follow the general argument in \cite{JadbabaieMakur2024}.
Let the condition number $\sigma\!\triangleq\!\lipschitzconst / \mu\!>\! 1$.
From \cite[Proposition 2]{JadbabaieMakur2024}, the required number of iterations $T$ for an inexact gradient descent algorithm to achieve an $\ep$-approximate solution is 
\begin{equation}
    T =\ceil{\prn{\log\prn{\frac{\sigma}{\sigma-1}}}^{-1}\!\!\!\log \prn{\frac{F(\vec{\theta}^{(0)})-F^*+\frac{p}{2\mu}}{\ep}}}  \label{eq:t-value}
\end{equation}
when the gradient estimation error $\delta$ satisfies 
\begin{equation}
    \delta = \prn{1-\frac{1}{\sigma}}^{\!\!\uglyfrac{T}{2}} \!\!\!\!\geq\! \prn{1-\frac{1}{\sigma}}^{\!\!1/2}\!\!\biggl(\frac{F(\vec{\theta}^{(0)})-F^*+\frac{p}{2\mu}}{\ep}\biggr)^{\!\!-1/2}\!\!\!\!\!\!\!\!\!. \label{eq:delta-value}
\end{equation}
Note that the $\ep < (L_l-\mu)p/(2L_l\mu)$ assumption implies that $T\geq 2$. Then, note that
\begin{align}
    & m^d \leq \biggprn{\biggprn{\frac{\holderconst}{\delta} \biggprn{\frac{2^{\holderorder}}{\holderorder} + 1}^{\!\! d}}^{\!\!\uglyfrac{1}{\holderorder}}+1}^{\!\! d} \\
    &\stackclap{(a)}{\leq} \biggl(\looseholderconst^\uglyfrac{1}{\holderorder} 2^d \prn{\frac{\sigma}{\sigma-1}}^{\!\uglyfracprn{1}{2\holderorder}}\!\!\! \biggprn{\frac{F(\vec{\theta}^{(0)})-F^*+\frac{p}{2\mu}}{\ep}}^{\!\!\uglyfrac{1}{(2\holderorder)}}\!\!\!\!\! + 1\biggr)^{\!\!d} \\
    &\leq (1\!+\!\holderconst)^\uglyfrac{d}{\holderorder} 2^{d^2} \!\mprn{\!\frac{\sigma}{2(\lipschitzconst-\mu)}\!}\!\biggl(\!\frac{p\!+\!2\mu(F(\vec{\theta}^{(0)})\!-\!F^*)}{\ep}\!\biggr)^{\!\!\uglyfracprn{d}{2\holderorder}}\!\!\!\!\!\!\!\!\!\!\!,
\end{align}
where (a) holds since $(2^{\holderorder} / {\holderorder} + 1)^{1/\holderorder} \leq 2$ when $\holderorder \geq 2$.
Thus
\begin{align+}
    \Gamma(\operatorname{PPI-GD}) \leq T m^d \leq& (1+\holderconst) 2^{d^2} \Biggl(\frac{\sigma}{(\lipschitzconst-\mu)\log\prn{\frac{\sigma}{\sigma-1}}}\Biggr) \nonumber\\ &\prn{\frac{p+\Delta}{\ep}}^\uglyfracprn{d}{2\holderorder} \log\prn{\frac{p+\Delta}{2\mu\ep}}, \label{eq:intermediate-bound}
\end{align+}
where $\Delta \triangleq 2\mu(F(\vec{\theta}^{(0)})-F^*)\leq L_l \mu \|\vec{\theta}^{(0)}-\vec{\theta}^*\|_2^2\leq c^2 p L_l \mu$.
Suppose $\holderorder \geq d$ and $d \leq \log^{-\uglyfrac{1}{2}} (2) \log^\gamma(n)$ for some $0 < \gamma < 1/2$. Then, we have
\begin{equation}
    2^{d^2}
    \leq \exp \bigl(\log (2) \bigl(\log^{-\uglyfrac{1}{2}} (2) \log^\gamma(n)\bigr)^2\bigr) 
    =\exp \mprn{\log^{2\gamma} (n)}
\end{equation}
for sufficiently large $n$. Thus, we get
\begin{align}
    \Gamma(\operatorname{PPI-GD}) &\leq (1+\holderconst) 2^{d^2} \Biggl(\frac{\sigma}{(\lipschitzconst-\mu)\log\prn{\frac{\sigma}{\sigma-1}}}\Biggr) \\ &\phantom{{}={000000000}} \prn{\frac{p+\Delta}{\ep}}^\uglyfracprn{d}{2\holderorder} \log\prn{\frac{p+\Delta}{2\mu\ep}} \\
    &\leq C_1 \exp\mprn{\log^{2\gamma}(n)} \!\prn{\frac{p}{\ep}}^{\uglyfracprn{d}{2\ell}} \!\!\mprn{\log\mprn{\frac{p}{\ep}}+C_2},
\end{align}
where $C_1$ and $C_2$ are defined in \cref{eq:c12def}. 
\end{proof}

Next, we prove \cref{proposition:p-over-ep-regime}.

\begin{proof}[Proof of \cref{proposition:p-over-ep-regime}]
Observe that
\begin{align}
    \Gamma(\operatorname{PPI-GD}) &\stackclap{(a)}{\leq} C_1 \exp\prn{\log^{2\gamma}(n)} \prn{\frac{p}{\ep}}^{\uglyfracprn{d}{2\ell}} \log\prn{\frac{p}{\ep}} \\
    &\stackclap{(b)}{=} O(\mathsf{subpoly}(n) (n^\beta)^{d/(2\ell)})= \tilde{O}(n^{(d \beta) / (2 \ell)}),
\end{align}
where (a) is from \cref{theorem:oracle-complexity} and (b) follows from the regime $p / \ep = O(n^\beta)$. A similar argument can be used to prove the oracle complexity results for the remaining methods based on known bounds presented in \cref{tab:oracle_complexity_comparison} for GD and SGD, \cite{JohnsonZhang2013} for SVRG, and \cite{BottouCurtis2018} for MBGD (also see \cite{JadbabaieMakur2024}).
\end{proof}

Now, we apply similar techniques to the non-convex setting. First, we prove \cref{theorem:oracle-complexity-nonconvex}.

\begin{proof}[Proof of \cref{theorem:oracle-complexity-nonconvex}]
First, set $\delta = \varepsilon/(2\sqrt{p})$ in \cref{lemma:gradient-error} to get
$
    \inorm{\widehat{\vec{\nabla}} F(\vec{\theta})-\vec{\nabla} F(\vec{\theta})}_2^2 \leq \uglyfrac{\varepsilon^2}{4}.
$
Next, note that Assumption 1 ($\lipschitzconst$-smoothness) implies
\begin{align}
    &\phantom{{}={}} F(\vec{\theta}^{(t+1)}) \\ &\leq F(\vec{\theta}^{(t)})\!+\!\vec{\nabla}F(\vec{\theta}^{(t)})\tsp (\vec{\theta}^{(t+1)}\!-\vec{\theta}^{(t)})\!+\!\frac{\lipschitzconst}{2} \inorm{\vec{\theta}^{(t+1)}\!-\vec{\theta}^{(t)}}_2^2 \\
    &=F(\vec{\theta}^{(t)})-\alpha \inorm{\vec{\nabla}F(\vec{\theta}^{(t)})}_2^2+\frac{\lipschitzconst \alpha^2}{2} \inorm{\widehat{\vec{\nabla}} F(\vec{\theta}^{(t)})}_2^2 \\&\phantom{{}={}}-\alpha \vec{\nabla}F(\vec{\theta}^{(t)})\tsp \left(\widehat{\vec{\nabla}} F(\vec{\theta}^{(t)})-\vec{\nabla}F(\vec{\theta}^{(t)})\right)\\
    &\leq F(\vec{\theta}^{(t)})-\alpha\inorm{\vec{\nabla}F(\vec{\theta}^{(t)})}_2^2\\&\phantom{{}={}}+\frac{\alpha}{2} \left(\inorm{\vec{\nabla}F(\vec{\theta}^{(t)})}_2^2+\inorm{\widehat{\vec{\nabla}} F(\vec{\theta}^{(t)})-\vec{\nabla}F(\vec{\theta}^{(t)})}_2^2\right) \\&\phantom{{}={}}+\lipschitzconst\alpha^2 \left(\inorm{\vec{\nabla}F(\vec{\theta}^{(t)})}_2^2+\inorm{\widehat{\vec{\nabla}} F(\vec{\theta}^{(t)})-\vec{\nabla}F(\vec{\theta}^{(t)})}_2^2\right) \\
    &\leq F(\vec{\theta}^{(t)}) + \left(\lipschitzconst\alpha^2-\frac{\alpha}{2}\right) \inorm{\vec{\nabla}F(\vec{\theta}^{(t)})}_2^2 + \frac{\varepsilon^2}{4} \left(\lipschitzconst\alpha^2+\frac{\alpha}{2}\right) \\
    &= F(\vec{\theta}^{(t)}) - \frac{1}{16\lipschitzconst} \inorm{\vec{\nabla}F(\vec{\theta}^{(t)})}_2^2 + \frac{3\varepsilon^2}{64\lipschitzconst}.
\end{align}
(Recall that $\alpha = 1/(4\lipschitzconst)$ is the step size.) Thus,
\begin{equation}
    \sum_{t=0}^{T-1} \inorm{\vec{\nabla}F(\vec{\theta}^{(t)})}_2^2 \leq 16 \lipschitzconst (F(\vec{\theta}^{(0)})-F^*)+\frac{3}{4}T \varepsilon^2,
\end{equation}
so if $T \geq 64 \lipschitzconst(F(\vec{\theta}^{(0)})-F^*)/\varepsilon^2$,
\begin{equation}
    \frac{1}{T} \sum_{t=0}^{T-1} \inorm{\vec{\nabla}F(\vec{\theta}^{(t)})}_2^2 \leq \frac{16 \lipschitzconst}{T} (F(\vec{\theta}^{(0)})-F^*)+\frac{3}{4} \varepsilon^2 \leq \varepsilon^2.
\end{equation}
This implies that a $\varepsilon$-stationary point (as defined in \cref{eq:ep-stationary}) is reached at least once.
Therefore,
\begin{align}
    &\Gamma(\operatorname{PPI-GD}) \leq T m^d \\
    & \leq \!\biggprn{\!\frac{64 \lipschitzconst(F(\vec{\theta}^{(0)})\!-\!F^*)}{\varepsilon^2}\!+\!1\!} \biggl(\!\biggl(\frac{2\sqrt{p}\holderconst}{\varepsilon} \biggl(\frac{2^{\holderorder}}{\holderorder} + 1\biggr)^{\!\! d}\biggr)^{\!\!\uglyfrac{1}{\holderorder}}\!\!\!+1\biggr)^{\!\! d} \\
    & \leq \!\biggprn{\!\frac{64 \lipschitzconst(F(\vec{\theta}^{(0)})\!-\!F^*)}{\varepsilon^2}\!+\!1\!} \biggprn{\!\biggprn{2^{d+(1/\ell)}\biggprn{\frac{p}{\varepsilon^2}}^{\!\!1/(2\ell)}\!\!\!\!\! L_h^{1/\ell} }\!+\!1}^{\!\! d} \\
    & \leq \!\biggprn{\!\frac{64 \lipschitzconst(F(\vec{\theta}^{(0)})-F^*)}{\varepsilon^2}\!+\!1\!} 2^{d^2+d/\ell}\mprn{\frac{p}{\varepsilon^2}}^{d/(2\ell)}\!\!\! (1+\holderconst)^{d/\ell}  \\
    & \stackclap{(a)}{\leq} C_3 \exp\prn{\log^{2\gamma}(n)} \mprn{\frac{p}{\varepsilon^2}}^{1+d/(2\ell)}
\end{align}
where (a) follows from $2^{d^2} \leq \exp\prn{\log^{2\gamma}(n)}$ and $F(\vec{\theta}^{(0)})-F^*\leq c^2 p L_l/2$ as shown in the proof of \cref{theorem:oracle-complexity}, and $C_3$ is defined in \cref{theorem:oracle-complexity-nonconvex}. %

We note that this flavor of analysis in the non-convex setting is standard in the literature (see, e.g., \cite{LiQian2023,CossonJadbabaie2023b}).
\end{proof}

Finally, we prove \cref{proposition:p-over-epsq-regime}.
\begin{proof}[Proof of \cref{proposition:p-over-epsq-regime}]
    First, \cite[Theorem 2.1]{Vavasis1993} and \cite[Corollary 2.2]{GhadimiLan2013} imply the oracle complexities of GD and SGD to be $O(n(p/\varepsilon^2))$ and $O((p/\varepsilon^2)^2)$, respectively. Plugging in $p/\varepsilon^2=O(n^{\beta})$ produces the desired result. PPI-GD's oracle complexity can be derived similarly from \cref{theorem:oracle-complexity-nonconvex}, noting that $O(\exp(\log^{2\gamma}(n)))$ is subpolynomial when $\gamma < 1/2$.
\end{proof}

\section{Experiments} \label{section:experiments}

\begin{figure*}[t]
    \centering
    \parbox{0.63\textwidth}{
    \begin{subfigure}[t]{0.20\textwidth}
        \includegraphics[width=\linewidth]{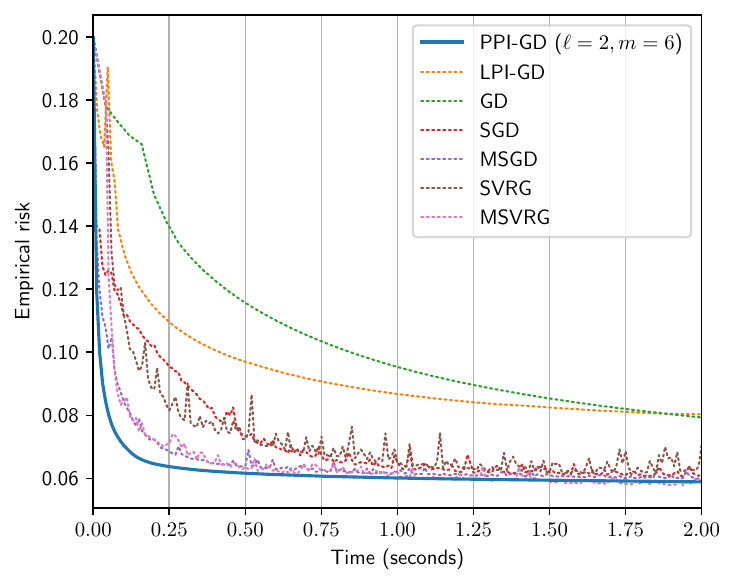}\vspace{-7pt}
        \caption{$h=4,\enspace\nu=0.5$}
        \label{fig:first-left-subfig}
    \end{subfigure}
    \begin{subfigure}[t]{0.20\textwidth}
        \includegraphics[width=\linewidth]{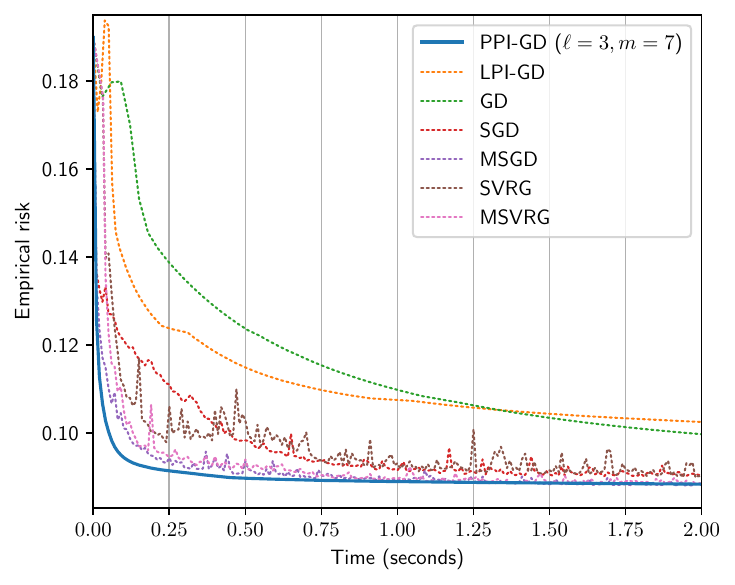}\vspace{-7pt}
        \caption{$h=4,\enspace\nu=1$}
    \end{subfigure}
    \begin{subfigure}[t]{0.20\textwidth}
        \includegraphics[width=\linewidth]{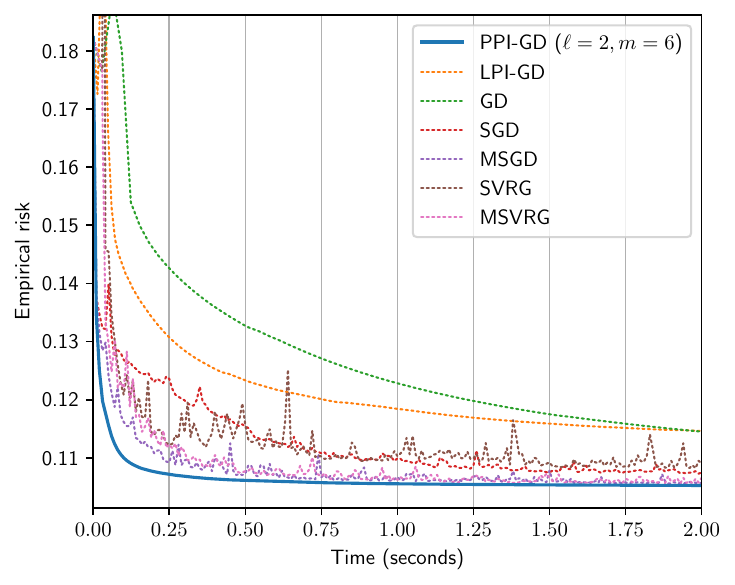}\vspace{-7pt}
        \caption{$h=4,\enspace\nu=1.5$}
        \label{fig:ppi-gd-unique}
    \end{subfigure}
    
    \begin{subfigure}[t]{0.20\textwidth}
        \includegraphics[width=\linewidth]{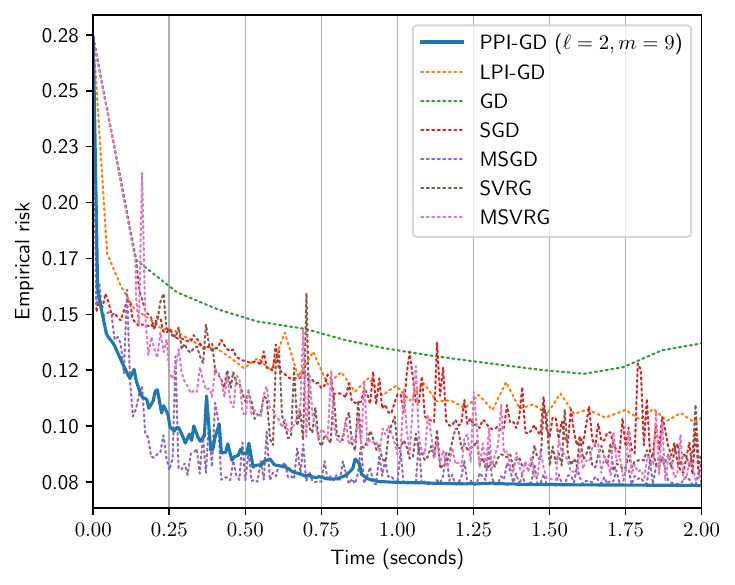}\vspace{-7pt}
        \caption{$h=16,\enspace\nu=0.5$}
    \end{subfigure}
    \begin{subfigure}[t]{0.20\textwidth}
        \includegraphics[width=\linewidth]{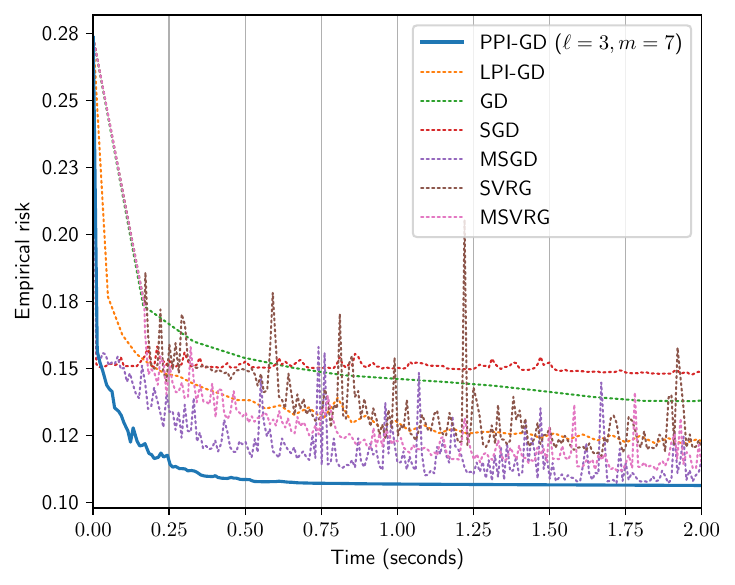}\vspace{-7pt}
        \caption{$h=16,\enspace\nu=1$}
        \label{fig:tradeoff-fig-setting}
    \end{subfigure}
    \begin{subfigure}[t]{0.20\textwidth}
        \includegraphics[width=\linewidth]{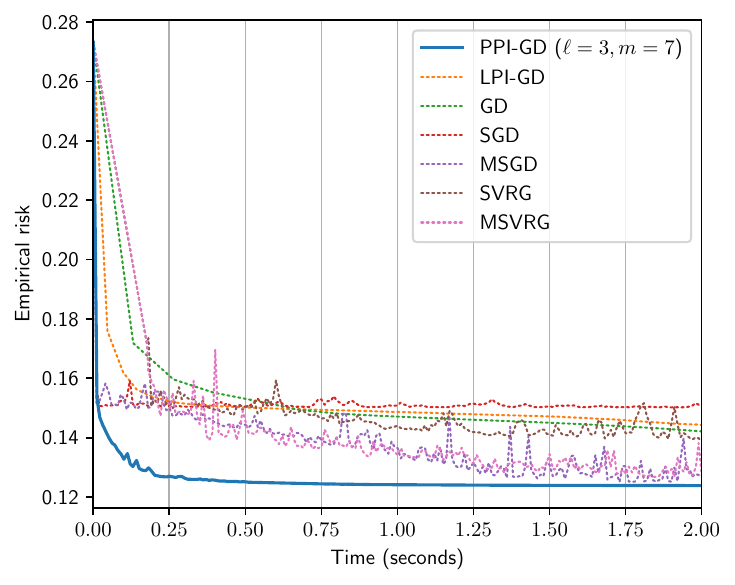}\vspace{-7pt}
        \caption{$h=16,\enspace\nu=1.5$}
        \label{fig:last-left-subfig}
    \end{subfigure}
    
    \begin{subfigure}[t]{0.20\textwidth}
        \includegraphics[width=\linewidth]{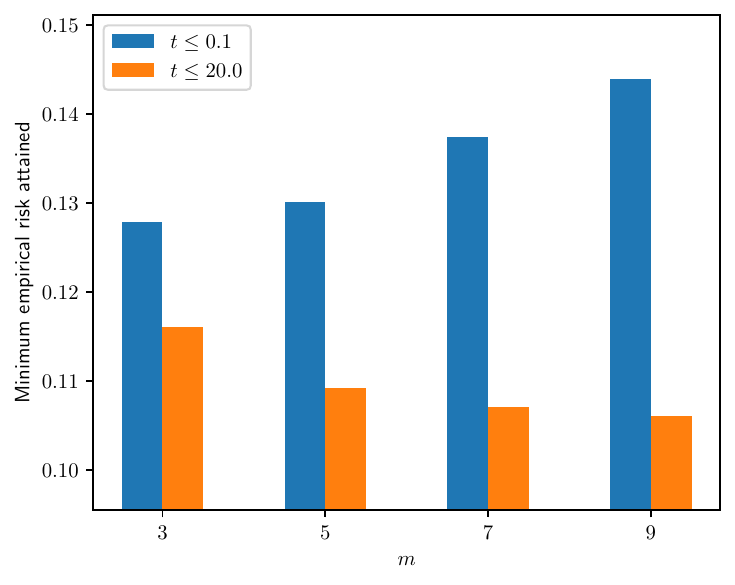}\vspace{-7pt}
        \caption{Speed/bias tradeoff}
        \label{fig:tradeoff-fig}
    \end{subfigure}
    \begin{subfigure}[t]{0.20\textwidth}
        \includegraphics[width=\linewidth]{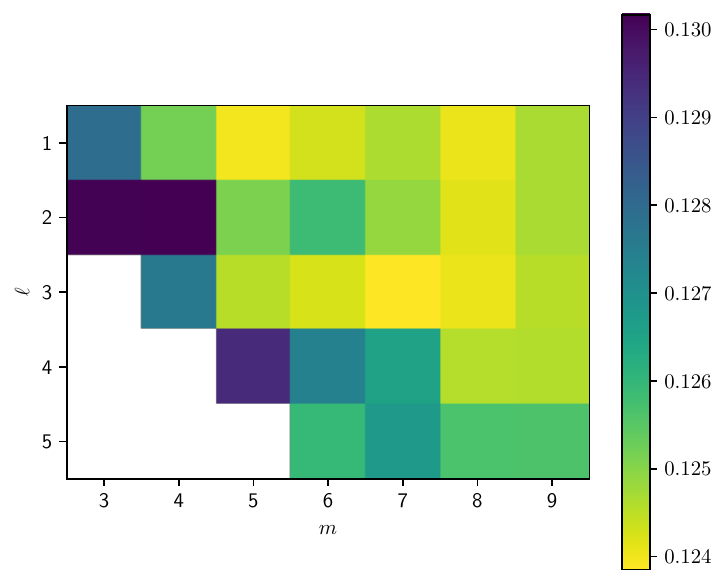}\vspace{-7pt}
        \caption{Hyperparameter search}
        \label{fig:heatmap}
    \end{subfigure}
    \begin{subfigure}[t]{0.20\textwidth}
        \includegraphics[width=\linewidth]{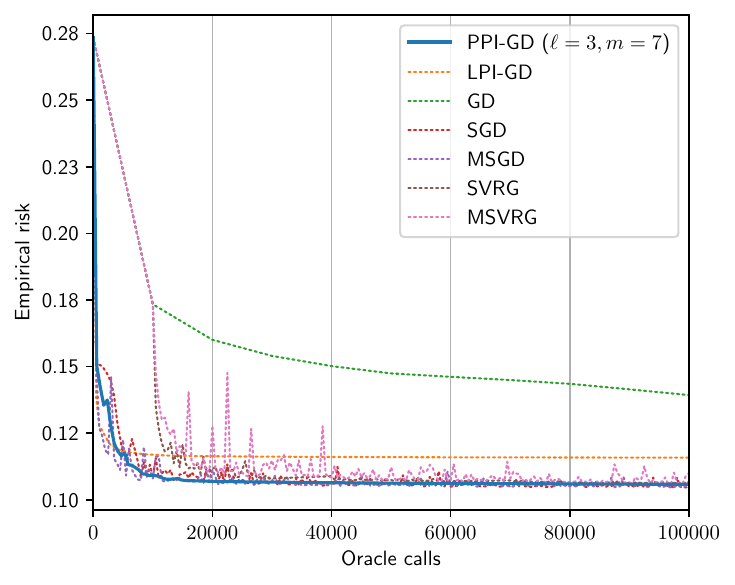}\vspace{-7pt}
        \caption{Oracle complexity}
    \end{subfigure}
    }
    \unskip\ \vrule\ 
    \parbox{0.23\textwidth}{
    \begin{subfigure}[t]{0.20\textwidth}
        \includegraphics[width=\linewidth]{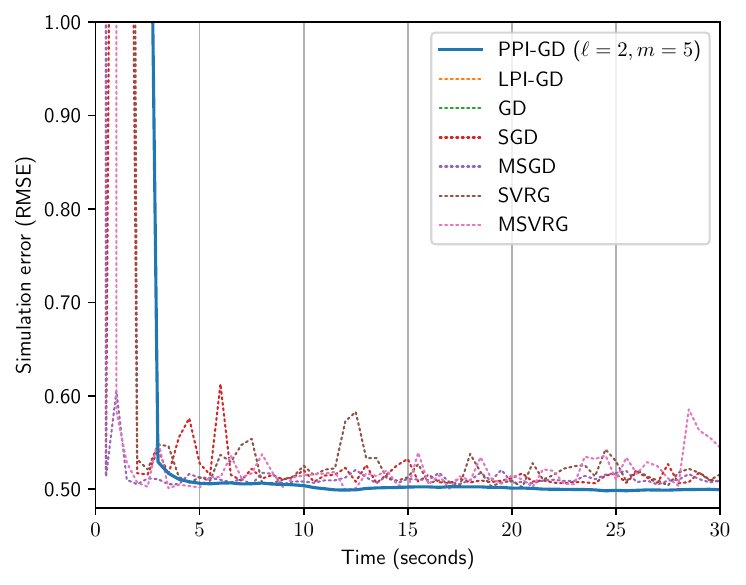}\vspace{-7pt}
        \caption{System identification}
        \label{fig:system-identification}
    \end{subfigure}
    
    \begin{subfigure}[t]{0.20\textwidth}
        \includegraphics[width=\linewidth]{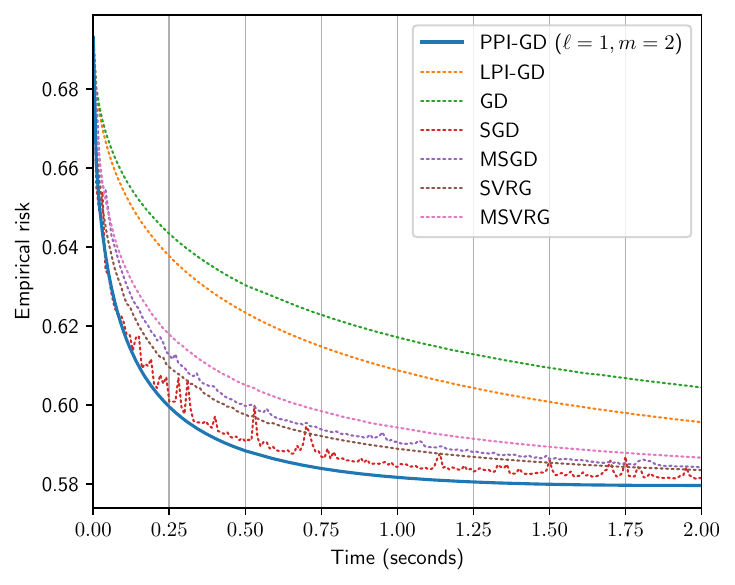}\vspace{-7pt}
        \caption{Logistic regr. ($d=6$)}
        \label{fig:logistic-regression}
    \end{subfigure}
    
    \begin{subfigure}[t]{0.20\textwidth}
        \includegraphics[width=\linewidth]{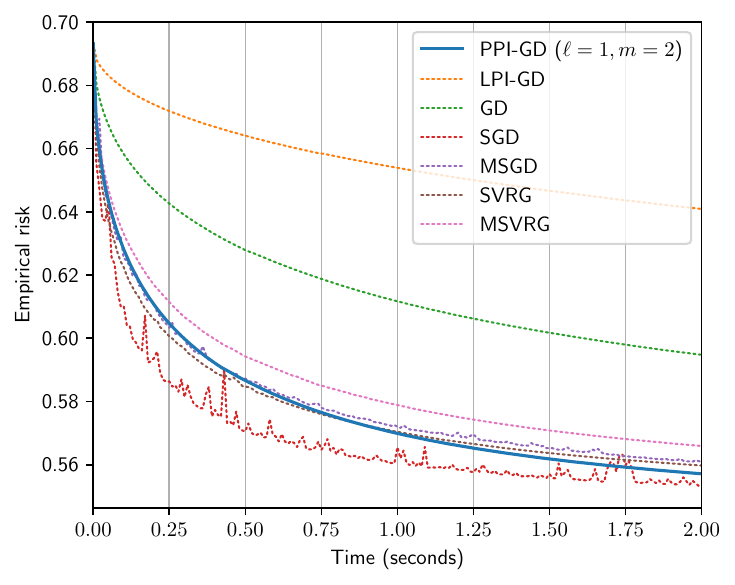}\vspace{-7pt}
        \caption{Logistic regr. ($d=9$)}
        \label{fig:logistic-regression-2}
    \end{subfigure}
    }
    \caption{\textbf{Left}: Visualizations for \cref{subsection:training-nn}. \textbf{(a)--(f)}: Performance comparison of PPI-GD, LPI-GD, GD, SGD, MSGD, SVRG, and MSVRG when used to train a neural network. $h \in \brc{4, 16}$ is the depth of the neural network and $\nu \in \brc{0.5, 1, 1.5}$ is the noise in data. \textbf{(g)}: Comparison between the minimum empirical risk attained in the first $\brc{0.1, 20}$ seconds when PPI-GD with $\ell=1$ and $m\in\brc{3,5,7,9}$ is used to train a neural network, showing a tradeoff between initial rate of convergence and long-term bias. \textbf{(h):} Visualization of the hyperparameter optimization process using grid search for $\ell\in\{1,2,3,4,5\},m\in\{3,4,5,6,7,8,9\}$ (data from experiment (f)). \textbf{(i)}: Learning curve in terms of oracle calls ($h=16, \nu=1.5$). \\
    \textbf{Right}: Visualizations for \cref{subsection:system-id,subsection:logreg}. \textbf{(j)}: Comparison of the simulation performance of a nonlinear neural network ARX system identification model trained with various optimization methods (as a function of training time). \textbf{(k)--(l)}: Learning curves of various gradient descent methods performing logistic regression on datasets with varying dimensionality ($d\in\{6,9\}$).}
    \label{fig:plot-nn} 
\end{figure*}
Experiments were conducted to evaluate the empirical performance of PPI-GD compared to other GD methods in various settings and identify contexts where it performs well. The models and optimization algorithms were implemented with the NumPy Python library. All experiments were run with on a machine with an Intel Core i7-8700K CPU and 64 GB of memory.

\subsection{Training Neural Networks}\label{subsection:training-nn}

First, we compared the effectiveness of PPI-GD and various other optimization algorithms for training neural networks and investigated how performance is affected by model size and noise in data. This experiment also illustrates how grid search methods can be used to determine optimal values for hyperparameters such as $\ell$ and $m$ in practice.
Three synthetic classification datasets were generated for the experiments. Each dataset is parameterized by the variable $\nu\in\brc{0.5, 1, 1.5}$, which captures the ``noisiness'' of the dataset. Each dataset consists of $n=10000$ randomly generated samples in $[0, 1]^2 \times \brc{0, 1}$. Each data point is of the form $(\vec{x}, y)=(x_1, x_2, \iIv{\sin(5x_1)+\sin(5x_2)+\xi \geq \tilde{y}})$, where $x_1, x_2 \sim \operatorname{Unif}(0, 1)$, $\xi \sim \mathcal{N}(0, \nu)$, and $\tilde{y}$ is chosen such that the classes $y=0$ and $y=1$ are balanced, i.e., $\tilde{y}$ is set to be the median of $\{\sin(5x_1)+\sin(5x_2)+\xi\}$.

Fully connected leaky ReLU (rectified linear unit) neural networks were used to classify the datasets. Each neural network has $h \in \brc{4, 16}$ hidden layers with $16$ nodes each. The number of trainable parameters in each model is $p = 272h-207 \in \brc{881, 4145}$.

In each round of simulations, the performance (in terms of wall-clock time) of GD, SGD, MSGD (minibatched SGD), SVRG \cite{JohnsonZhang2013}, MSVRG (minibatched SVRG), LPI-GD \cite{JadbabaieMakur2024}, and PPI-GD was measured. Each algorithm was allowed to run for $2$ seconds, and the training MSE loss was recorded every $0.01$ seconds. For consistency, the same learning rate schedule (inverse square root with an initial value of $0.2$) was used for every optimizer. A batch size of $5$ was used for MSGD and MSVRG. The update frequency of SVRG and MSVRG was set to $n=10000$. The hyperparameters $\ell$ and $m$ for PPI-GD  were chosen through a systematic grid search for all $(\ell, m)\in\{\ell\in\{1,2,3,4,5\},m\in\{3,4,5,6,7,8,9\}: \ell < m\}$ using empirical risk as the metric. The selected hyperparameter values are shown in the legend of each subfigure in \cref{fig:plot-nn}. For LPI-GD, $\ell=1$, $m=5$ were chosen. Note that precomputation steps such as line 6 of \cref{alg:ppigd} (which can be performed in advance independently of the model) were excluded from the wall-clock time measurements. This is necessary to accommodate for LPI-GD, for which precomputation can take several seconds due to matrix inverse operations (on the other hand, the precomputation time of PPI-GD is in the order of milliseconds and mostly negligible).

\Cref{fig:plot-nn}(a)--(i) shows the learning curves for each combination of dataset and model, and the first six columns of \cref{tab:experiments} present the lowest loss attained. 
As expected, the optimizers converge more quickly when the model is simpler (i.e., when $h$ is smaller) or when the data is less noisy (i.e., when $\nu$ is smaller). Notably, the performance of stochastic methods such as SGD degrades quickly in the face of noise. This is because stochastic optimization performs well on datasets that can be easily generalized from a small subset, and noise makes this much more difficult.

Although PPI-GD tends to improve very quickly at first, it soon converges to a suboptimal point. This is because PPI-GD, unlike some other inexact gradient methods like SGD, is biased. As a result, unlike SGD, PPI-GD does not converge to the exact solution even when $T \to \infty$. When the grid size is small, each iteration is very computationally efficient, but the approximate gradient $\widehat{\vec{\nabla}} F(\vec{\theta}^{(t)})$ is less accurate. As $m$ increases, the initial rate of convergence becomes slower, but the algorithm ultimately converges to a more accurate solution.
This tradeoff is explored in \cref{fig:plot-nn}(g), which compares the minimum empirical risk achieved in the first $\brc{0.1,20}$ seconds when PPI-GD with $\ell=1$ and $m\in\brc{3,5,7,9}$ is used in the setting of \cref{fig:plot-nn}(e) (i.e., when training a $16$-layer neural network). To minimize the effect of randomness, $10$ datasets were generated with different random seeds and the results were averaged. The plot shows that using smaller values of $m$ can help attain lower empirical risk in the first $0.1$ seconds (i.e., can converge faster initially), but larger values of $m$ achieve lower empirical risk in the long term (due to having less bias).

\cref{fig:plot-nn}(h), a heat map that visualizes the hyperparameter grid search results for round (f) of the experiment, also provides some interesting insights. Contrary to intuition, values of $m$ that divide $\ell$ evenly (allowing the data space to be cleanly divided into disjoint hypercubes of equal size) are not necessarily optimal. Furthermore, there seem to be multiple local optima, which suggests that gradient-based hyperparameter optimization methods might not be a good choice.

Finally, \cref{fig:plot-nn}(i) shows learning curves in terms of oracle calls (instead of wall-clock time) when $h=16$ and $\nu=1.5$. Although we only include one plot of six due to space constraints, we note that the empirical oracle complexity results corresponding to the other rounds (a)--(e) are mostly similar, with GD and LPI-GD lagging behind PPI-GD and the stochastic methods (SGD, MSGD, etc.), which perform similarly. It might be surprising that the difference in efficiency between PPI-GD and the other methods in terms of oracle calls is not as pronounced as the difference in terms of wall-clock time. This may be due to PPI-GD enjoying better data locality compared to small-batch stochastic methods, but this is beyond the scope of this work.

\begin{table}
    \centering
    \caption{Lowest empirical risk attained in $2$ seconds with each optimizer for each round of the neural network training experiment and the logistic regression experiment, rounded to four decimal places. The best result in each column is highlighted in \textbf{bold}.}
    \label{tab:experiments}
    
    \setlength{\tabcolsep}{1pt} \begin{tabularx}{\columnwidth}{lYYYYYYYY}
    \toprule
    \multirow{2}{*}{Method}& \multicolumn{3}{c}{$4$-layer NN} & \multicolumn{3}{c}{$16$-layer NN} & \multicolumn{2}{c}{Logistic regr.} \\\cmidrule(lr){2-4} \cmidrule(lr){5-7} \cmidrule(lr){8-9}
        & (a) & (b) & (c) & (d) & (e) & (f) & (k) & (l) \\
    \midrule
        PPI-GD & 0.0589 & 0.0884 & \textbf{0.1053} & \textbf{0.0735} & \textbf{0.1063} & \textbf{0.1239} & \textbf{0.5796} & 0.5571 \\
        \midrule
        LPI-GD    & 0.0803 & 0.1025 & 0.1146 & 0.1017 & 0.1216 & 0.1441 & 0.5956 & 0.6409 \\
        GD       & 0.0791 & 0.0997 & 0.1145 & 0.1195 & 0.1378 & 0.1421 & 0.6044 & 0.5948 \\
        SGD      & 0.0607 & 0.0902 & 0.1073 & 0.0783 & 0.1478 & 0.1502 & 0.5813 & \textbf{0.5533} \\
        MSGD     & 0.0598 & \textbf{0.0881} & 0.1056 & 0.0737 & 0.1072 & 0.1251 & 0.5842 & 0.5608 \\
        SVRG     & 0.0617 & 0.0898 & 0.1082 & 0.0799 & 0.1173 & 0.1386 & 0.5835 & 0.5597 \\
        MSVRG    & \textbf{0.0578} & 0.0884 & 0.1056 & 0.0756 & 0.1119 & 0.1267 & 0.5866 & 0.5659 \\
    \bottomrule
    \end{tabularx}
\end{table}

\subsection{System Identification}\label{subsection:system-id}
In addition, we demonstrate an example of a \emph{system identification} task where PPI-GD performs well. The Vibration Test dataset \cite{NoelSchoukens2020} was chosen as the system to be identified, as its nonlinearity and higher order features make it a real-world system that is challenging to identify with conventional techniques. The model is a nonlinear neural network ARX (AutoRegressive with eXogenous input) model with $[u(t), y(t-1), y(t-2)]$ as regressors where $u(t)$ and $y(t)$ are the input and output variables respectively (see \cite{PillonettoAravkinGedon2025} for an overview on the use of neural network methods for system identification). The training dataset consists of $n=73726$ pairs of inputs and outputs. The neural network uses a leaky ReLU activation function and has $8$ fully connected hidden layers with $16$ nodes each. Each optimization algorithm was allowed to run for up to $30$ seconds. While each system identification model was being trained, partially trained models were periodically sampled and used to generate simulated responses given validation input data truncated to $500$ samples. The simulations were compared to the validation output and the RMSE (root mean square error) was recorded, which is visualized in \cref{fig:system-identification}. Note that GD and LPI-GD did not converge to an RMSE of less than $1$ within the time limit, so their results are not visible in the plot. PPI-GD fluctuates less than stochastic methods while providing similar performance.

\subsection{Logistic Regression}\label{subsection:logreg}
Lastly, to find out how the data dimension $d$ affects the performance of PPI-GD compared to other algorithms, we evaluated the empirical performance of each algorithm when used to perform logistic regression on $d$-dimensional data. For each $d\in\{6,9\}$, a dataset consisting of $5000$ randomly generated points in $[0, 1]^{d-1} \times \brc{0, 1}$ was used, where each data point is generated as $(\vec{x}, y)=(x_1,\dots,x_{d-1},\iIv{\sum_{i=1}^{d-1} x_i + \xi \geq \tilde{y}})$ with $x_1,\dots,x_{d-1} \sim \operatorname{Unif}(0, 1)$, $\xi \sim \mathcal{N}(0, 1)$ and $\tilde{y}$ chosen such that there are roughly equal numbers of data points with $y=0$ and $y=1$. Hyperparameters $\ell=1$ and $m=2$ were chosen for PPI-GD (smaller values of $\ell$ and $m$ work better in this setting due to the smoothness of logistic regression compared to a neural network, and also because finer grids scale poorly with $d$). The learning curves for value of $d$ are shown in \cref{fig:plot-nn}(k)--(l), and the lowest empirical risk attained is presented in the last two columns of \cref{tab:experiments}. As expected, PPI-GD loses effectiveness when $d$ increases, with larger values of $m$ being more susceptible to the curse of dimensionality. Still, the high smoothness of the logistic regression loss function allows PPI-GD with a coarser grid to be competitive for moderate values of $d$ such as $d=6$.

\section{Conclusion} \label{section:conclusion}

In this paper, we introduced the PPI-GD algorithm for optimizing ERM objectives which satisfy H\"{o}lder smoothness assumptions in the training data, by using multivariate polynomial interpolation to approximate the true gradient oracle at each iteration. When the data space dimension satisfies $d = O(\log^{0.49}(n))$, we established that the oracle complexity of PPI-GD scales as $\tilde{O}((p/\varepsilon)^{d/(2\ell)})$ in the strongly convex setting and $\tilde{O}((p/\varepsilon^2)^{1+d/(2\ell)})$ in the non-convex setting, showing that our algorithm has lower complexity than GD, SGD, and variants for sufficiently smooth loss functions in the common regime where $p = O(\mathsf{poly}(n))$ and $\varepsilon = \Theta(\mathsf{poly}(1/n))$. PPI-GD improves upon former inexact gradient methods for smooth objectives by relaxing necessary conditions on the scaling of $d$, showing that an astute choice of gradient estimation method contributes substantially towards obtaining algorithms with theoretical gains in oracle complexity in a broad range of domains.

Future research in this vein may incorporate classic techniques such as momentum, Nesterov acceleration, and mini-batching, since the primary innovation of PPI-GD in leveraging smoothness in training data is orthogonal to these techniques in principle. Moreover, our present work builds upon the classical literature on bicubic spline interpolation to derive generalized error bounds on polynomial interpolants, and a fruitful future direction may involve analyzing natural spline interpolants in the same generalized manner. Overall, our main contributions suggest that interesting complexity results remain to be discovered at the intersection of inexact gradient methods and interpolation analysis.

\balance
\bibliographystyle{IEEEtran}   %
\bibliography{arxiv}

\end{document}